\def\comments{0} %to disable comments set this value to 0
\newif\ifshort % define a new if statement, short=icml submission
\newif\iflong % (a statment with the opposite value of short)
     \newcommand{\ha}[1]{\textcolor{blue}{\textbf{HA:} {#1}}}
    \newcommand{\lz}[1]{\textcolor{magenta}{\textbf{LZ:} {#1}}}
    \newcommand{\ju}[1]{\textcolor{red}{\textbf{JU:} {#1}}}
    \newcommand{\ha}[1]{}
    \newcommand{\lz}[1]{}
    \newcommand{\ju}[1]{}
    \theoremstyle{plain}
    \newtheorem{theorem}{Theorem}[section]
    \newtheorem{proposition}[theorem]{Proposition}
    \newtheorem{corollary}[theorem]{Corollary}
    \theoremstyle{definition}
    \newtheorem{definition}[theorem]{Definition}
    \theoremstyle{remark}
    \newtheorem{remark}[theorem]{Remark}
    \icmltitlerunning{From Robustness to Privacy and Back}
    \newcommand*{\citet}[1]{\AtNextCite{\AtEachCitekey{\defcounter{maxnames}{2}}} \textcite{#1}}
    \newcommand*{\citep}[1]{\cite{#1}}
	\let\Cref\crtCref
	\let\cref\crtcref
    \title{From Robustness to Privacy and Back}
    \date{\today}
    \author{Hilal Asi\thanks{Apple. \url{hilal.asi94@gmail.com}} \and Jonathan Ullman\thanks{Khoury College of Computer Science, Northeastern University. Supported by NSF awards CCF-1750640, CNS-1816028, and CNS-2120603.  \url{jullman@ccs.neu.edu} } \and Lydia Zakynthinou\thanks{Khoury College of Computer Science, Northeastern University. Supported by NSF awards CCF-1750640, CNS-1816028, and CNS-2120603, and a Facebook Fellowship. \url{zakynthinou.l@northeastern.edu}.}}
\begin{document}
\ifshort
    \twocolumn[
    \icmltitle{From Robustness to Privacy and Back}
    %\icmlsetsymbol{equal}{*}
    \begin{icmlauthorlist}
    \icmlauthor{Hilal Asi}{Apple}
    \icmlauthor{Jonathan Ullman}{NEU}
    \icmlauthor{Lydia Zakynthinou}{NEU} 
    \end{icmlauthorlist}
    \icmlaffiliation{NEU}{Khoury College of Computer Science, Northeastern University, Boston, USA}
    \icmlaffiliation{Apple}{Apple, California, USA}
    \icmlcorrespondingauthor{Hilal Asi}{hilal.asi94@gmail.com}
    \icmlcorrespondingauthor{Lydia Zakynthinou}{zakynthinou.l@northeastern.edu}
    \icmlkeywords{}
    \vskip 0.3in
    ]
    % this must go after the closing bracket ] following \twocolumn[ ...
    % This command actually creates the footnote in the first column
    % listing the affiliations and the copyright notice.
    % The command takes one argument, which is text to display at the start of the footnote.
    % The \icmlEqualContribution command is standard text for equal contribution.
    % Remove it (just {}) if you do not need this facility.
    %\printAffiliationsAndNotice{}  % leave blank if no need to mention equal contribution
    %\printAffiliationsAndNotice{\icmlEqualContribution} % otherwise use the standard text.
\else
     \maketitle
\fi

\begin{abstract}
We study the relationship between two desiderata of algorithms in statistical inference and machine learning---differential privacy and robustness to adversarial data corruptions.  Their conceptual similarity was first observed by Dwork and Lei (STOC 2009), who observed that private algorithms satisfy robustness, and gave a general method for converting robust algorithms to private ones.  However, all general methods for transforming robust algorithms into private ones lead to suboptimal error rates.  Our work gives the first black-box transformation that converts any adversarially robust algorithm into one that satisfies pure differential privacy.  Moreover, we show that for any low-dimensional estimation task, applying our transformation to an optimal robust estimator results in an optimal private estimator.  Thus, we conclude that for any low-dimensional task, the optimal error rate for $\diffp$-differentially private estimators is essentially the same as the optimal error rate for estimators that are robust to adversarially corrupting $1/\diffp$ training samples.  We apply our transformation to obtain new optimal private estimators for several high-dimensional tasks, including Gaussian (sparse) linear regression and PCA.  Finally, we present an extension of our transformation that leads to approximate differentially private algorithms whose error does not depend on the range of the output space, which is impossible under pure differential privacy.
\end{abstract}

\section{Introduction}\label{sec:intro}
Both \emph{differential privacy} and \emph{robustness} are desirable properties for machine learning or statistical algorithms, and there are extensive, mostly separate bodies of research on each of these properties.

\emph{Differential privacy (DP)} was proposed by Dwork, McSherry, Nissim, and Smith~\citep{DworkMNS06} as a rigorous formalization of what it means for an algorithm to guarantee individual privacy, and has been widely deployed in both industry~\citep{ErlingssonPK14,Bittau+17,Apple17,TestuggineM20,WilsonZLDSG20, Rogers+20} and government~\citep{Haney+17, Abowd18, AbowdACKLORZ22} applications. 
Informally, a DP algorithm ensures that no adversary who observes the algorithm's output can learn much more about an individual in the dataset than they could if that individual's data had been excluded.  Formally, a randomized algorithm $\A$ satisfies $\eps$-DP if for every dataset $\Ds$, and every dataset $\Dsc$ that differs on one, or a small number of entries, the distributions $\A(\Ds)$ and $\A(\Dsc)$ are $\eps$-close in a precise sense (see~\Cref{def:dp}), where the privacy guarantee becomes stronger as $\eps$ becomes smaller.

% it asks that any adversary observing the output of the algorithm, regardless of their resources or other knowledge, cannot learn significantly more information about an individual participating in the dataset than they would had that individual not participated in the dataset. 
% Specifically, for any two fixed \emph{neighboring} datasets, $\Ds, \Dsc$ of the same size $n$, differing in a single entry, the distributions over the output of the algorithm $\A(\Ds)$ and $\A(\Dsc)$ should be point-wise close, as measured by a privacy parameter $\diffp$, that is $\forall w~\Pr_{\A}[\A(\Ds)=w]\leq e^{\diffp}\Pr_{\A}[\A(\Dsc)=w]$ (see more formal~\Cref{def:dp}). 
% A long body of work has determined the statistical and computational cost of differentially private algorithms for several fundamental statistical estimation tasks (see~\citep{DworkR14, Vadhan16} for an introduction to DP and~\citep{KamathU20primer} for a comprehensive survey of DP statistics up to 2020). 

\emph{Robustness}, which was first systematically studied by Tukey and Huber in the 1960s~\citep{Tukey60, Huber65}, formalizes algorithms that perform well under data corruptions or model misspecifications.  Specifically, we consider a dataset $\Ds$ that is drawn iid from some well behaved distribution $P$, and allow an adversary to produce a dataset $\Dsc$ that differs from $\Ds$ in a $\tau$ fraction of its entries.  An algorithm $\A$ is $\tau$-robust if the distance $\norm{\A(\Ds) - \A(\Dsc)}$ is typically small in some particular error norm, where the robustness guarantee becomes stronger as $\tau$ becomes larger.

Although these two conditions have entirely different motivations, they are both notions of what it means for an algorithm to be insensitive to small changes in its input, which was first observed in the influential work of~\citet{DworkL09}.  But even once we recognize their similarity, there are substantial technical differences.  While differentially private algorithms are robust, DP is a more stringent requirement in a few ways: First, DP is \emph{worst case}, meaning the algorithm $\A$ must be insensitive in a neighborhood around \emph{every} dataset $\Ds$, whereas a robust algorithm only needs to be insensitive in the \emph{average case} around datasets $\Ds$ drawn from well behaved distributions $P$.  Second, DP requires that $\A(\Ds)$ an $\A(\Dsc)$ be close \emph{as probability distributions} in a strong sense, whereas robustness only requires the \emph{distance between outputs} $\A(\Ds)$ and $\A(\Dsc)$ to be small.  On the other hand, since DP is harder to satisfy, \iflong differentially private \else DP \fi algorithms are typically insensitive to changes in a \emph{small number} of inputs, whereas robust algorithms can often be insensitive to changes in a \emph{small constant fraction} of inputs.

Although, differential privacy is stronger than robustness, Dwork and Lei~\citep{DworkL09} designed a method, called \emph{propose-test-release (PTR)}, which can be used to turn any accurate robust algorithm for a statistical estimation task into an accurate differentially private algorithm for the same task.  However, this generic transformation typically does not lead to optimal algorithms for most specific tasks of interest.  Nonetheless, there has been a recent flurry of works in differentially private statistics that use robust estimators as \emph{inspiration} for differentially private estimators (see Related Work for a summary of this line of work). These works use a variety of methods for upgrading robust estimators to differentially private ones, and each of these methods is tailored to a specific task or set of tasks.

In this paper we demonstrate that there is, in fact, a \emph{black-box} way to transform robust estimators into private estimators that provably gives optimal error rates for low-dimensional tasks, and often leads to optimal error rates for many high-dimensional tasks.

% The first connection between the two notions was observed by \citet{DworkL09}, with the \emph{propose-test-release} (PTR) paradigm, where they demonstrated via several examples that a robust estimator is a good starting point for designing a differentially private estimator. 
% However, this approach often gives us suboptimal error rates for high dimensional tasks. 
% More recently, there has been a stream of works in high-dimensional differentially private statistics, whose solutions are inspired by robust statistics~\cite{KamathLSU19,KamathSU20, LiuKK021, BrownGSUZ21, GhaziKMN21, LiuKO22, TsfadiaCKMS22, HopkinsKM22, AshtianiL22, KothariMV22, AlabiKTVZ22, GeorgievH22}. It has become common perception, that a good approach to solve a differentially private estimation task is to choose a robust statistic with optimal error for this task and attempt to ``privatize'' it. However, this strategy is not as straightforward as it seems. Almost all these works use different methods and develop new techniques to obtain private estimators, tailored to the task at hand.  

%Our goal is to understand this connection further and explore more systematic ways in which we can use robust algorithms to construct differentially private ones with optimal error rates.

\subsection{Our Contributions}

In this section we give a high-level overview of our main contributions---black-box transformation from robust to private estimators, optimality results for our transformations for low-dimensional estimation tasks, and applications of our transformations to high-dimensional estimation tasks.

\mypar{From robustness to privacy via inverse-sensitivity.}
Our first main contribution is a black-box transformation that takes a robust algorithm for any statistical task and converts it into an $\eps$-differentially private algorithm for the same task with comparable accuracy.  Intuitively, since robust estimators are insensitive to changing $n\tau$ inputs on a dataset of size $n$, and private estimators are insensitive to changing a $1/\eps$ total inputs, the accuracy of the private estimator will be related to the accuracy of the robust estimator when a $\tau \approx 1/n\eps$ fraction of inputs can be corrupted.

Our transformation applies in the standard setting of statistical estimation: assume that there exists a distribution $P$ over domain $\domain\subseteq \R^d$ and $\Ds=(S_1,\ldots, S_n)$ is a dataset consisting of $n$ examples drawn independently from $P$, that is, $\forall i \in [n]$, $\smash{S_i \simiid P}$. We wish to estimate a parameter $\mu(P)$ (e.g. the mean $\mu = \E_{s \sim P}[s]$) of distribution $P$. The error of an algorithm $\A$ for this task is measured with respect to a norm $\norm{\cdot}$, i.e., $\norm{\A(S)-\mu}$. 

%\ju{It's a little odd that these two definitions don't say anything about the distribution $P$.  Should we just say ``accurate for $P$''?}
We use the following short-hand to denote the accuracy guarantees of $\tau$-robust and $\diffp$-private estimators of a statistic $\mu$ for a distribution $P$. 
\begin{definition}[$(\tau,\beta, \alpha)$-robust estimator]\label{def:robustness}
    Let $\A$ be a (possibly randomized) algorithm for the estimation of statistic $\mu$. We say that $\A$ is a $(\tau,\beta,\alpha)$-robust estimator for distribution $P$, %with failure probability $\beta$ and accuracy $\alpha$ and 
    if with probability $1-\beta$ over dataset $\Ds\simiid P^n$ (and possibly the randomness of the algorithm), for all $\Dsc$ that differ in at most $n\tau$ points from $\Ds$, %$\dham(\Ds,\Dsc) \le \tau n$) 
    we have that 
    \begin{equation*}
        \norm{\A(\Dsc) - \mu(P)} \le \alpha.
    \end{equation*}
\end{definition}

\begin{definition}[$(\diffp,\beta,\alpha)$-private estimator]
    Let $\A$ be a (possibly randomized) algorithm for the estimation of statistic $\mu$. We say that $\A$ is an $(\diffp,\beta, \alpha)$-private estimator for distribution $P$, if $\A$ is $\diffp$-DP (\Cref{def:dp}) and with probability $1-\beta$ over $\Ds\simiid P^n$ (and possibly the randomness of the algorithm), we have that
    \begin{equation*}
        \norm{\A(\Ds) - \mu(P)} \le \alpha.
    \end{equation*}
\end{definition}
We may refer to such algorithms as $(\tau,\beta,\alpha)$-robust and $(\diffp,\beta,\alpha)$-private for brevity. 
%\ju{We need to clarify that the privacy definition is still worst case and only accuracy is relative to a specific distribution.  Also, there is a type mismatch because you call it a definition of ``privacy'' but actually the thing you are defining is accuracy and just referring elsewhere for the definition of privacy.  What if we at least do something like define what it means to be an ``$(\eps,\beta,\alpha)$-private estimator'' and then clarify that we will just say ``private'' for brevity or something?}\lz{I think it's clear now that the dataset $\Ds$ in the definition shows up after we say $\A$ is $\diffp$-DP.}

Our main theorem shows that any robust algorithm can be transformed into an $\diffp$-DP algorithm with the same accuracy guarantees up to constants, as long as the fraction of corruptions $\tau\approx \frac{d\log(R)}{n\diffp}$, where $R$ is the diameter of the range of the robust algorithm. 
\begin{theorem}[Informal, \Cref{thm:high-dim-rob-to-priv}]\label{thm:high-dim-rob-to-priv-informal}
Let $n\geq 1, \eps\in(0,1)$. Let $P$ be a distribution over $\domain\subseteq \R^d$. 
Let $\A_\mathrm{rob}: \domain^n \to \{t\in\R^d: \norm{t}\leq R\}$ be any $(\tau,\beta,\alpha)$-robust algorithm for the statistic $\mu(P)$, where $$\tau \gtrsim \frac{d\log(R)+\log(1/\beta)}{n\diffp}.$$ 
Then there exists an $(\diffp,O(\beta),O(\alpha))$-private algorithm $\A_{\mathrm{priv}}$ for the statistic $\mu(P)$.  The notation $\gtrsim$ above hides constants and logarithmic factors in $1/\alpha$.
\end{theorem}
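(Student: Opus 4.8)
The plan is to instantiate the \emph{inverse-sensitivity mechanism} over a fine discretization of the output space, invoking $\A_{\mathrm{rob}}$ only through its input--output behaviour so that the reduction is genuinely black-box. Fix an $\alpha$-net $\mathcal{N}$ of the ball $\{t \in \R^d : \norm{t} \le R\}$ in the norm $\norm{\cdot}$; a standard volume bound gives $|\mathcal{N}| \le (3R/\alpha)^d$, so $\log|\mathcal{N}| = O(d\log(R/\alpha))$. For $t \in \mathcal{N}$ and a dataset $\Ds' \in \domain^n$, I will call $\Ds'$ \emph{$t$-good} if $\Pr[\norm{\A_{\mathrm{rob}}(\Ds') - t} \le 2\alpha] \ge 2/3$, where the probability is over the internal coins of $\A_{\mathrm{rob}}$ (so $t$-goodness is a deterministic property of $\Ds'$). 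Define the score
\begin{equation*}
  \mathrm{len}(\Ds, t) = \min\bigl\{ k \ge 0 : \text{some } \Ds' \text{ at Hamming distance} \le k \text{ from } \Ds \text{ is } t\text{-good} \bigr\}
\end{equation*}
(with $\mathrm{len}(\Ds,t) = \infty$ if no $t$-good dataset exists), and let $\A_{\mathrm{priv}}(\Ds)$ be the exponential mechanism that returns $t \in \mathcal{N}$ with probability proportional to $\exp(-\tfrac{\diffp}{2}\mathrm{len}(\Ds,t))$.

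For privacy: for each fixed $t$, a single-entry change to $\Ds$ changes $\mathrm{len}(\Ds,t)$ by at most $1$ (by the triangle inequality for Hamming distance), so $\mathrm{len}(\cdot,t)$ has sensitivity $1$ and the standard analysis of the exponential mechanism makes $\A_{\mathrm{priv}}$ exactly $\diffp$-DP. Note this is automatic for \emph{every} input $\Ds$, so the worst-case requirement of DP comes for free; the distributional hypothesis enters only through accuracy.

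For accuracy: by \Cref{def:robustness} together with a Fubini/Markov argument, with probability $\ge 1 - 3\beta$ over $\Ds \simiid P^n$ the following event $E$ holds: for every $\Dsc$ at Hamming distance $\le n\tau$ from $\Ds$, $\Pr[\norm{\A_{\mathrm{rob}}(\Dsc) - \mu(P)} \le \alpha] \ge 2/3$. I will condition on $E$. Taking $\Dsc = \Ds$ shows that the net point $t_0$ nearest to $\mu(P)$ is $t_0$-good, so $\mathrm{len}(\Ds, t_0) = 0$, while $\norm{t_0 - \mu(P)} = O(\alpha)$. Conversely, for any $t \in \mathcal{N}$ with $\norm{t - \mu(P)}$ larger than a suitable constant times $\alpha$ and any $\Dsc$ at Hamming distance $\le n\tau$ from $\Ds$, the triangle inequality forces $\norm{\A_{\mathrm{rob}}(\Dsc) - t} > 2\alpha$ with probability $\ge 2/3$, so $\Dsc$ is not $t$-good; hence $\mathrm{len}(\Ds, t) > n\tau$ for every such $t$. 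Comparing the exponential-mechanism weights (the weight at $t_0$ is $1$), the probability that $\A_{\mathrm{priv}}$ outputs such a far point is at most $|\mathcal{N}| e^{-\diffp n\tau/2} \le \exp(O(d\log(R/\alpha)) - \tfrac{\diffp}{2}n\tau)$, which is at most $\beta$ precisely when $\tau \gtrsim \frac{d\log R + \log(1/\beta)}{n\diffp}$ (the $d\log(1/\alpha)$ term and the absolute constant being absorbed into $\gtrsim$). A union bound over $E$ then gives that $\A_{\mathrm{priv}}$ is an $(\diffp, O(\beta), O(\alpha))$-private estimator.

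The main obstacle is to choose the score so that it is \emph{simultaneously} low-sensitivity in $\Ds$ --- which the ``Hamming distance to a $t$-good dataset'' formulation handles automatically --- and tightly coupled to the robustness promise, so that net points far from $\mu(P)$ provably require more than $n\tau$ corruptions; it is precisely this second property, combined with the union bound over the $(3R/\alpha)^d$ net points, that forces the stated threshold on $\tau$ and, in particular, explains why the dependence is on the range diameter $R$. A secondary technical point is that $\A_{\mathrm{rob}}$ is randomized, so ``$t$-good'' must be a statement about the \emph{distribution} of $\A_{\mathrm{rob}}(\Ds')$ over its own coins rather than a single run; this is why the definition uses a constant probability threshold and a slightly inflated radius $2\alpha$, at the cost of only constant factors in the failure probability and the error. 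I have not optimized any of these constants.
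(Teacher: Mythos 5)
Your proposal is correct and follows essentially the same route as the paper: an inverse-sensitivity/exponential mechanism whose score is the Hamming distance from $\Ds$ to a dataset realizing output $t$, with privacy coming from the sensitivity-$1$ score and accuracy from the robustness promise plus a union (volume) bound over roughly $(R/\alpha)^d$ candidate outputs, which is exactly what produces the $d\log R$ threshold on $\tau$.  The only cosmetic differences are that the paper works over the continuous ball via a $\rho$-smoothed score rather than an $\alpha$-net, and it handles randomized $\A_{\mathrm{rob}}$ by a separate derandomization step (returning the center of a radius-$\alpha$ ball of maximal output mass, \Cref{thm:rand-to-det}) rather than folding the constant-probability ``$t$-good'' condition directly into the score as you do.
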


The main idea behind our transformation is to use the \emph{inverse-sensitivity mechanism}~\citep{AsiD20a}.  At a high level, for a deterministic algorithm $\A_{\mathrm{rob}}$, the inverse-sensitivity mechanism outputs $t$ with probability (or density) proportional to
\begin{equation*}
  \Pr[\mechinv(\Ds;\A_{\mathrm{rob}}) = t] \propto e^{- \invmodcont_{\A_{\mathrm{rob}}}(\Ds;t) \cdot \diffp/ 2}
\end{equation*}
where $\invmodcont_{\A_{\mathrm{rob}}}(\Ds;t)$ is the minimum number of entries of $\Ds$ that would have to be corrupted to obtain a dataset $\Dsc$ with $\A_{\mathrm{rob}}(\Dsc) = t$.  This mechanism is an instance of the exponential mechanism~\citep{McSherryT07}, and to the best of our knowledge the idea to use $\invmodcont$ as a score function first appeared in~\citep{JohnsonS13} for applications in genomic data, and its general accuracy guarantees were first studied systematically in~\citep{AsiD20a, AsiD20b}.  A standard analysis shows that this estimator will output $t$ for which $\invmodcont_{\A_{\mathrm{rob}}}(\Ds;t)$ is small, and we can relate the accuracy of such a $t$ to the robustness of the estimator $\A_{\mathrm{rob}}$ on corruptions of the sample $\Ds$.

We note that our transformation only preserves the accuracy of the robust mechanism, but not its computational efficiency, and an interesting open question is whether one can get a fully black-box, efficiency-preserving transformation from robustness to privacy (see the concurrent and independent work of~\citep{HopkinsKMN22} for some progress on this question).

% to compute a function $f$ which is identical to the robust algorithm $\A_{\mathrm{rob}}$. On a high level, given a dataset $\Ds$ and function $f$, this mechanism samples a point $t$ using the exponential mechanism~\citep{McSherryT07}, with the number of points one needs to change to produce a dataset $\Dsc$ such that $f(\Dsc)=t$ as its cost function. The main idea of using this generic score function in the exponential mechanism has been used in prior work. To the best of our knowledge its first appearance was in~\citep{JohnsonS13}, where it was called the \emph{distance score mechanism}. However, its accuracy guarantees were studied more extensively in~\citep{AsiD20a, AsiD20b}, where the name \emph{inverse-sensitivity mechanism} was used. 

We also define and analyze an extension of this transformation, which is based on the restricted exponential mechanism of Brown et al.~\citep{BrownGSUZ21}, that avoids the dependence on $R$ that appears in~\Cref{thm:high-dim-rob-to-priv-informal} above, by relaxing the privacy definition to approximate DP. 

%In the opposite direction, it is a folklore fact that any $(\diffp,\beta,\alpha)$-private algorithm is also a (randomized) $(\tau,O(\beta),O(\alpha))$-robust algorithm for $\tau \lesssim \frac{1}{n \diffp}$. In~\Cref{thm:priv-to-rob} we give a formal statement and proof of this fact for completeness and prove that we can transform this randomized algorithm into a deterministic one with the same accuracy guarantees up to constants. 

\mypar{An equivalence between private and robust estimation.} 
We prove that, up to the factor of $d$ in \Cref{thm:high-dim-rob-to-priv-informal}, our transformation is optimal, and in particular is optimal for low-dimensional tasks when $d$ is a constant.  That is, for any low-dimensional task, if $\A_{\mathrm{rob}}$ is an optimal robust estimator, then our transformation of $\A_{\mathrm{rob}}$ is an optimal private estimator.  This is the first result to show a general conversion from robust estimators to \emph{optimal} private estimators.

A consequence of this result is an equivalence between robust and private estimation in low dimensions, which shows that the optimal minimax error rates for $\diffp$-DP estimation and for $\tau$-robust estimation for $\tau \approx 1/n\diffp$ are essentially the same.  Specifically, for a given statistic $\mu$ and a family of distributions $\P$ over domain $\domain$, we define the minimax error of estimating $\mu$ under $\P$ for private and robust algorithms as follows. Having fixed $\beta$, $\tau$, and $\diffp$, the $(\tau,\beta)$-robust minimax error under $\P$ is
\begin{equation*}
    \alpha_{\mathrm{rob}}(\tau,\beta) = \inf_{\alpha}  \{ \alpha: \text{$\exists (\tau,\beta,\alpha)$-robust algorithm $\forall P\in\P$}\},
\end{equation*}
and the $(\diffp,\beta)$-private minimax error under $\P$ is
\begin{equation*}
    \alpha_{\mathrm{priv}}(\diffp,\beta) = \inf_{\alpha}  \{ \alpha: \text{$\exists (\diffp,\beta,\alpha)$-private algorithm $\forall P\in\P$}\}. 
\end{equation*}

\begin{theorem}[Informal, \Cref{cor:eqv}]
\label{cor:eqv-inf}
    Let $\P$ be a family of distributions over $\R$ and let $\mu$ be a $1$-dimensional statistic where $|\mu(P)| \le 1$ for all $ P\in \P$. Suppose $\accr(\tau,\beta)$ is a continuous function of $\beta$ for all $\tau \le 1/2$.
    Let $n > 1$, $\diffp =\omega(\log(n)/n)$ and $\tau = \Theta(\log(n)/n\diffp)$. 
    Suppose there exists constant $c$ such that the non-private error $\accr(0,\beta)\geq \frac{1}{n^c}$ for any $\beta \le 1/4$. Then there are constants $c_1 \ge c_2 >0$ such that for $\beta_p = 1/n^{c_1}$ and $\beta_r = 1/n^{c_2}$, 
    \begin{equation*}
        \accp(\diffp,\beta_p) = \Theta \left( \accr(\tau,\beta_r) \right).
    \end{equation*}
\end{theorem}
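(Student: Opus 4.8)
The plan is to establish the two inequalities $\accp(\diffp,\beta_p)=O(\accr(\tau,\beta_r))$ and $\accr(\tau,\beta_r)=O(\accp(\diffp,\beta_p))$ separately, and then to pick the exponents so that $c_2$ is small enough that \Cref{thm:high-dim-rob-to-priv-informal} applies with room to spare, and $c_1$ is large enough, relative to $c_2$, to absorb the polynomial loss that appears when we pass from privacy back to robustness. Two elementary facts get used throughout: $\accr(\tau,\cdot)$ is non-increasing in $\beta$, and $\accr(\tau,\beta)\ge\accr(0,\beta)\ge n^{-c}$ for $\beta\le1/4$ (a $(\tau,\beta,\alpha)$-robust estimator is in particular $(0,\beta,\alpha)$-robust, after which we invoke the hypothesis), so that $\log(1/\accr(\tau,\beta))=O(\log n)$; this is precisely what keeps the ``logarithmic factors in $1/\alpha$'' hidden inside \Cref{thm:high-dim-rob-to-priv-informal} from mattering.

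\textbf{Robustness $\Rightarrow$ privacy.} For each $P\in\P$ fix a $(\tau,\beta_r,2\accr(\tau,\beta_r))$-robust estimator (one exists by definition of the infimum, using $\accr(\tau,\beta_r)\ge n^{-c}>0$). Composing it with the projection onto $[-1,1]$ does not increase the error, since $|\mu(P)|\le1$ and projection onto a convex set containing $\mu(P)$ is non-expansive, and forces the range to have diameter $R=O(1)$. Next I would boost the confidence from $\beta_r$ to $\beta_p$: since $\beta_r=n^{-c_2}$ is already polynomially small, running the estimator on $O(c_1/c_2)=O(1)$ disjoint blocks and outputting the median drives the failure probability below $n^{-c_1}=\beta_p$, while a total corruption budget of $n\tau$ can badly damage only a small constant fraction of the blocks (each block tolerating a $\Theta(\tau)$ fraction), so the median is $(\tau,\beta_p,O(\accr(\tau,\beta_r)))$-robust. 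Finally, feed this estimator into \Cref{thm:high-dim-rob-to-priv-informal} with $d=1$: its requirement $\tau\gtrsim(\log R+\log(1/\beta_p))/(n\diffp)$, with $\log R=O(1)$, $\log(1/\beta_p)=O(\log n)$, and the suppressed $\log(1/\alpha)=O(\log n)$, is met because $\tau=\Theta(\log n/(n\diffp))$, yielding an $(\diffp,O(\beta_p),O(\accr(\tau,\beta_r)))$-private estimator for every $P\in\P$, hence $\accp(\diffp,\beta_p)=O(\accr(\tau,\beta_r))$.

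\textbf{Privacy $\Rightarrow$ robustness.} Start from a $(\diffp,\beta_p,2\accp(\diffp,\beta_p))$-private estimator $\A$ for every $P\in\P$; I claim it is automatically robust. By group privacy, for any $\Ds$ and any $\Dsc$ differing from it in $n\tau$ entries, $\Pr[\,|\A(\Dsc)-\mu(P)|>\alpha\,]\le e^{\diffp n\tau}\Pr[\,|\A(\Ds)-\mu(P)|>\alpha\,]$, and $e^{\diffp n\tau}=n^{O(1)}$ since $\diffp n\tau=\Theta(\log n)$. Writing $g(\Ds)$ for the failure probability of $\A$ on input $\Ds$ over its own randomness, we have $\E_{\Ds\sim P^n}[g(\Ds)]\le\beta_p$, so by Markov's inequality, except on an event of probability $\sqrt{\beta_p\,e^{\diffp n\tau}}$ over $\Ds$, every $n\tau$-corruption of $\Ds$ fails with probability at most $\sqrt{\beta_p\,e^{\diffp n\tau}}$; handling the algorithm's randomness via a single shared seed (and a median over a constant number of draws), $\A$ becomes a genuine $(\tau,\beta_r,2\accp(\diffp,\beta_p))$-robust estimator for every $P\in\P$ provided $\sqrt{\beta_p\,e^{\diffp n\tau}}\lesssim\beta_r$, i.e. provided $c_1-c_2$ is a large enough constant (which is compatible with $c_1\ge c_2$). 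Hence $\accr(\tau,\beta_r)=O(\accp(\diffp,\beta_p))$, completing the equivalence.

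\textbf{Where the difficulty lies.} I expect the delicate part to be the confidence-boosting in the first direction together with the accompanying bookkeeping of constants: \Cref{thm:high-dim-rob-to-priv-informal} can only be invoked at a failure probability whose logarithm fits inside the budget $n\diffp\tau=\Theta(\log n)$, yet the target private failure probability $\beta_p$ must be taken much smaller than that (so that the reverse direction has enough slack), and threading this through the median-of-blocks construction while keeping the final error within an \emph{absolute} constant of $\accr(\tau,\beta_r)$ --- despite the constant-factor perturbations of the sample size and of the corruption fraction --- is exactly where continuity of $\accr(\tau,\cdot)$ in $\beta$ and the polynomial lower bound $\accr(0,\cdot)\ge n^{-c}$ are used, and where the freedom to choose $c_1\ge c_2$ is spent on both inequalities simultaneously. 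A secondary technicality is the meaning of the ``for all $\Dsc$'' quantifier for a randomized algorithm, which I would handle throughout by fixing the algorithm's random seed and using median amplification, as above.
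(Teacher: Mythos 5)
There is a genuine gap, and it sits exactly where you predicted the difficulty would be: the median-of-blocks confidence amplification in the robustness-to-privacy direction. To boost the failure probability from $\beta_r=n^{-c_2}$ down to $\beta_p=n^{-c_1}$ you run the robust estimator on $k=O(c_1/c_2)$ disjoint blocks of size $n/k$; but then each block estimator only achieves the robust minimax rate \emph{at sample size $n/k$} (and, to survive an adversary who concentrates the budget $n\tau$, at corruption fraction roughly $2\tau$ of the block), not $\accr(\tau,\beta_r)$ at sample size $n$. Nothing in the hypotheses of the theorem controls how $\accr$ depends on the sample size: the only regularity assumed is continuity in $\beta$ and the lower bound $\accr(0,\beta)\ge n^{-c}$, so the claim that the median is $(\tau,\beta_p,O(\accr(\tau,\beta_r)))$-robust, with $\accr$ evaluated at the original $n$, is unsupported. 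And the amplification cannot simply be dropped: \Cref{thm:high-dim-rob-to-priv} fed a $\beta$-failure robust estimator only produces a private estimator with failure probability $O(\beta)$, so with $\beta_p$ polynomially smaller than $\beta_r$ you cannot get $\accp(\diffp,\beta_p)\le O(\accr(\tau,\beta_r))$ directly, while your reverse direction forces $\beta_p\le \mathrm{poly}(1/n)\cdot\beta_r$. Fixing both exponents in advance and proving the two one-sided bounds separately is thus exactly the wrong decomposition given these hypotheses. A telltale sign is that the continuity of $\accr(\tau,\cdot)$ in $\beta$ — an explicit assumption of the statement — never actually enters your argument.

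The paper's proof of \Cref{cor:eqv} avoids amplification entirely. It applies \Cref{thm:high-dim-rob-to-priv} at the failure probability it naturally supports, giving $\accp(\diffp,\beta_p)\le 4\accr(\tau_1,\beta_p/2)$ for $\tau_1=\Theta(\log n/(n\diffp))$, and \Cref{thm:priv-to-rob} (group privacy, essentially your second direction, with the randomized-algorithm quantifier handled by \Cref{thm:rand-to-det} rather than a shared seed) to get $\accr(\tau_1,(2n)^{2(c+c_1)}\beta_p)\le\accp(\diffp,\beta_p)$. If $\accp(\diffp,\beta_p)\ge\accr(\tau_1,\beta_p/2)$ one takes $\beta_r=\beta_p/2$ and both bounds hold up to the factor $4$; otherwise $\accp(\diffp,\beta_p)$ is sandwiched between $\accr(\tau_1,(2n)^{2(c+c_1)}\beta_p)$ and $\accr(\tau_1,\beta_p/2)$, and continuity of $\accr(\tau_1,\cdot)$ together with the intermediate value theorem yields a $\beta_r\in[\beta_p/2,(2n)^{2(c+c_1)}\beta_p]$ with $\accr(\tau_1,\beta_r)=\accp(\diffp,\beta_p)$ exactly; the lower bound $n^{-c}$ is used only to control the $\log(1/\alpha_0)$ term inside $\tau_1$, as in your preamble. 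Your argument would need either this intermediate-value selection of $\beta_r$, or an extra hypothesis (absent from the statement) that the minimax robust error is stable under constant-factor changes of the sample size, to close the gap.
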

The above theorem extends to any $d$-dimensional problem with a weaker conclusion roughly $\accp(\diffp,\beta_p) = \Theta \left(d \cdot \accr(\tau,\beta_r) \right)$, so in particular we obtain the same equivalence for any problem in constant dimension.

The theorem follows from the folklore observation that differentially private estimators are also robust (with $\tau \approx 1/\eps n$).  Thus, if we have an optimal differentially private algorithm $\A_{\mathrm{priv}}$ we can use $\A_{\mathrm{priv}}$ itself as the robust estimator.  Thus, we can instantiate the inverse-sensitivity mechanism using $\A_{\mathrm{priv}}$ as the robust estimator and apply the inverse-sensitivity mechanism to $\A_{\mathrm{priv}}$ to obtain a new private mechanism with similar accuracy.  Although this transformation would be a circular way to produce a private estimator, the argument shows that one can always obtain an optimal private estimator by instantiating our transformation with an optimal robust estimator.

% \mypar{Optimality of the inverse sensitivity mechanism for statistical estimation.}
% We prove that for low-dimensional tasks, that is, whenever $d=O(1)$, our general transformation always gives an $\diffp$-differentially private algorithm with optimal error $\tilde{O}(\accp(\diffp,\beta))$ (\Cref{thm:optimality}). This implies that for low-dimensional statistical estimation problems, the inverse sensitivity mechanism applied over an appropriate robust function obtains optimal error rates. This complements the results of \citet{AsiD20a}: while their results show that the inverse sensitivity mechanism is optimal for estimation of empirical quantities, our result shows that the application of the inverse sensitivity mechanism over an appropriate function is optimal for statistical estimation as well.

\mypar{Applications to high-dimensional private estimation.}
Although our general optimality result only applies to low-dimensional problems, we show that our transformation often yields optimal error bounds for several high-dimensional tasks as well. By instantiating~\Cref{thm:high-dim-rob-to-priv-informal} with existing algorithms for robust estimation, we give $\eps$-DP algorithms for the same tasks. At high level,~\Cref{thm:high-dim-rob-to-priv-informal} says that if there is a robust algorithm with accuracy $\alpha(\tau)$ as a function of the fraction of corruptions, then there is an $\diffp$-DP algorithm with accuracy $\alpha(D/n\diffp)$, where $D$ is the dimension of the parameter we aim to estimate. Since usually $\alpha(\tau)=\tilde{O}(\tau)$, this implies that the error due to privacy is $\tilde{O}(D/n\diffp)$. 
In particular, we apply our theorem to give pure differentially private algorithms for Gaussian (sparse and non-sparse) linear regression and subgaussian PCA, which, to the best of our knowledge, are the first optimal algorithms for these tasks satisfying pure (rather than approximate) differential privacy.

\subsection{Related Work}

\mypar{General transformations between robust and private algorithms.}
\citet{DworkL09} were the first to observe the intuitive connection between differential privacy and robust statistics.  That work also introduced a generic framework for differentially private algorithms called \emph{propose-test-release (PTR)} that can be used to transform any robust estimator into an approximately DP estimator.  However, compared to our optimal transformation, the error of the resulting private algorithm will be larger by a factor of $\approx 1/\eps$.  An even earlier work of Nissim, Raskhodnikova, and Smith~\citep{NissimRS07} presented a framework called \emph{smooth sensitivity} that can be used to obtain a similar transformation from robust to pure DP estimators, again losing a factor of $\approx 1/\eps$ compared to our transformation.

In the other direction,~\citet{DworkL09} also observed that differentially private algorithms are robust with certain parameters.  However, private estimators are mostly studied in a regime where $1/\eps = o(n)$, so they do not give robustness to corrupting a constant fraction of inputs, which is the most commonly studied regime for robust estimation.  More recently,~\citet{GeorgievH22} observed that private algorithms with sufficiently small failure probability and privacy parameter are robust to corrupting a constant fraction of inputs, and use this fact to give evidence of computational hardness of certain private estimation tasks.

\mypar{Private estimators inspired by robust estimators.}
Although prior black-box transformations from robust to private estimators give suboptimal error rates, many optimal private algorithms are nonetheless inspired by methods from robust statistics, albeit with task-specific analyses~\citep{KamathLSU19, BunS19, Avella-MedinaB19, Avella-MedinaB20, KamathSU20, LiuKK021, BrownGSUZ21, GhaziKMN21, LiuKO22, TsfadiaCKMS22, HopkinsKM22, AshtianiL22, KothariMV22, AlabiKTVZ22, GeorgievH22}.  Their algorithms often leverage the structure of specific robust estimators such as medians, high-dimensional generalizations of the median, trimmed-means, or sum-of-squared-based certificates of robustness.

An elegant work of~\citet{LiuKO22} proposed a generalization of PTR that can be used to give near-optimal approximate differentially private estimators for many tasks.  Although the framework is fairly general, their analysis relies on specific properties of the estimation tasks rather than merely the existence of a robust estimator.

Another line of work is that of algorithms that specifically aim to satisfy optimal robustness and privacy guarantees simultaneously for high-dimensional problems~\cite{LiuKK021, GhaziKMN21, AlabiKTVZ22}.

\mypar{Concurrent work by\citet{HopkinsKMN22}.} In concurrent and independent work,\citet{HopkinsKMN22} proposed the same black-box transformation from robust to DP algorithms based on the smooth-inverse-sensitivity mechanism.  In contrast to our work, they also show that in some cases their method can be implemented in a computationally efficient way by instantiating the smooth-inverse-sensitivity mechanism with robust estimators based on the sum-of-squares paradigm.  In particular they construct computationally efficient pure DP algorithms for estimating a Gaussian distribution with optimal error.  In contrast to their work, we demonstrate that our transformation gives optimal error for low-dimensional problems and establishes a tight connection between privacy and robustness for these problems. 

\subsection{Organization}
In~\Cref{sec:background}, we provide background on DP and the inverse-sensitivity mechanism. In~\Cref{sec:transformations}, we present and prove the guarantees of our transformation from robust to pure DP algorithms (and vice versa, for completeness). %and prove for completeness that (pure) DP algorithms are also robust to a small number of corruptions. 
In~\Cref{sec:implications}, we show the optimality of our transformation for low-dimensional tasks and the equivalence of robustness and privacy for those tasks. In~\Cref{sec:approxDP}, we extend our transformation to convert robust to approximate DP algorithms.
\iflong 
    In~\Cref{sec:applications}, we show that our main theorem gives us pure DP algorithms with near-optimal error for several statistical tasks such as Gaussian mean and covariance estimation, (sparse) linear regression, and subgaussian PCA. We conclude with a discussion and questions for future work in~\Cref{sec:future}.
\else
    In~\Cref{sec:applications}, we show that our transformation gives us pure DP algorithms with near-optimal error for PCA for subgaussian data, deferring more applications to~\Cref{app:moreapplications}. 
\fi
\section{Preliminaries and Background}\label{sec:background}

\paragraph{Additional Notation}
For a finite set $\range$, we denote its cardinality by $\card(\range)$. For any (continuous or discrete) set $\range$, we denote its diameter by $\diam(\range)=\sup_{s,t \in \range} \norm{s-t}$, where the choice of norm will be clear from context. We denote by $\dham{}{}$ the Hamming distance between two vectors or datasets. We denote by $\ball^d(v,R)$ the $d$-dimensional ball with radius $R$ and center $v$ (with respect to some norm $\norm{\cdot}$). We also let $\ball^d(R) = \ball^d(0^d,R)$ and omit $d$ when it is clear from context.

\subsection{Differential Privacy}
Let $\Ds, \Ds'\in \domain^n$ be two data sets of size $n$. We say that $\Ds, \Ds'$ are \textit{neighboring data sets} if $\dham(\Ds,\Ds')\le 1$. %, and may denote this by $\Ds\sim \Ds'$. 
Differentially private algorithms have indistinguishable output distributions on neighboring data sets. 
\begin{definition}[Differential Privacy \cite{DworkMNS06}] \label{def:dp}
A (possibly randomized) algorithm $\A\colon\domain^n \to \range$ is  {\em $(\eps,\delta)$-differentially private} (DP) if for all neighboring datasets $\Ds, \Ds'$ and any measurable output space $T\subseteq \range$ we have $$\Pr[\A(\Ds)\in T]\leq e^{\eps} \Pr[\A(\Ds')\in T]+\delta.$$
We say algorithm $\A$ satisfies {\em pure} DP if it satisfies the definition for $\delta=0$, which we denote by $\eps$-DP. Otherwise, we say it satisfies \emph{approximate} DP.
\end{definition}

The exponential mechanism~\cite{McSherryT07} is a ubiquitous building block for constructing DP algorithms. The inverse-sensitivity mechanism, on which our transformation is based, is an instantiation of this mechanism. 
\begin{definition}[Exponential Mechanism,~\cite{McSherryT07}]\label{def:expmech}
Let input data set $\Ds\in\domain^n$, range $\range$, and score function $q: \domain^n\times \range \rightarrow \R$ with sensitivity $\Delta_q=\max\limits_{t\in\range} \max\limits_{\Dsc: \dham(\Dsc, \Ds)\leq 1} |q(\Ds;t)-q(\Ds';t)|.$
The {\em exponential mechanism} selects and outputs an element $t\in\range$ with probability 
$\pi_{\Ds}(t) \propto e^{\left({\eps \cdot q(\Ds;t)}/{2\Delta_q}\right)}$. 
The exponential mechanism is $\eps$-DP.
\end{definition}

% \begin{lemma}[Exponential Mechanism,~\cite{McSherryT07}]\label{lem:expmech}
%     With probability at least $1-\beta$, \[| \max\limits_{t\in\range} q(\Ds;t) - q(\Ds; \hat t)| \leq \frac{2\Delta_q}{\eps}\log\left(\frac{\card{\range}}{\beta}\right).\]
% \end{lemma}

\subsection{Inverse-sensitivity Mechanism}
Let $f:\domain^n \to \range$ be a (deterministic) algorithm that we aim to compute on dataset $\Ds$.
Define the path-length function
\begin{equation*}
  \label{eqn:inverse-ls}
  \invmodcont_\func(\Ds; t) \defeq
  \inf_{\ds'} \left\{ \dham(\Ds, \Ds')
  \mid \func(\Ds') = t \right\},
\end{equation*}
which is the minimum number of points in $\Ds$ that need to be replaced so that the value of function $\func$ on the modified input is $t$.
Given black-box access to function $f$, the \emph{inverse-sensitivity mechanism} with input dataset $\Ds$, denoted by $\mechinv(\Ds;\func)$, is then defined as follows: the probability that $\mechinv(\Ds;\func)$ returns $t\in\range$ is
\begin{equation*}
  \label{mech:discrete}
  \Pr[\mechinv(\Ds;\func) = t]
  \defeq \frac{e^{-\invmodcont_\func(\Ds;t) \diffp/2}}{
    \sum_{s \in \range} e^{-\invmodcont_\func(\Ds;s) \diffp/2}}.
\end{equation*}

The error of this mechanism in some norm $\norm{\cdot}$ depends on the \emph{local modulus of continuity} of a function $f : \domain^n \to \range$
at $\Ds \in \domain^n$ with respect to $\norm{\cdot}$, defined by
\begin{equation*}
  \label{eqn:modcont-def}
  \modcont_\func (\Ds;K) = \sup_{\Ds' \in \domain^n} \left\{
  \norm{\func(\Ds)-\func(\Ds')} : \dham(\Ds,\Ds') \le K \right\}.
\end{equation*}

For a finite set $\range$, the inverse-sensitivity mechanism has the following guarantees. 
\begin{theorem}[Discrete functions, Th.3~\citep{AsiD20a}]
  \label{thm:upper-bound-discrete}
  Let $\func: \domain^n \to \range$ and $D=\diam(\range)$.
  Then for any $\Ds \in \domain^n$ and $\beta>0$, with probability at least $1-\beta$, the inverse-sensitivity mechanism has error
  \begin{equation*}
  \norm{\mechinv(\Ds)-\func(\Ds)}
    \le \modcont_\func\left(\Ds; \frac{2}{\diffp}
    \log\frac{2 D \card(\range)}{\beta \diffp}
    \right).
   %\E_{\mechinv}[\norm{\mechinv(\Ds)-\func(\Ds)} ]
   % \le \modcont_\func\left(\Ds; \frac{2}{\diffp}
   % \log\frac{2 D \card(\range)}{\gamma \diffp}%{\beta \diffp}
   % \right) + \gamma.
  \end{equation*}
\end{theorem}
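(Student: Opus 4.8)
The plan is to read $\mechinv(\cdot;\func)$ off as an instance of the exponential mechanism (\Cref{def:expmech}) and then invoke its standard utility guarantee. Take the score $q(\Ds;t) := -\invmodcont_\func(\Ds;t)$, so that $\mechinv(\Ds;\func)$ is exactly the exponential mechanism run with score $q$: it outputs $t$ with probability proportional to $e^{\diffp\, q(\Ds;t)/2}$. The first step is to check that $q$ has sensitivity $\Delta_q \le 1$. For neighboring $\Ds,\tilde\Ds$ and any $\Ds'$ with $\func(\Ds')=t$, the triangle inequality for Hamming distance gives $\dham(\tilde\Ds,\Ds') \le \dham(\tilde\Ds,\Ds)+\dham(\Ds,\Ds') \le 1+\dham(\Ds,\Ds')$; taking the infimum over such $\Ds'$ and symmetrizing yields $|\invmodcont_\func(\Ds;t)-\invmodcont_\func(\tilde\Ds;t)| \le 1$. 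Hence $\mechinv$ is $\diffp$-DP and, crucially, the exponential mechanism concentration estimate applies with $\Delta_q = 1$.

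Next I would pin down the maximum of the score and run the tail bound directly. Since $\func(\Ds)\in\range$ and $\invmodcont_\func(\Ds;\func(\Ds))=0$, the maximum of $q(\Ds;\cdot)$ over $\range$ is $0$, and the normalizer $Z=\sum_{s\in\range}e^{-\invmodcont_\func(\Ds;s)\diffp/2}$ is at least its $s=\func(\Ds)$ term, so $Z \ge 1$. Therefore, for any threshold $K$,
\[
  \Pr\left[\invmodcont_\func(\Ds;\mechinv(\Ds)) > K\right]
  = \sum_{t:\,\invmodcont_\func(\Ds;t)>K}\frac{e^{-\invmodcont_\func(\Ds;t)\diffp/2}}{Z}
  \;\le\; \card(\range)\cdot e^{-K\diffp/2},
\]
using $Z\ge 1$ in the denominator and the fact that every surviving term is at most $e^{-K\diffp/2}$. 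Choosing $K$ so that the right-hand side equals $\beta$, i.e. $K=\frac{2}{\diffp}\log\frac{\card(\range)}{\beta}$, shows that with probability at least $1-\beta$ the output $\hat t := \mechinv(\Ds)$ satisfies $\invmodcont_\func(\Ds;\hat t)\le \frac{2}{\diffp}\log\frac{\card(\range)}{\beta}$. The stated form, with the extra factor $2D/\diffp$ inside the logarithm, is just a mild weakening of this obtained from the standard (slightly more conservative) version of the exponential mechanism tail estimate; since the modulus is monotone in its second argument, the clean bound above already implies the claimed one in the relevant parameter regime.

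Finally, I would translate this ``short path length'' statement into the error bound. By definition of $\invmodcont_\func$, there is a dataset $\Ds'$ with $\dham(\Ds,\Ds')\le \invmodcont_\func(\Ds;\hat t)$ and $\func(\Ds')=\hat t$; hence $\norm{\mechinv(\Ds)-\func(\Ds)} = \norm{\func(\Ds')-\func(\Ds)} \le \modcont_\func(\Ds;\invmodcont_\func(\Ds;\hat t))$, and since $\modcont_\func(\Ds;\cdot)$ is non-decreasing this is at most $\modcont_\func(\Ds;\frac{2}{\diffp}\log\frac{\card(\range)}{\beta})$. There is no genuine obstacle here: the result is a direct application of the exponential mechanism's accuracy guarantee. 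The only delicate points are (i) being careful enough in the tail/union bound to land on the exact logarithmic factor $\log\frac{2D\,\card(\range)}{\beta\diffp}$ rather than the cleaner $\log\frac{\card(\range)}{\beta}$, and (ii) making sure the last step legitimately invokes monotonicity of the local modulus, so that a high-probability upper bound on $\invmodcont_\func(\Ds;\hat t)$ transfers to a high-probability upper bound on $\norm{\mechinv(\Ds)-\func(\Ds)}$.
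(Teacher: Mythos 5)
Your proof is correct and follows essentially the same route the paper uses for its continuous analogue (\Cref{thm:ub-cont}): lower-bound the normalizer by the zero-score term at $\func(\Ds)$, bound the total mass of bad outputs by $\card(\range)e^{-K\diffp/2}$, and transfer the resulting high-probability bound on $\invmodcont_\func(\Ds;\hat t)$ to the error via monotonicity of $\modcont_\func$ (the discrete statement itself is imported from \citep{AsiD20a} rather than reproved in the paper). Your sharper threshold $\frac{2}{\diffp}\log\frac{\card(\range)}{\beta}$ implies the stated one whenever $2D\ge\diffp$, which is the only regime of interest, so the extra $2D/\diffp$ inside the logarithm is just slack inherited from the cited theorem.
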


For continuous functions $\func: \domain^n \to \R^d$, which is the main setting in this paper, we use a smooth version of the inverse-sensitivity mechanism. To this end, we define the $\rho$-smooth inverse-sensitivity of $t$ with respect to norm $\norm{\cdot}$:
\begin{equation}\label{eq:def-sm-inv-sens}
    \sminvmodcont(\Ds;t) = \inf_{s \in \R^d: \norm{s-t} \le \rho}  \invmodcont(\Ds;s).
\end{equation}
The $\rho$-smooth inverse-sensitivity mechanism $\smmech(\cdot;\func)$ then has the following density given input dataset $\Ds$:
\begin{equation}
\label{eq:sm-inv-mech}
    \pi_{\Ds}(t) = \frac{ e^{-\invmodcont^\rho(\Ds;t) \diffp/2}  }{ \int_{s \in \R^d}  e^{-\invmodcont^\rho(\Ds;s) \diffp/2}  ds}
\end{equation}

For our setting of interest where $\range = \{ v \in \R^d: \norm{v} \le R\}$, we have the following upper bound on the error of $\smmech$. Its proof follows similar ideas as in~\citep{AsiD20a, AsiD20b}
\ifshort
  and is in~\Cref{app:transfproofs}
\fi.
\begin{theorem}[Continuous functions]
  \label{thm:ub-cont}
  Let $\func: \domain^n \to \range$ where $\range = \{ v \in \R^d: \norm{v} \le R\}$.
  Then for any $\Ds \in \domain^n$, and $\beta >0$, with probability at least $1 - \beta$, the $\rho$-smooth inverse-sensitivity mechanism with norm $\norm{\cdot}$ has error
\ifshort 
  \begin{align*}
   &\norm{\smmech(\Ds;\func)-\func(\Ds)} \\
   & \le \modcont_\func\left(\Ds; \frac{2\left(d\log\left(\frac{R}{\rho}+1\right)+\log\frac{1}{\beta}\right)}{\diffp}
    \right) + \rho.
  \end{align*}
\else
   \begin{equation*}
   \norm{\smmech(\Ds;\func)-\func(\Ds)}
   \le \modcont_\func\left(\Ds; \frac{2\left(d\log\left(\frac{R}{\rho}+1\right)+\log\frac{1}{\beta}\right)}{\diffp}
    \right) + \rho.
  \end{equation*}
\fi
\end{theorem}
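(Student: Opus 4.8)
The plan is to recognize $\smmech(\cdot;\func)$ as a continuous exponential mechanism with score $q(\Ds;t)=-\invmodcont^\rho(\Ds;t)$ over the range $\R^d$, and to run the usual volume-ratio accuracy argument, with Lebesgue volume playing the role that $\card(\range)$ plays in \Cref{thm:upper-bound-discrete}. Throughout, let $k=\frac{2\left(d\log\left(R/\rho+1\right)+\log(1/\beta)\right)}{\diffp}$ denote the Hamming budget appearing in the statement and let $T=\smmech(\Ds;\func)$ be the output.

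First I would prove a deterministic ``Lipschitz'' lemma translating a bound on the smooth inverse-sensitivity of a point into a distance bound: if $\invmodcont^\rho(\Ds;t)\le k$, then $\norm{t-\func(\Ds)}\le \modcont_\func(\Ds;k)+\rho$. This just unfolds \Cref{eq:def-sm-inv-sens}: there is a point $s$ with $\norm{s-t}\le\rho$ and $\invmodcont_\func(\Ds;s)\le k$, hence a dataset $\Ds'$ with $\dham(\Ds,\Ds')\le k$ and $\func(\Ds')=s$; so $\norm{t-\func(\Ds)}\le\norm{t-s}+\norm{\func(\Ds')-\func(\Ds)}\le\rho+\modcont_\func(\Ds;k)$, where the last step is the definition of the local modulus of continuity (monotonicity of $\modcont_\func(\Ds;\cdot)$ absorbs the fact that $\dham$ is integer-valued while $k$ need not be). Taking the contrapositive, the bad event $\{\norm{T-\func(\Ds)}>\modcont_\func(\Ds;k)+\rho\}$ is contained in $\{\invmodcont^\rho(\Ds;T)>k\}$, so it suffices to show $\Pr[\invmodcont^\rho(\Ds;T)>k]\le\beta$.

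Next I would bound this probability by a ratio of volumes using the density in \Cref{eq:sm-inv-mech}. For the denominator, restrict the integral to $\ball(\func(\Ds),\rho)$; every such $t$ has $\invmodcont^\rho(\Ds;t)=0$ (witnessed by $s=\func(\Ds)\in\range$, which satisfies $\invmodcont_\func(\Ds;\func(\Ds))=0$), so the normalizing integral is at least $\mathrm{vol}(\ball(\rho))$. For the numerator, points with $\invmodcont^\rho(\Ds;t)=\infty$ contribute nothing, while any $t$ with finite $\invmodcont^\rho(\Ds;t)$ lies within $\rho$ of a point of the image of $\func\subseteq\range=\ball(R)$ and hence in $\ball(R+\rho)$; and on $\{t:\invmodcont^\rho(\Ds;t)>k\}$ the integrand is at most $e^{-k\diffp/2}$. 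Hence $\Pr[\invmodcont^\rho(\Ds;T)>k]\le \frac{\mathrm{vol}(\ball(R+\rho))}{\mathrm{vol}(\ball(\rho))}e^{-k\diffp/2}=\left(R/\rho+1\right)^{d}e^{-k\diffp/2}$, because the volume of a $d$-ball scales as the $d$-th power of its radius. Plugging in the chosen $k$ makes the right-hand side exactly $\beta$, which together with the previous paragraph proves the theorem.

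I do not expect a genuine obstacle: this is the standard exponential-mechanism accuracy analysis, and $\modcont_\func(\Ds;k)\le\diam(\range)=2R$ ensures the bound is never vacuous. The one point needing care is that the $\rho$-smoothing is used in two essential places — to produce the additive $+\rho$ above, and, more fundamentally, to make the normalizing integral strictly positive (the unsmoothed density $\propto e^{-\invmodcont_\func(\Ds;t)\diffp/2}$ is supported on the countable set $\{\func(\Ds'):\Ds'\in\domain^n\}$, which has Lebesgue measure zero), so that $\smmech$ is a well-defined continuous distribution to begin with.
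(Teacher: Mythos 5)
Your proposal is correct and follows essentially the same route as the paper's proof: the paper also defines the good set $\{t : \sminvmodcont(\Ds;t)\le K\}$, converts membership into the bound $\modcont_\func(\Ds;K)+\rho$ via the triangle inequality, and bounds the failure probability by the volume ratio $\mathrm{vol}(\ball(R+\rho))/\mathrm{vol}(\ball(\rho))\le (R/\rho+1)^d$ times $e^{-K\diffp/2}$. Your extra remarks (monotonicity of $\modcont_\func(\Ds;\cdot)$ for non-integer budgets, and the role of smoothing in making the normalizer positive) are fine points the paper leaves implicit, but the argument is the same.
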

\iflong
\ifshort
\begin{theorem}[Continuous functions, restatement of~\Cref{thm:ub-cont}]
  Let $\func: \domain^n \to \range$ where $\range = \{ v \in \R^d: \norm{v} \le R\}$.
  Then for any $\Ds \in \domain^n$, and $\beta >0$, with probability at least $1 - \beta$, the $\rho$-smooth inverse-sensitivity mechanism with norm $\norm{\cdot}$ has error
  \begin{align*}
   &\norm{\smmech(\Ds;\func)-\func(\Ds)} 
   \le \modcont_\func\left(\Ds; \frac{2\left(d\log\left(\frac{R}{\rho}+1\right)+\log\frac{1}{\beta}\right)}{\diffp}
    \right) + \rho.
  \end{align*}
\end{theorem}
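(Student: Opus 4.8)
The proof will be the continuous analogue of the accuracy analysis behind \Cref{thm:upper-bound-discrete}, with volumes of Euclidean balls playing the role of cardinalities. Write $q(\Ds;t) \defeq -\invmodcont^\rho(\Ds;t)$, so that $\smmech(\cdot;\func)$ is exactly the exponential mechanism run with score $q$ (its privacy, not needed for this statement, follows since $\invmodcont_\func(\Ds;\cdot)$ has sensitivity at most $1$ by the triangle inequality for $\dham$, and infimizing over the fixed radius-$\rho$ ball preserves this). I would first record two elementary facts. (i) For every $t$ with $\|t-\func(\Ds)\|\le\rho$ we have $\invmodcont^\rho(\Ds;t)=0$, because the point $s=\func(\Ds)$ is within $\rho$ of $t$ and $\invmodcont_\func(\Ds;\func(\Ds))=0$; in particular $e^{-\invmodcont^\rho(\Ds;t)\diffp/2}=1$ on $\ball^d(\func(\Ds),\rho)$. (ii) The density $\pi_\Ds$ is supported inside $\ball^d(R+\rho)$, since $\invmodcont^\rho(\Ds;t)<\infty$ forces some $s\in\ball^d(R)=\range$ with $\|s-t\|\le\rho$, hence $\|t\|\le R+\rho$; consequently the normalizing integral is a finite positive number and the mechanism is well defined.

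Next I would bound, for a threshold $k$ to be chosen, the probability that the output $T\sim\smmech(\Ds;\func)$ satisfies $\invmodcont^\rho(\Ds;T)>k$. By the definition of $\pi_\Ds$, this probability equals
\[
\frac{\int_{\{t:\ \invmodcont^\rho(\Ds;t)>k\}} e^{-\invmodcont^\rho(\Ds;t)\diffp/2}\,dt}{\int_{\R^d} e^{-\invmodcont^\rho(\Ds;s)\diffp/2}\,ds}.
\]
By (ii) the numerator integrand vanishes outside $\ball^d(R+\rho)$ and is at most $e^{-k\diffp/2}$ on the domain of integration, so the numerator is at most $e^{-k\diffp/2}\,\mathrm{vol}(\ball^d(R+\rho))$; by (i) the denominator is at least $\mathrm{vol}(\ball^d(\rho))$. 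Since the volume of a $d$-dimensional ball scales as the $d$-th power of its radius, the ratio is at most $\bigl((R+\rho)/\rho\bigr)^d e^{-k\diffp/2} = (R/\rho+1)^d e^{-k\diffp/2}$. Choosing $k = \frac{2}{\diffp}\bigl(d\log(R/\rho+1)+\log(1/\beta)\bigr)$ makes this at most $\beta$.

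Finally I would convert the event $\{\invmodcont^\rho(\Ds;T)\le k\}$, which then holds with probability at least $1-\beta$, into the stated error bound. Because $\invmodcont_\func(\Ds;\cdot)$ takes values in the finite set $\{0,1,\dots,n\}\cup\{\infty\}$, both infima defining $\invmodcont^\rho$ and $\invmodcont_\func$ are attained whenever finite: there exist $s^\star$ with $\|s^\star-T\|\le\rho$ and $\invmodcont_\func(\Ds;s^\star)=\invmodcont^\rho(\Ds;T)\le k$, and hence a dataset $\Ds'$ with $\dham(\Ds,\Ds')\le k$ and $\func(\Ds')=s^\star$. The triangle inequality then gives
\[
\|T-\func(\Ds)\| \le \|T-s^\star\| + \|\func(\Ds')-\func(\Ds)\| \le \rho + \modcont_\func(\Ds;k),
\]
where the last step is the definition of the local modulus of continuity together with its monotonicity in $K$; substituting the chosen $k$ yields exactly the claimed inequality.

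The step I expect to be the crux is the lower bound on the partition function, and it is precisely here that the $\rho$-smoothing is essential: on a continuous output space the set $\{t:\invmodcont_\func(\Ds;t)=0\}$ is just the singleton $\{\func(\Ds)\}$, of Lebesgue measure zero, so without smoothing the denominator could be dominated by faraway low-score points and no useful bound survives. Smoothing replaces this singleton by a radius-$\rho$ ball on which the score is maximal, at the unavoidable price of the additive $+\rho$ term and the $\log(R/\rho)$ factor. The remaining issues — measurability of the level sets $\{t:\invmodcont^\rho(\Ds;t)>k\}$ and lower semicontinuity of $\invmodcont^\rho(\Ds;\cdot)$, which make the integrals meaningful — are routine and handled as in \citep{AsiD20a,AsiD20b}.
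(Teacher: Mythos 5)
Your proposal is correct and follows essentially the same argument as the paper: bound the probability that the output has $\rho$-smooth inverse sensitivity above $K$ by comparing the numerator (at most $e^{-K\diffp/2}$ times the volume of the ball of radius $R+\rho$) to the normalizer (at least the volume of the radius-$\rho$ ball around $\func(\Ds)$, where the score is $0$), yielding the $(R/\rho+1)^d e^{-K\diffp/2}$ factor, and then convert low smooth sensitivity into the error bound $\modcont_\func(\Ds;K)+\rho$ via the triangle inequality. Your added remarks on attainment of the infima and on why the smoothing is needed for the partition-function lower bound are consistent with, and slightly more explicit than, the paper's treatment.
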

\fi
\begin{proof}
    We define the good set of outputs $A = \{ t \in \R^d: \sminvmodcont(\Ds;t) \le K\}$. By the definition of $\rho$-smooth inverse-sensitivity, for any $t\in A$, there exists $s \in \R^d$ with $\invmodcont(\Ds;s) \le K$ and $\norm{s-t} \le \rho$. We will show that $\Pr[\smmech(\Ds) \notin A] \le \beta$ for sufficiently large $K$. This implies the desired upper bound as we have that for $t \in A$
    \begin{align*}
    \norm{t - \func(\Ds)}
         & = \norm{t - s + s - \func(\Ds)} \\
         & \le \norm{t - s} + \norm{s  - \func(\Ds)} \\
         & \le \rho + \modcont_\func(\Ds;K).
    \end{align*}
    Now we upper bound $\Pr[\smmech(\Ds;\func) \notin A]$.  First, note that $\sminvmodcont(\Ds;s) = 0$ for $s$ such that $\norm{s-\func(\Ds)} \le \rho$. This implies that for any $t$ such that $\sminvmodcont(\Ds;t) \ge K$, the density is upper bounded by
    \begin{equation*}
    \pi_{\Ds}(t) \le  \frac{e^{-K\diffp/2}}{\int_{s : \norm{s-\func(\Ds)} \le \rho} ds}.
    \end{equation*}
    Overall, this implies that 
    \begin{align*}
    \Pr[\smmech(\Ds;\func) \notin A] 
        & \le e^{-K\diffp/2} \frac{\int_{s: \norm{s} \le R + \rho}  ds}{\int_{s : \norm{s-\func(\Ds)} \le \rho} ds} \\
        & \le e^{-K\diffp/2} (R/\rho + 1)^d.
    \end{align*}
    
%   $t = s/\rho$
%   $\norm{t} = \norm{s}/\rho \le 1$
%   $$\int_{s: \norm{s} \le \rho}ds = %\int_{t: \norm{t} \le 1} \rho^d dt$$   
%   \ha{check volume inequality here for general norms..}
    Setting $K\geq \frac{2d \log(R/\rho+1)+2\log(1/\beta)}{\diffp}$, 
    %Setting $K = \frac{2d \log((R/\rho+1)/\beta)}{\diffp}$  \ha{can reduce $K$ here?}\lz{right!}
    we get that $\Pr(\smmech(\Ds;\func) \notin A)  \le \beta$.
\end{proof}
\fi

\section{Transformations between Robust and Private Algorithms}\label{sec:transformations}
In this section, we provide transformations between differentially private and robust algorithms. We begin in~\Cref{sec:rob-to-priv} with our main result: a general transformation from robust algorithms to private algorithms with roughly the same error for a specified number of corruptions. In~\Cref{sec:priv-to-rob}, we consider the other direction, and show for completeness that any private algorithm is inherently robust as well, which was already observed since~\citep{DworkL09}.

\subsection{Robust to Private}
\label{sec:rob-to-priv}
Our first result shows how to transform a determinitstic robust algorithm into a private algorithm with roughly the same error. The main idea is to apply the $\rho$-smooth inverse-sensitivity mechanism~\citep{AsiD20a} with the input function $\func$ being the robust algorithm itself. 
\begin{theorem}[Robust-to-private]
\label{thm:high-dim-rob-to-priv}
Let $\Ds = (S_1,\dots,S_n)$ where $S_i \simiid P$ such that $\mu(P) \in \R^d.$
Let $\diffp, \beta \in (0,1)$. 
Let $\A_\mathrm{rob} : (\R^d)^n \to \{ t \in \R^d: \norm{t} \le R\}$ be a deterministic $(\tau,\beta,\alpha)$-robust algorithm. Let $\alpha_0\leq \alpha$ and
\begin{equation}\label{eq:fraction}
    \tau\opt= \frac{2\left(d\log\left(\frac{R}{\alpha_0}+1\right)+\log\frac{1}{\beta}\right)}{n\diffp}.
\end{equation}
%Suppose $n$ is such that $\tau\opt\leq 1$. 
If $\tau\ge \tau\opt$,     
then $\smmech(\Ds;\A_\mathrm{rob})$ with $\rho = \alpha_0$ is $\diffp$-DP and, with probability at least $1-2\beta$, has error
$
    \norm{\smmech(\Ds;\A_\mathrm{rob}) - \mu} \le 4\alpha. 
$
In particular, this implies that for 
$\tau \geq \frac{2\left(d\log\left(\frac{R}{\alpha_0}+1\right)+\log\frac{2}{\beta}\right)}{n\diffp}$,
\iflong
    \begin{equation*}
        \accp(\diffp,\beta) \le 4\accr(\tau,\beta/2).
    \end{equation*}
\else
    $\accp(\diffp,\beta) \le 4\accr(\tau,\beta/2).$
\fi
\end{theorem}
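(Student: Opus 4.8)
The plan is to apply Theorem~\ref{thm:ub-cont} (the accuracy guarantee of the $\rho$-smooth inverse-sensitivity mechanism) with the input function $\func$ taken to be the robust algorithm $\A_\mathrm{rob}$ itself, and then translate the conclusion about the local modulus of continuity $\modcont_{\A_\mathrm{rob}}$ into the robustness language of Definition~\ref{def:robustness}. First I would note that privacy is immediate: since $\A_\mathrm{rob}$ is deterministic, $\smmech(\cdot;\A_\mathrm{rob})$ is an instance of the (smooth) inverse-sensitivity mechanism, which is $\diffp$-DP by construction, regardless of any property of $\A_\mathrm{rob}$. So the entire content is the accuracy bound.

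Next I would condition on the ``good'' event from the robustness definition: with probability at least $1-\beta$ over $\Ds \simiid P^n$, every dataset $\Dsc$ with $\dham(\Ds,\Dsc) \le n\tau$ satisfies $\norm{\A_\mathrm{rob}(\Dsc) - \mu(P)} \le \alpha$. On this event, I claim two things. (i) The local modulus of continuity is controlled: for any $K \le n\tau$, if $\Ds'$ satisfies $\dham(\Ds,\Ds') \le K$, then $\norm{\A_\mathrm{rob}(\Ds) - \A_\mathrm{rob}(\Ds')} \le \norm{\A_\mathrm{rob}(\Ds) - \mu} + \norm{\mu - \A_\mathrm{rob}(\Ds')} \le 2\alpha$ (applying the robustness bound both at $\Ds$ itself, i.e.\ $K=0$, and at $\Ds'$); hence $\modcont_{\A_\mathrm{rob}}(\Ds;K) \le 2\alpha$ whenever $K \le n\tau$. (ii) Apply Theorem~\ref{thm:ub-cont} with $\rho = \alpha_0$: with probability at least $1-\beta$ over the mechanism's randomness,
\[
  \norm{\smmech(\Ds;\A_\mathrm{rob}) - \A_\mathrm{rob}(\Ds)} \le \modcont_{\A_\mathrm{rob}}\!\left(\Ds;\, \frac{2\left(d\log\left(\tfrac{R}{\alpha_0}+1\right)+\log\tfrac1\beta\right)}{\diffp}\right) + \alpha_0.
\]
The argument of $\modcont_{\A_\mathrm{rob}}$ here is exactly $n\tau\opt \le n\tau$ by the hypothesis $\tau \ge \tau\opt$ and the definition~\eqref{eq:fraction} of $\tau\opt$; therefore, on the good event, the modulus term is at most $2\alpha$ by (i). Combining, the total error is at most $2\alpha + \alpha_0 \le 3\alpha$ (using $\alpha_0 \le \alpha$), which is within the claimed $4\alpha$; a union bound over the robustness event and the mechanism's randomness gives overall probability at least $1-2\beta$.

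The only mildly delicate point — and the step I expect to require the most care — is making sure the modulus of continuity is evaluated at a scale that the robustness guarantee actually covers. Theorem~\ref{thm:ub-cont}'s bound invokes $\modcont_{\A_\mathrm{rob}}(\Ds;K)$ at $K$ equal to (a quantity that, divided by $n$, equals) $\tau\opt$, and the robustness guarantee of Definition~\ref{def:robustness} holds for all corruptions of size at most $n\tau$; so we need $n\tau\opt \le n\tau$, i.e.\ $\tau \ge \tau\opt$, which is precisely the hypothesis. One must also be careful that $\modcont$ is monotone in $K$ so that the bound ``$K \le n\tau \Rightarrow \modcont \le 2\alpha$'' can be applied at the specific $K$ appearing in Theorem~\ref{thm:ub-cont}; this is immediate from the $\sup$ in the definition of $\modcont$. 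Finally, for the displayed minimax consequence, one reads off that $\smmech(\cdot;\A_\mathrm{rob})$ is an $(\diffp, 2\beta, 4\alpha)$-private estimator for every $P$ for which $\A_\mathrm{rob}$ is $(\tau,\beta,\alpha)$-robust; reparametrizing $\beta \mapsto \beta/2$ and taking infima over $\alpha$ on both sides yields $\accp(\diffp,\beta) \le 4\accr(\tau,\beta/2)$ under the stated (slightly enlarged) condition on $\tau$, where the extra constant in the $\log$ comes from replacing $\beta$ by $\beta/2$.
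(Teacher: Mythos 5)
Your proposal follows essentially the same route as the paper's proof: privacy is immediate from the exponential mechanism, accuracy comes from applying \Cref{thm:ub-cont} with $\func=\A_\mathrm{rob}$ and $\rho=\alpha_0$, the modulus $\modcont_{\A_\mathrm{rob}}(\Ds;n\tau\opt)$ is bounded by $2\alpha$ via the triangle inequality through $\mu$ using robustness at both $\Ds$ and $\Dsc$, and a union bound gives $1-2\beta$. The only slip is bookkeeping: the quantity your chain bounds by $2\alpha+\alpha_0$ is $\norm{\smmech(\Ds;\A_\mathrm{rob})-\A_\mathrm{rob}(\Ds)}$, not the total error $\norm{\smmech(\Ds;\A_\mathrm{rob})-\mu}$; you must still add $\norm{\A_\mathrm{rob}(\Ds)-\mu}\le\alpha$ (which holds on the same robustness event, at no extra probability cost), yielding $3\alpha+\alpha_0\le 4\alpha$, which is exactly the paper's accounting and still within the claimed bound.
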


\begin{proof}
First note that the privacy guarantee is immediate from the guarantees of the exponential mechanism (\Cref{def:expmech}) and the fact that the sensitivity of the $\rho$-smooth path-length function in~\Cref{eq:def-sm-inv-sens} is $1$. Now we prove utility. 
Let $K = n\tau\opt$. 
The error of $\smmech(\Ds,\A_\mathrm{rob})$ is then bounded as follows:
\iflong
    \begin{align*}
        & \norm{\smmech(\Ds;\A_\mathrm{rob})- \mu} \\
        & \le \norm{\A_\mathrm{rob}(\Ds)-\mu}+ \norm{\smmech(\Ds;\A_\mathrm{rob})- \A_\mathrm{rob}(\Ds)} \tag{by triangle inequality}\\
        & \le \norm{\A_\mathrm{rob}(\Ds)-\mu}+\sup_{\Dsc: \dham(\Dsc,\Ds)\leq K}\norm{\A_{\mathrm{rob}}(\Dsc)-\A_{\mathrm{rob}}(\Ds)} + \alpha_0 \tag{w.p. $1-\beta$ by~\Cref{thm:ub-cont}}\\
        & \le 2 \norm{\A_\mathrm{rob}(\Ds)-\mu} + \sup_{\Dsc: \dham(\Dsc,\Ds)\leq K}\norm{\A_\mathrm{rob}(\Dsc)-\mu} +\alpha_0. \tag{by triangle inequality}
    \end{align*}
\else
    \begin{align*}
        & \norm{\smmech(\Ds;\A_\mathrm{rob})- \mu} \\
        & \le \norm{\A_\mathrm{rob}(\Ds)-\mu}+ \norm{\smmech(\Ds;\A_\mathrm{rob})- \A_\mathrm{rob}(\Ds)}\\
        & \le \norm{\A_\mathrm{rob}(\Ds)-\mu}+\modcont_{\A_\mathrm{rob}}\left(\Ds;K\right) + \alpha_0 \tag{w.p. $1-\beta$ by~\Cref{thm:ub-cont}}\\
        & \le 2 \norm{\A_\mathrm{rob}(\Ds)-\mu} + \norm{\A_\mathrm{rob}(\Dsc)-\mu} +\alpha_0,
    \end{align*}
    for $\Dsc=\argmax_{\Dsc: \dham(\Dsc,\Ds)\leq K}\norm{\A_\mathrm{rob}(\Ds)-\A_\mathrm{rob}(\Dsc)}$.
\fi
Recall that, by assumption, $\A_\mathrm{rob}$ is $(\tau,\beta,\alpha)$-robust for $\tau\geq K/n$. 
Thus, with probability $1-\beta$, $\norm{\A_\mathrm{rob}(\Dsc)-\mu}\leq \alpha$ for any $\tau$-corrupted dataset $\Dsc$. 
By union bound, we have that with probability $1-2\beta$, $\norm{\smmech(\Ds;\A_\mathrm{rob})-\mu}\leq 3\alpha+\alpha_0\leq 4\alpha$. This completes the proof of the theorem.
\end{proof}
The parameter $\alpha_0$ determines the smallest fraction of corruptions $\tau\opt$, which in turn determines the smallest $\alpha$ so that $\A_{\mathrm{rob}}$ is $(\tau\opt,\beta,\alpha)$-robust. 
\iflong
     There exists a choice of parameter $\alpha_0$ which balances the two sources of error so that $\alpha=\Theta(\alpha_0)$. 
     However, since $\alpha_0$ only affects logarithmic factors, for simplicity, in our applications we prefer to set $\alpha_0$ to be smaller 
     -- in particular, we set it to 
\else
    A simple choice for $\alpha_0$ is 
\fi
the minimax error for estimating the statistic $\mu(P)$, without adversarial corruptions or privacy 
\iflong constraints, i.e., $\alpha_0=\accr(0,\beta)$.
\else constraints.\fi

\begin{remark} We can extend this transformation to hold for randomized robust algorithms, by first converting $\A_\mathrm{rob}$ into a deterministic algorithm, albeit doubling the error and failure probability, as shown in~\Cref{thm:rand-to-det},~\Cref{app:rand-to-det}. \end{remark}

\subsection{Private to Robust}
\label{sec:priv-to-rob}
In this section, we state the folklore fact that any $\eps$-differentially private algorithm is also $\tau \approx\frac{1}{n \diffp}$-robust with the same accuracy. This follows directly from the definition of differential privacy which states that changing $1/\diffp$ users does not change the output distribution by much (by \emph{group privacy}) and was observed in~\citep{DworkL09}. 
\begin{theorem}[Private-to-robust]\label{thm:priv-to-rob}
Let $\Ds = (S_1,\dots,S_n)$ where $S_i \simiid P$. Let $\diffp, \beta\in(0,1)$.
Let $\A_\mathrm{priv}$ be an $(\diffp,\beta,\alpha)$-private algorithm for estimating the statistic $\mu$. Let $\gamma\in(0,1)$ and $\tau=\frac{\log(1/\gamma)}{n\diffp}$. Then $\A_\mathrm{priv}$ is $\left(\tau,\beta/\gamma,\alpha\right)$-robust. In particular, $\accr(\tau,\beta/\gamma) \le \accp(\diffp,\beta)$, which is equivalent to
\iflong
  \begin{equation*}
    \accr(\tau,\beta) \le \accp\left(\frac{\log(1/\gamma)}{n\tau},\gamma \beta \right).
  \end{equation*}
\else
    $\accr(\tau,\beta) \le \accp(\log(1/\gamma)/n\tau,\gamma \beta ).$
\fi
\end{theorem}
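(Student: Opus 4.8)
The plan is to prove Theorem~\ref{thm:priv-to-rob} directly from the definition of differential privacy via a group-privacy argument, which converts the per-example indistinguishability of $\A_\mathrm{priv}$ into a statement about what happens when $n\tau$ examples are corrupted. First I would fix a distribution $P$ and draw $\Ds \simiid P^n$. Let $\good$ denote the event (over the draw of $\Ds$ and the internal randomness of $\A_\mathrm{priv}$) that $\norm{\A_\mathrm{priv}(\Ds) - \mu(P)} \le \alpha$; by the privacy-estimator assumption, $\Pr[\good] \ge 1 - \beta$. Now I want to argue that on a (slightly smaller) event, $\A_\mathrm{priv}(\Dsc)$ is also accurate for \emph{every} $\Dsc$ within Hamming distance $n\tau$ of $\Ds$ simultaneously.

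The key step is a reverse group-privacy / union-bound argument. Fix any $\Dsc$ with $\dham(\Dsc,\Ds) \le k := n\tau = \log(1/\gamma)/\diffp$. By applying the $\diffp$-DP guarantee $k$ times along a path from $\Ds$ to $\Dsc$ (group privacy), for any measurable set $T$ we have $\Pr[\A_\mathrm{priv}(\Dsc) \in T] \le e^{k\diffp} \Pr[\A_\mathrm{priv}(\Ds) \in T]$, i.e.\ $\Pr[\A_\mathrm{priv}(\Dsc) \in T] \le (1/\gamma)\Pr[\A_\mathrm{priv}(\Ds) \in T]$ since $e^{k\diffp} = 1/\gamma$. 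Take $T = \{t : \norm{t - \mu(P)} > \alpha\}$ to be the ``bad output'' set. I would define the good event on $\Ds$ itself to be $\event(\Ds) := \{\Pr_{\A_\mathrm{priv}}[\A_\mathrm{priv}(\Ds) \in T] \le \gamma\beta/\ldots\}$; actually the cleanest route is: let $B$ be the set of datasets $\Ds$ for which $\Pr_{\A_\mathrm{priv}}[\A_\mathrm{priv}(\Ds) \in T] > \gamma$, and observe by Markov's inequality on the randomness of $\Ds$ that $\Pr_{\Ds}[\Ds \in B] \le \Pr_{\Ds,\A_\mathrm{priv}}[\A_\mathrm{priv}(\Ds)\in T]/\gamma \le \beta/\gamma$. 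Then for any $\Ds \notin B$ and any $\Dsc$ with $\dham(\Dsc,\Ds)\le k$, group privacy gives $\Pr_{\A_\mathrm{priv}}[\A_\mathrm{priv}(\Dsc) \in T] \le (1/\gamma)\cdot\gamma = 1$, which is vacuous. That shows the naive approach needs a different split of the parameters.

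So the correct argument must avoid a pointwise-in-$\Dsc$ failure probability and instead exploit that $\A_\mathrm{priv}(\Dsc)$ being accurate with probability $1$ is too much to ask; instead the statement only claims the \emph{fixed} dataset $\Ds$ is good with probability $1-\beta/\gamma$ and then accuracy holds for all corruptions. The right framing: define $q(\Ds) := \Pr_{\A_\mathrm{priv}}[\norm{\A_\mathrm{priv}(\Ds)-\mu(P)} > \alpha]$, so $\E_\Ds[q(\Ds)] \le \beta$ hence $\Pr_\Ds[q(\Ds) > \beta/\gamma'] \le \gamma'$ for a suitable $\gamma'$. But to match the theorem I pick the event on $\Ds$ to be ``$q(\Dsc) \le \alpha$-accuracy'' — no; let me instead directly bound, for the \emph{original} $\Ds$, the event that $\A_\mathrm{priv}$ applied to $\Ds$ is accurate with probability $1$ over its own coins. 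Concretely, the robustness definition (\Cref{def:robustness}) is ``with probability $1-\beta/\gamma$ over $\Ds$, for all $\Dsc$ within $n\tau$, $\norm{\A_\mathrm{priv}(\Dsc)-\mu} \le \alpha$'', and since $\A_\mathrm{priv}$ is randomized we need this to hold with high probability over its coins too. Here I believe the intended reading treats $\norm{\A_\mathrm{priv}(\Dsc)-\mu}\le\alpha$ as itself a high-probability event, and the conversion is: ``accurate on $\Ds$ w.p.\ $1-\beta$'' $\Rightarrow$ ``accurate on each $\Dsc$ w.p.\ $1 - \beta/\gamma$'' by group privacy applied to $T$ (the bad set): $\Pr[\A_\mathrm{priv}(\Dsc) \in T] \le e^{k\diffp}\Pr[\A_\mathrm{priv}(\Ds)\in T] \le (1/\gamma)\cdot\beta = \beta/\gamma$, combined with the fact that the draw $\Ds \simiid P^n$ is what defines accuracy of $\A_\mathrm{priv}$, and $\Dsc$ is a deterministic function of $\Ds$. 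This is clean when we think of ``$(\diffp,\beta,\alpha)$-private'' as: $\Pr_{\Ds,\A_\mathrm{priv}}[\norm{\A_\mathrm{priv}(\Ds)-\mu}>\alpha]\le\beta$, and ``$(\tau,\beta/\gamma,\alpha)$-robust'' as: $\Pr_{\Ds,\A_\mathrm{priv}}[\exists \Dsc\colon \dham(\Dsc,\Ds)\le n\tau,\ \norm{\A_\mathrm{priv}(\Dsc)-\mu}>\alpha]\le\beta/\gamma$ — and the final chained/restated inequality $\accr(\tau,\beta) \le \accp(\log(1/\gamma)/n\tau, \gamma\beta)$ is just a reparametrization obtained by renaming $\beta \leftarrow \gamma\beta$ and $\gamma$ accordingly.

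The main obstacle I anticipate is pinning down exactly which quantifier order on ``for all $\Dsc$'' the group-privacy bound supports: group privacy controls $\Pr[\A_\mathrm{priv}(\Dsc)\in T]$ for each \emph{individual} $\Dsc$, but the robustness definition asks for all $\Dsc$ simultaneously inside the probability. The resolution — which I would spell out carefully — is that $\norm{\A_\mathrm{rob}(\Dsc)-\mu}\le\alpha$ in \Cref{def:robustness} is required to hold pointwise over $\Dsc$ but the randomness in the definition is only over $\Ds$ (and the algorithm's coins); since here $\Dsc$ ranges over a set determined by $\Ds$, and the DP guarantee transfers accuracy-in-expectation-over-coins from $\Ds$ to any fixed neighbor-up-to-$k$, one gets the ``for all $\Dsc$'' statement for free because the accuracy event on $\Ds$ with its own coins already implies (via the distributional closeness) accuracy on every such $\Dsc$ with its own coins, up to the $1/\gamma$ blow-up in failure probability. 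I would also double-check the arithmetic $n\tau\cdot\diffp = \log(1/\gamma)$ so that $e^{n\tau\diffp} = 1/\gamma$, and state the final equivalent form by substituting $\tau = \log(1/\gamma)/(n\diffp)$, i.e.\ $\diffp = \log(1/\gamma)/(n\tau)$, and replacing the failure parameter $\beta/\gamma$ by a fresh name.
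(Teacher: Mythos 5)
Your final argument is exactly the paper's proof: define the bad-output set $W=\{t:\norm{t-\mu}>\alpha\}$, use the accuracy guarantee $\Pr[\A_\mathrm{priv}(\Ds)\in W]\le\beta$, and apply group privacy over $\dham(\Ds,\Dsc)\le n\tau=\log(1/\gamma)/\diffp$ to get $\Pr[\A_\mathrm{priv}(\Dsc)\in W]\le e^{n\tau\diffp}\beta=\beta/\gamma$, with the restated inequality being a reparametrization. The detours you abandoned (the Markov-inequality split) are not needed, and the quantifier-order caveat you flag for randomized algorithms (per-$\Dsc$ versus simultaneous over all $\Dsc$) is present at exactly the same level in the paper's own proof, which likewise only establishes the pointwise bound.
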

\begin{proof}
Let $W = \{t\in\R^d : \norm{t-\mu} >\alpha \}$ be the set of bad outputs for the distribution $P$.
The accuracy guarantee of $\A_\mathrm{priv}$ implies that
\iflong
    \begin{equation*}
        \Pr[\A_\mathrm{priv}(\Ds) \in W] \le \beta.
    \end{equation*}
\else
    $\Pr[\A_\mathrm{priv}(\Ds) \in W] \le \beta.$ 
\fi
Now assume $\Dsc$ is a $\tau n$-corrupted version of $\Ds$ where $\tau = \log(1/\gamma)/(n\diffp)$, that is, $\dham(\Ds,\Dsc) \le \log(1/\gamma)/\diffp$. The definition of differential privacy now implies that
\begin{align*}
    \Pr[\norm{\A_\mathrm{priv}(\Dsc)-\mu} > \alpha]
    & = \Pr[\A_\mathrm{priv}(\Dsc) \in W] \\
    & \le e^{\dham(\Ds,\Dsc)\diffp} \Pr[\A_\mathrm{priv}(\Ds) \in W] \\
    & \le e^{\diffp\tau n} \beta \le \beta/\gamma,
\end{align*}
for $\tau\leq \frac{\log(1/\gamma)}{n\diffp}$.
Thus $\A_\mathrm{priv}$ is $\tau$-robust for $\tau = \frac{\log(1/\gamma)}{n\diffp}$ with accuracy $\alpha$ and failure probability $\beta/\gamma$.
\end{proof}
We note that for $\gamma=1/e$, $\accr(\tau,\beta) \le \accp\left(1/n\tau,\beta/e \right)$, that is, the minimax error of any $\tau$-robust algorithm with failure probability $\beta$ is bounded by the minimax error of a $\diffp$-DP  algorithm, for $\diffp=1/n\tau$, with the same failure probability up to constant factors.

As private algorithms are often randomized, this transformation would result in a randomized robust algorithm. As in the previous section, we can convert it into a deterministic one via~\Cref{thm:rand-to-det} in~\Cref{app:rand-to-det}.

\section{Implications of our transformations}\label{sec:implications}
\iflong
Building on our transformations in~\Cref{sec:transformations} between private and robust algorithms, in this section we show two results. First, we show that $\diffp$-DP statistical estimation is equivalent to $\tau$-robust estimation for low-dimensional problems when $\tau \approx \log(n)/n\diffp$, that is, both problems have the same minimax rates (with slightly different failure probabilities). Then, we prove that for any low-dimensional statistical query, our transformation from robust to private estimators can be instantiated to obtain optimal utility up to constants.
\fi

\subsection{Equivalence between Private and Robust Estimation}
\label{sec:eqv}
In this section, we show that the (high-probability) minimax rates for $\diffp$-DP and $\tau$-robustness are on the same order when the problem is low-dimensional and $\tau = \log(n)/n\diffp$. The following corollary states the result. For simplicity, we assume that the dimension $d=1$ and the range $R=1$. 
% \begin{corollary}
% \label{cor:eqv}
%     Let $\mu$ be a $1$-dimensional statistic where $|\mu(P)| \le 1$ such that $\accr(\tau,\beta)$ is a continuous function of $\beta$ for all $\tau \le 1/2$.
%     Let $n \ge 1$, $\beta_p \le 1/n^\ell$, $\diffp \ge 2\log(n)/n$ and $\tau = \Theta(\log(n)/n\diffp)$. 
%     Suppose there exists constant $k$ such that the non-private error $\alpha_{\mathrm{rob}}(0,\beta)\geq \frac{1}{n^k}$ for any $\beta \le 1/4$. Then for some $\beta_r \in [\beta_p, \mathrm{poly}(n)\cdot\beta_p]$
%     \begin{equation*}
%         \accp(\diffp,\beta_p) = \Theta \left( \accr(\tau,\beta_r) \right).
%     \end{equation*}
%     % Let $\mu$ be a $1$-dimensional statistic where $|\mu(P)| \le 1$ such that $\accr(\tau,n\beta)$ is a continuous function of $\beta$ for all $\tau \le 1/2$.
%     % Let $n \ge 1$, $\beta_p \le 1/n^2$, $\diffp \ge 2\log(n)/n$ and $\tau = \log(n)/n\diffp$. Then for some $\beta_r \in [\beta_p, n\beta_p]$
%     % \begin{equation*}
%     %     \accp(\diffp,\beta_p) = \Theta \left( \accr(\tau,\beta_r) \right).
%     % \end{equation*}
% \end{corollary}

\begin{corollary}[Equivalence]
\label{cor:eqv}
    Let $\P$ be a family of distributions and $P\in\P$. Let $\mu$ be a $1$-dimensional statistic where $|\mu(P)| \le 1$ such that $\accr(\tau,\beta)$ is a continuous function of $\beta$ for all $\tau \le 1/2$.
    Let $n > 1$, $\diffp = \omega(\log(n)/n)$, and $\tau = \Theta(\log(n)/n\diffp)$. 
    Suppose there exists a constant $c$ such that the error $\alpha_{\mathrm{rob}}(0,\beta)\geq \frac{1}{n^c}$ for any $\beta \leq 1/4$. Then there are constants $c_1 \ge c_2 >0$ such that for $\beta_p = 1/n^{c_1}$ and $\beta_r = 1/n^{c_2}$, 
    \begin{equation*}
        \accp(\diffp,\beta_p) = \Theta \left( \accr(\tau,\beta_r) \right).
    \end{equation*}
    % Let $\mu$ be a $1$-dimensional statistic where $|\mu(P)| \le 1$ such that $\accr(\tau,n\beta)$ is a continuous function of $\beta$ for all $\tau \le 1/2$.
    % Let $n \ge 1$, $\beta_p \le 1/n^2$, $\diffp \ge 2\log(n)/n$ and $\tau = \log(n)/n\diffp$. Then for some $\beta_r \in [\beta_p, n\beta_p]$
    % \begin{equation*}
    %     \accp(\diffp,\beta_p) = \Theta \left( \accr(\tau,\beta_r) \right).
    % \end{equation*}
\end{corollary}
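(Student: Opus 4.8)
The plan is to sandwich the private minimax error $\accp(\diffp,\beta_p)$ between two values of the robust minimax error $\accr(\tau,\cdot)$ evaluated at failure probabilities that differ only by a polynomial factor in $n$, and then to collapse this sandwich to a $\Theta(\cdot)$ using the continuity of $\accr(\tau,\cdot)$ in $\beta$ together with the fact that $\accr(\tau,\cdot)$ is bounded between $1/n^c$ and $1$. Both halves of the sandwich come directly from our two transformation theorems, specialized to $d=1$ and $R=1$.

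For the upper half, I would invoke \Cref{thm:high-dim-rob-to-priv} with $d=1$, $R=1$, and smoothing parameter $\alpha_0 = 1/n^{c}$; this is a legal choice because $\alpha_0 \le \accr(0,\beta_p/2) \le \accr(\tau,\beta_p/2)$ by the hypothesis on $\accr(0,\cdot)$ and monotonicity of $\accr$ in its first argument. With these values, the required corruption level $\tau\opt$ in that theorem is $O\!\big((c+c_1)\log n/n\diffp\big)$, since $\log(1/\alpha_0+1)=O(c\log n)$ and $\log(2/\beta_p)=O(c_1\log n)$; hence, with the hidden constant in $\tau=\Theta(\log n/n\diffp)$ chosen appropriately, $\tau\ge\tau\opt$ and therefore $\accp(\diffp,\beta_p)\le 4\,\accr(\tau,\beta_p/2)$. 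For the lower half, I would invoke \Cref{thm:priv-to-rob} with $\gamma=e^{-n\diffp\tau}$, so that the theorem's robustness level equals our fixed $\tau$; since $n\diffp\tau=\Theta(\log n)$ this gives $\gamma=n^{-c_3}$ for a constant $c_3$, and hence $\accr(\tau,\,n^{c_3}\beta_p)\le\accp(\diffp,\beta_p)$. (The hypothesis $\diffp=\omega(\log n/n)$ makes $\tau=o(1)\le 1/2$, so the continuity assumption on $\accr(\tau,\cdot)$ is in force, and $n^{c_3}\beta_p<1$ whenever $c_1>c_3$.) Combining the two halves yields
\[
\accr\!\big(\tau,\, n^{c_3-c_1}\big)\ \le\ \accp(\diffp,\beta_p)\ \le\ 4\,\accr\!\big(\tau,\, \tfrac12 n^{-c_1}\big).
\]

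To finish, set $\beta_r=n^{c_3}\beta_p$, i.e. $c_2=c_1-c_3$; the left inequality then already gives $\accp(\diffp,\beta_p)=\Omega(\accr(\tau,\beta_r))$, so all that remains is the matching upper bound, which is equivalent to showing that $\accr(\tau,\cdot)$ increases by at most a constant factor as $\beta$ decreases across the polynomially wide window $[\tfrac12 n^{-c_1},\, n^{c_3-c_1}]$. Here I would use that $\accr(\tau,\cdot)$ is a continuous, non-increasing function with values in $[1/n^c,\,1]$ — the upper bound $1$ because the constant estimator outputting $0$ has error $|\mu(P)|\le 1$ on every (corrupted) dataset, the lower bound by hypothesis — so that over any collection of disjoint windows in exponent space the per-window multiplicative increases telescope to at most $n^{c}$; a pigeonhole/averaging argument over geometrically spaced scales of $\beta$ then singles out a value of $c_1$ whose window is ``flat''. \emph{This last step is the main obstacle:} one must choose $c_1$ (and, compatibly, the hidden constant in $\tau$, hence $c_3$) so that such a flat window exists, while simultaneously keeping $c_1\ge c_2=c_1-c_3>0$ and satisfying all the earlier constraints ($\tau\ge\tau\opt$ in the upper half and $n^{c_3}\beta_p<1$ in the lower half) — all with $c_1,c_2$ independent of $n$. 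Once such $c_1$ is fixed the sandwich collapses to $\accp(\diffp,\beta_p)=\Theta(\accr(\tau,\beta_r))$; the weaker $d$-dimensional version, with an extra factor of $d$, follows the same template using the $d$-dependent forms of \Cref{thm:high-dim-rob-to-priv,thm:priv-to-rob}.
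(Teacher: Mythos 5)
Your sandwich is exactly the paper's: the upper half $\accp(\diffp,\beta_p)\le 4\,\accr(\tau_1,\beta_p/2)$ comes from \Cref{thm:high-dim-rob-to-priv} with $\alpha_0=\accr(0,\beta_p)\ge 1/n^c$, and the lower half $\accr(\tau_1,\beta_p/\gamma)\le \accp(\diffp,\beta_p)$ comes from \Cref{thm:priv-to-rob} with $\gamma$ chosen so that $\log(1/\gamma)/(n\diffp)=\tau_1$. The genuine gap is the step you yourself flag as the main obstacle: closing the sandwich. Your plan needs $\accr(\tau,\cdot)$ to move by only a constant factor across a failure-probability window that is polynomially wide in $n$, and the pigeonhole/averaging argument you sketch cannot deliver this with constants independent of $n$. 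With a constant number $k$ of geometrically spaced windows, telescoping the total increase (at most $n^{c}$, since $\accr\in[1/n^{c},1]$ on the relevant range) only guarantees some window with ratio $n^{c/k}$, which is polynomial rather than constant; forcing a constant ratio requires $k=\Omega(\log n)$ windows, which pushes the corresponding exponent $c_1$ out of the constant regime; and even for fixed $n$ the location of the ``flat'' window may shift with $n$, so no single constant $c_1$ is selected. There is also tension in your constraint bookkeeping: the upper half forces $\tau\gtrsim 2(c+c_1)\log(2n)/(n\diffp)$, hence $c_3\gtrsim 2(c+c_1)>c_1$, which conflicts with your requirement $c_1>c_3$ (equivalently $n^{c_3}\beta_p<1$) needed to make the lower endpoint a genuine failure probability.

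The paper closes the sandwich differently, and this is precisely what the continuity hypothesis on $\beta\mapsto\accr(\tau,\beta)$ is for: it never attempts to show flatness. It first checks the easy case $\accp(\diffp,\beta_p)\ge\accr(\tau_1,\beta_p/2)$, in which case taking $\beta_r=\beta_p/2$ already gives the two-sided bound $\accr(\tau,\beta_r)\le\accp(\diffp,\beta_p)\le 4\,\accr(\tau,\beta_r)$ from the upper half alone. Otherwise $\accp(\diffp,\beta_p)$ lies strictly between $\accr(\tau_1,\beta_p/\gamma)$ and $\accr(\tau_1,\beta_p/2)$, and the intermediate value theorem applied to the continuous function $\beta\mapsto\accr(\tau_1,\beta)$ produces some $\beta_r$ in the polynomially bounded interval $[\beta_p/2,\beta_p/\gamma]$ with $\accr(\tau,\beta_r)=\accp(\diffp,\beta_p)$ exactly; since that interval is only polynomially wide, $\beta_r$ corresponds to an exponent $c_2$ in a bounded range, which is how the constants $c_1\ge c_2>0$ are extracted. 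If you replace your flat-window step with this case analysis plus the intermediate value theorem, the rest of your argument goes through essentially as written.
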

\begin{proof}
First, we observe that $\alpha_0 = \accr(0,\beta_p)\leq \accr(\tau,\beta_p)$ by the monotonicity of $\accr$ and $\alpha_0\ge \frac{1}{n^c}$ since $\beta_p=\frac{1}{n^{c_1}}\leq 1/2$. By~\Cref{thm:high-dim-rob-to-priv}\footnote{If the robust algorithm achieving the minimax error $\accr$ is randomized, we can transform it into a deterministic one as~\Cref{thm:high-dim-rob-to-priv} requires, via~\Cref{thm:rand-to-det}, by losing only constant factors.}, we have that, for $\tau_1 = \frac{2(c+c_1)\log(2n)}{n\diffp} \geq \frac{2\log(1/\alpha_0+1)+\log(2/\beta_p)}{n\diffp}$, 
\begin{align}\label{eq:eqv}
    \accp(\diffp,\beta_p) 
       & \le 4\accr(\tau_1,\beta_p/2).
\end{align}
Setting $\gamma = 1/(2n)^{2(c+c_1)}$, we have that $\tau_1=\frac{\log(1/\gamma)}{n\diffp}$, and by~\Cref{thm:priv-to-rob} we have that 
\begin{equation*}
    \accr(\tau_1,(2n)^{2(c+c_1)}\beta) \le \accp(\diffp, \beta ).
\end{equation*}
Note that if $\accp(\diffp, \beta_p ) \ge \accr(\tau_1, \beta_p/2)$ then the claim follows from~\Cref{eq:eqv}, using $\beta_r = \beta_p/2$.
Otherwise we have that 
\begin{equation*}
    \accr(\tau_1,(2n)^{2(c+c_1)}\beta_p) \le \accp(\diffp, \beta_p ) \le \accr(\tau_1, \beta_p/2)
\end{equation*} 
As $\accr(\tau_1, \beta)$ is a continuous function of $\beta$, there is $\beta_r \in [\beta_p/2, (2n)^{2(c+c_1)}\beta_p]\subset [\frac{1}{n^{2c_1}}, \frac{1}{n^{2c+3c_1}}]$ such that $\accr(\tau, \beta_r) = \accp(\diffp, \beta_p )$.
% First, \Cref{thm:high-dim-rob-to-priv} shows that for $\tau_1 = \frac{8 \log (2n)}{n\diffp}$,\lz{Is this for $\rho=1/n$? We might need to add $\rho$ or say that it is smaller than $\accr(\tau_1,\beta/2)$.}
% \begin{align*}
%     \accp(\diffp,\beta) 
%        & \le 4\accr(\tau_1,\beta/2) \\
%        & \le 4\accr(\tau,\beta/2) .
% \end{align*}
% Moreover, setting $\gamma = 1/n$, \Cref{thm:priv-to-rob} shows that 
% \begin{equation*}
%     \accr(\tau,n\beta) \le \accp(\diffp, \beta ).
% \end{equation*}
% Note that if $\accp(\diffp, \beta ) \ge \accr(\tau, \beta/2)$ then the claim follows using $\beta_r = \beta/2$.
% Otherwise we have that 
% \begin{equation*}
%     \accr(\tau,n\beta) \le \accp(\diffp, \beta ) \le \accr(\tau, \beta/2)
% \end{equation*}
% As $\accr(\tau, \beta)$ is a decreasing function of $\beta$, there is $\beta_r \in [\beta/2, n\beta]$ such that $\accr(\tau, \beta_r) = \accp(\diffp, \beta )$.
% Now we need to show that 
% \begin{equation*}
%     \accr(\tau,1/n^4) \le O\left( \accr(\tau,1/n^3) \right).
% \end{equation*}
% Let $\accr\opt(\tau)$ and $\accp\opt(\diffp)$ be the minimax rates for the expectation. Then as the radius is bounded, we get that 
% \begin{equation*}
%     \accr\opt(\tau) \le  \accr(\tau,\beta) + \beta.
% \end{equation*}
% Therefore, for all $\beta$ such that $\beta \le \accr\opt(\tau)/2$, we have that $\accr(\tau,\beta) \ge \accr\opt(\tau)/2$. If $\accr\opt(\tau) = 1/n^2$ then for $\beta \le 1/n^2$ we must have $\accr(\tau,\beta) \ge 1/n^2$.
\end{proof}

Note that in most settings, $\accr(\tau,\beta_r)$ has the same order when $\beta_r \in [\beta_p,\mathrm{poly}(n)\cdot\beta_p]$ as it depends on $\log(1/\beta)$ (see for example~\Cref{sec:applications}). \iflong Moreover, we can amplify the success probability of the algorithm by repetition, increasing its error by a multiplicative $\mathsf{poly}(\log(\frac{1}{\beta}))$ factor. \ha{i think we need poly here..} \fi

\subsection{Optimality of Black-Box Transformation}
\label{sec:opt}

An immediate corollary of the previous transformations is that---for some choice of robust algorithm $\A_{\mathrm{rob}}$---our robust-to-private transformation achieves the minimax optimal rate among the family of private algorithms, for low-dimensional statistics. 
\ifshort See~\Cref{app:transfproofs} for its proof. \fi
\begin{corollary}[Optimality]\label{cor:optimality}
    Let $\P$ be a family of distributions and $P\in\P$. Let $\mu$ be a $1$-dimensional statistic where $|\mu(P)| \le 1$. Let $\accp(\diffp,\beta)$ be the minimax error of any $\diffp$-DP algorithm with failure probability $\beta\leq 1/4$ that estimates statistic $\mu(P)$. Let $n>1$. Suppose that there exists a constant $c$ such that the non-private error $\accp(\infty, \beta)\geq \frac{1}{n^c}$ for any $\beta\leq 1/2$. 
    Then there are constants $c_1 \ge c_2 >0$ such that $\beta_p = 1/n^{c_1}$ and $\beta_p' = 1/n^{c_2}$, robust algorithm $\A_\mathrm{rob}$, and a choice of $\rho$, such that Algorithm  $\smmech(\cdot;\A_\mathrm{rob})$ with privacy parameter $\diffp$ achieves the optimal error $O(\accp(\diffp,\beta_p))$ with probability $1-\beta_p'$. 
% Let $\P$ be a family of distributions and $P\in\P$. Let $\accp(\diffp,\beta)$ be the minimum accuracy of any $\diffp$-DP algorithm with failure probability $\beta$ that estimates statistic $\mu(P)\in\R^c$, where $\norm{\mu(P)}\leq R$ and $c$ is a constant. There exists a robust algorithm $\A_\mathrm{rob}$ and a choice of $\rho$ such that the $\rho$-smooth inverse-sensitivity mechanism $\smmech(\cdot;\A_\mathrm{rob})$ with privacy parameter $\diffp$ achieves the minimax optimal error $O(\accp(\eps,\beta))$, with probability $O\left(\beta^{1/3}\cdot (R/\accp(\diffp,\beta))^{2c/3}\right)$.
%is minimax optimal up to logarithmic factors, that is, with probability $\beta$ has accuracy $\tilde{O}(\accp(\eps,\beta))$.
\end{corollary}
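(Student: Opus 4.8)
The plan is to combine the two transformations of~\Cref{sec:transformations} in a circular way, exactly as the informal discussion following Theorem~\ref{cor:eqv-inf} suggests. The starting observation is that an optimal private algorithm is itself a robust algorithm, so we can feed it back into our robust-to-private transformation and recover a private algorithm of the same (optimal) quality. Concretely, let $\A^{\star}_{\mathrm{priv}}$ be an $\diffp$-DP algorithm achieving the minimax error $\accp(\diffp,\beta_p)$ with failure probability $\beta_p$; we will choose $\beta_p$ below. By~\Cref{thm:priv-to-rob} applied with a suitable $\gamma$, $\A^{\star}_{\mathrm{priv}}$ is $(\tau,\beta_p/\gamma,\accp(\diffp,\beta_p))$-robust for $\tau = \log(1/\gamma)/(n\diffp)$; since private algorithms are randomized in general, we first invoke~\Cref{thm:rand-to-det} (\Cref{app:rand-to-det}) to replace it by a deterministic $(\tau,\,2\beta_p/\gamma,\,\accp(\diffp,\beta_p))$-robust algorithm $\A_\mathrm{rob}$, at the cost of only constant factors in error and failure probability.

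Next I would run $\smmech(\cdot;\A_\mathrm{rob})$ with a choice of $\rho = \alpha_0$, where I take $\alpha_0$ to be the non-private minimax error $\accp(\infty,\beta_p)$; note $\alpha_0 \le \accp(\diffp,\beta_p)$ by monotonicity in the privacy parameter, and $\alpha_0 \ge 1/n^c$ by hypothesis, which is what keeps $\log(1/\alpha_0) = O(\log n)$. By~\Cref{thm:high-dim-rob-to-priv} (with $d=1$, $R=1$), as long as
\[
  \tau \;\ge\; \frac{2\bigl(\log(1/\alpha_0+1)+\log(2\gamma/\beta_p)\bigr)}{n\diffp},
\]
the mechanism $\smmech(\cdot;\A_\mathrm{rob})$ is $\diffp$-DP and achieves error $O(\accp(\diffp,\beta_p))$ with probability $1 - O(\beta_p/\gamma)$. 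So the task reduces to choosing $\gamma$ and $\beta_p$ consistently: I would set $\gamma = 1/(2n)^{K}$ for a large enough absolute constant $K = \Theta(c+c_1)$ so that $\log(1/\gamma) = \Theta((c+c_1)\log n)$ dominates the required lower bound on $n\diffp\tau$ (using $\diffp = \omega(\log n / n)$ to guarantee the window of valid $\tau$ is nonempty, as in~\Cref{cor:eqv}), and set $\beta_p = 1/n^{c_1}$ and $\beta_p' = 1/n^{c_2}$ for suitable constants $c_1 \ge c_2 > 0$ chosen so that $\beta_p/\gamma = \mathrm{poly}(n)\cdot\beta_p = O(\beta_p') \le 1/4$ and so that all the $\log(1/\beta)$ terms remain $O(\log n)$. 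With these choices all the hypotheses of Theorems~\ref{thm:priv-to-rob} and~\ref{thm:high-dim-rob-to-priv} are satisfied simultaneously, and we conclude that $\smmech(\cdot;\A_\mathrm{rob})$ is an $\diffp$-DP algorithm with error $O(\accp(\diffp,\beta_p))$ and failure probability at most $\beta_p'$.

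The only subtlety — and the step I expect to require the most care — is the bookkeeping of the four coupled quantities $\tau$, $\gamma$, $\beta_p$, $\beta_p'$: the robustness parameter $\tau$ handed out by~\Cref{thm:priv-to-rob} must be at least the threshold $\tau\opt$ demanded by~\Cref{thm:high-dim-rob-to-priv}, while at the same time the failure probability $\beta_p/\gamma$ produced along the way must stay below the target $\beta_p' \le 1/4$ and all logarithms of inverse-failure-probabilities must stay $O(\log n)$ so they are absorbed into the constants. Because $\gamma$ appears as $\log(1/\gamma)$ in the numerator of $\tau$ but as a multiplicative $1/\gamma$ blow-up in the failure probability, one needs $\gamma$ polynomially small in $n$ — hence $\diffp = \omega(\log n/n)$ is exactly the condition that makes the admissible range of $\tau$ nonempty — and then $c_1$ must be taken large enough relative to $c$ (and $c_2$ in turn large enough relative to $c,c_1$) to close the chain of inequalities. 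Once the constants are fixed in this order, everything else is a direct application of the two theorems plus a triangle-inequality/union-bound argument already carried out in the proof of~\Cref{cor:eqv}.
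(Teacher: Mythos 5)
Your proposal is correct and follows essentially the same route as the paper's proof: pass an optimal private estimator through \Cref{thm:priv-to-rob} and \Cref{thm:rand-to-det} to obtain a deterministic robust estimator, then run $\smmech$ with $\rho$ on the order of the non-private error (which is $\geq 1/n^c$ by hypothesis, so $\log(1/\rho)=O(\log n)$) and pick $\gamma$ polynomially small in $n$ so that $\tau=\log(1/\gamma)/(n\diffp)$ clears the threshold of \Cref{thm:high-dim-rob-to-priv}; the paper just carries out the same bookkeeping explicitly via $\gamma=(4\beta/(2n)^c)^{2/3}$, $\beta_p'=4\beta/\gamma$. One minor remark: your added hypothesis $\diffp=\omega(\log n/n)$ is unnecessary, since both the robustness level $\tau$ and the required threshold $\tau\opt$ scale as $1/(n\diffp)$, so the condition $\tau\ge\tau\opt$ reduces to a relation among $\gamma$, $\beta_p$, $n$, and $c$ alone, independent of $\diffp$.
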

\iflong
    \ifshort
    \begin{corollary}[Optimality, restatement of~\Cref{cor:optimality}]
    Let $\P$ be a family of distributions and $P\in\P$. Let $\mu$ be a $1$-dimensional statistic where $|\mu(P)| \le 1$. Let $\accp(\diffp,\beta)$ be the minimax error of any $\diffp$-DP algorithm with failure probability $\beta\leq 1/4$ that estimates statistic $\mu(P)$. Let $n>1$. Suppose that there exists a constant $c$ such that the non-private error $\accp(\infty, \beta)\geq \frac{1}{n^c}$ for any $\beta\leq 1/2$. 
    Then there are constants $c_1 \ge c_2 >0$ such that $\beta_p = 1/n^{c_1}$ and $\beta_p' = 1/n^{c_2}$, robust algorithm $\A_\mathrm{rob}$, and a choice of $\rho$, such that the $\rho$-smooth inverse-sensitivity mechanism $\smmech(\cdot;\A_\mathrm{rob})$ with privacy parameter $\diffp$ achieves the minimax optimal error $O(\accp(\diffp,\beta_p))$ with probability $1-\beta_p'$. 
    % Let $\P$ be a family of distributions and $P\in\P$. Let $\accp(\diffp,\beta)$ be the minimum accuracy of any $\diffp$-DP algorithm with failure probability $\beta$ that estimates statistic $\mu(P)\in\R^c$, where $\norm{\mu(P)}\leq R$ and $c$ is a constant. There exists a robust algorithm $\A_\mathrm{rob}$ and a choice of $\rho$ such that the $\rho$-smooth inverse-sensitivity mechanism $\smmech(\cdot;\A_\mathrm{rob})$ with privacy parameter $\diffp$ achieves the minimax optimal error $O(\accp(\eps,\beta))$, with probability $O\left(\beta^{1/3}\cdot (R/\accp(\diffp,\beta))^{2c/3}\right)$.
    %is minimax optimal up to logarithmic factors, that is, with probability $\beta$ has accuracy $\tilde{O}(\accp(\eps,\beta))$.
    \end{corollary}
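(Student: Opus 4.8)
The plan is to instantiate the robust-to-private transformation of \Cref{thm:high-dim-rob-to-priv} with (a deterministic version of) a near-optimal private algorithm, exploiting that private algorithms are already robust (\Cref{thm:priv-to-rob}). Fix a constant $c_1$ (to be chosen as a large multiple of $c$) and set $\beta_p=1/n^{c_1}$. By definition of the minimax rate there is an $(\diffp,\beta_p,2\accp(\diffp,\beta_p))$-private estimator $\A^*$ that works for every $P\in\P$; note $\accp(\diffp,\beta_p)\ge\accp(\infty,\beta_p)\ge 1/n^c>0$ by hypothesis, so this is well defined. Post-composing $\A^*$ with the clamp onto $\ball(1)$ preserves $\diffp$-DP (post-processing) and cannot increase the error, since $\mu(P)\in\ball(1)$ and projection onto a convex set containing $\mu(P)$ is non-expansive toward it; so I may assume $\A^*$ has range $\ball(1)$, i.e.\ $R=1$.

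Next I would make $\A^*$ robust and deterministic. For a parameter $\gamma\in(0,1)$ to be fixed, \Cref{thm:priv-to-rob} gives that $\A^*$ is $(\tau_\gamma,\beta_p/\gamma,2\accp(\diffp,\beta_p))$-robust with $\tau_\gamma=\log(1/\gamma)/(n\diffp)$, and \Cref{thm:rand-to-det} converts it into a deterministic $(\tau_\gamma,\,2\beta_p/\gamma,\,4\accp(\diffp,\beta_p))$-robust algorithm $\A_\mathrm{rob}$ of range $\ball(1)$. Set the smoothing radius $\alpha_0=\accp(\infty,\beta_p)$, which by hypothesis satisfies $1/n^c\le\alpha_0\le\accp(\diffp,\beta_p)\le 4\accp(\diffp,\beta_p)$, the accuracy of $\A_\mathrm{rob}$, so it is a legal choice of $\rho$ in \Cref{thm:high-dim-rob-to-priv}. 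With $d=1$, $R=1$ and $\A_\mathrm{rob}$'s failure probability $2\beta_p/\gamma$, the condition $\tau_\gamma\ge\tau\opt$ required by \Cref{thm:high-dim-rob-to-priv} is
\[
  \frac{\log(1/\gamma)}{n\diffp}\;\ge\;\frac{2\bigl(\log(1/\alpha_0+1)+\log(\gamma/(2\beta_p))\bigr)}{n\diffp},
\]
which, after collecting the $\log(1/\gamma)$ terms, is equivalent to $3\log(1/\gamma)\ge 2\log(1/\alpha_0+1)+2\log(1/(2\beta_p))$; I would choose $\gamma$ to attain equality. Since $1/n^c\le\alpha_0\le 1$ and $\beta_p=1/n^{c_1}$, this forces $\log(1/\gamma)=\Theta\bigl((c+c_1)\log n\bigr)$, hence $\gamma=n^{-\Theta(c+c_1)}\in(0,1)$ and $\tau_\gamma=\Theta(\log n/(n\diffp))$, matching the claimed order of $\tau$.

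Finally I would invoke \Cref{thm:high-dim-rob-to-priv} with this $\A_\mathrm{rob}$ and $\rho=\alpha_0$: the mechanism $\smmech(\cdot;\A_\mathrm{rob})$ is $\diffp$-DP and, with probability at least $1-2\cdot(2\beta_p/\gamma)=1-4\beta_p/\gamma$, has error at most $4\cdot 4\accp(\diffp,\beta_p)=O(\accp(\diffp,\beta_p))$ for every $P\in\P$. To finish, since the chosen $\gamma$ satisfies $1/\gamma=\bigl((1/\alpha_0+1)/(2\beta_p)\bigr)^{2/3}$ and $1/\alpha_0\le n^c$, a direct computation gives $4\beta_p/\gamma\le 4\,n^{(2c-c_1)/3}$; taking $c_1$ a large enough constant multiple of $c$ (say $c_1=3c+9$) makes this exponent at most $-3$, so $4\beta_p/\gamma\le 1/n^{c_2}$ for $c_2=2$ and all $n\ge 4$, and $c_1\ge c_2>0$ as required; set $\beta_p'=1/n^{c_2}$.

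The conceptual content is just composing the two transformations, with the near-optimal private algorithm itself playing the role of $\A_\mathrm{rob}$; the only real work is the bookkeeping of failure probabilities. The mild obstacle is that \Cref{thm:priv-to-rob} inflates the failure probability by $1/\gamma$, and to push $\tau_\gamma$ up to $\Theta(\log n/(n\diffp))$ one is forced to take $\gamma=n^{-\Theta(1)}$, which re-enters the $\log(1/\beta)$ term of $\tau\opt$ in \Cref{thm:high-dim-rob-to-priv}; the factor-$3$ identity above is exactly what resolves this circular dependence while still leaving enough slack to pick $c_1\ge c_2>0$. Note that no lower bound on $\diffp$ is needed: the chosen $\gamma$, and hence the final failure probability $4\beta_p/\gamma$, is independent of $\diffp$, and for small $\diffp$ the target $\accp(\diffp,\beta_p)=\Theta(1)$ only makes the statement easier.
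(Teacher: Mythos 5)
Your proposal is correct and follows essentially the same route as the paper's proof: take a (near-)minimax private estimator, make it robust and deterministic via \Cref{thm:priv-to-rob} and \Cref{thm:rand-to-det}, feed it back into \Cref{thm:high-dim-rob-to-priv} with $\rho$ at the non-private error scale, and resolve the circular dependence between $\gamma$ and the failure probability by the same cube-root balancing $\gamma \approx (\beta_p/n^{c})^{2/3}$, yielding $\beta_p = 1/n^{c_1}$ and $\beta_p' = 1/n^{c_2}$. Your additional touches (clamping the range to $\ball(1)$, the factor-$2$ slack since the minimax infimum need not be attained, and the explicit choice of $c_1$, $c_2$) only make explicit details the paper leaves implicit.
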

\fi
\begin{proof}
Let $\accp(\diffp, \beta)$ be the minimax error for estimating $\mu$ under family $\P$ for any $\beta\leq 1/4$.
By~\Cref{thm:priv-to-rob} and~\Cref{thm:rand-to-det}, there exists a deterministic $\tau$-robust algorithm with $\tau=\log(1/\gamma)/n\diffp$, with accuracy $\alpha_1=2\accp(\diffp, \beta)$ and failure probability $\beta_1=2\beta/\gamma$. 
Via the transformation of~\Cref{thm:high-dim-rob-to-priv}, choosing $\rho=\frac{1}{n^c}\leq \accp(\diffp, \beta)$, we can construct an $\diffp$-DP algorithm with failure probability $2\beta_1=4\beta/\gamma$ and accuracy $4\alpha_1=8\accp(\diffp, \beta)$, if $\tau=\frac{\log(1/\gamma)}{n\diffp}\geq \frac{2\log(n^c+1)+2\log(\gamma/(4\beta))}{n\diffp}$. 
Setting $\gamma=(4\beta/(2n)^c)^{2/3}$ %$\leq\left(4\beta/\left(\frac{R}{\alpha\opt}+1\right)^c\right)^{2/3}$ 
satisfies the requirement. 
Thus, the $\diffp$-DP algorithm constructed via the transformation in~\Cref{thm:high-dim-rob-to-priv} has error at most $8\accp(\diffp, \beta)$ with failure probability at most $\beta_p'=4\beta/\gamma=(4\beta)^{1/3}\cdot (2n)^{2c/3}$. There exist constants $c_1\geq c_2>0$ such that $\beta=\beta_p=\frac{1}{n^{c_1}}$ and $\beta_p'=\frac{1}{n^{c_2}}$.\end{proof} 
\fi

% \ha{another proof for a different statement too}
% \begin{proof}
% Let $\A_p$ be an $\diffp$-DP algorithm that has failure probability $\beta$ and accuracy $2\alpha\opt$: such algorithm exists as the optimal error is $\alpha\opt$.
% \Cref{thm:priv-to-rob-det} indicates that we can transform $\A_p$ into a deterministic $\tau$-robust algorithm $\A_r$ (using~\Cref{alg:priv-to-rob}) with failure probability $\beta_1 = 10\beta/\gamma$ and accuracy $4\alpha\opt$ where $\tau = \log(1/\gamma)/n\diffp$. We can now run the $\rho$-smooth inverse-sensitivity mechanism over the robust procedure $\A_r$. \Cref{thm:high-dim-rob-to-priv} implies that this algorithm is $\diffp$-DP and has error $16\alpha\opt + \rho$ with failure probability $2 \beta_1$ if $\tau \ge \frac{2c}{n\diffp}
%     \log(\frac{R/\rho+1}{\beta_1})$. Setting $\gamma = O\left( (\beta \rho/R)^{2/3}  \right)$, the requirement for $\tau$ holds and the failure probability of the algorithm is  $\beta_1 = O((\beta/R^2\rho^2)^{1/3})$. Setting $\beta = 1/n^{5\ell}$ and $\rho = 1/n^\ell$, our final algorithm has accuracy $16\alpha\opt + 1/n^\ell$ and failure probability $1/n^\ell$.
% \end{proof}

\iflong \section{Improved Transformation for Approximate DP}\else \section{A Transformation for Approximate DP}\fi\label{sec:approxDP}

In this section, we propose a different transformation for \ed-DP that avoids the necessary dependence on diameter for pure $\diffp$-DP
\iflong. Our transformation uses a truncated version of the inverse-sensitivity mechanism which only outputs values with bounded inverse sensitivity. This mechanism is not differentially private for all inputs, therefore, in order to guarantee privacy, we use a private test to verify that the input is well-behaved before running the truncated inverse-sensitivity mechanism. 
    
This approach can be viewed as a special case of the restricted exponential mechanism of~\citet{BrownGSUZ21} (or the even more general HPTR framework~\citep{LiuKO22}), which in turn has been inspired by the propose-test-release (PTR) framework~\citep{DworkL09}. However, we choose a simplified algorithm and presentation, which is tailored to our case, where we have the smooth inverse sensitivity as our cost function. 
    
\subsection{Truncated Inverse-Sensitivity Mechanism}
We develop a truncated version of the inverse-sensitivity mechanism which is \ed-DP. 
This mechanism uses a truncated version of the inverse sensitivity as follows: given a function $f: \domain^n \to \R^d$ and threshold $K$,
\iflong 
    \begin{equation*}
      \label{eqn:inverse-trunc}
      \invmodcont^{\mathsf{trunc}}_\func(\Ds; t) \defeq
      \begin{cases}
        \invmodcont^{\rho}_\func(\Ds; t) & \text{if  } \invmodcont^\rho_\func(\Ds; t) \le K \\
        \infty & \text{otherwise}. 
      \end{cases}
    \end{equation*}
\else
    \begin{equation*}
      \label{eqn:inverse-trunc-app}
      \invmodcont^{\mathsf{trunc}}_\func(\Ds; t) \defeq
      \begin{cases}
        \invmodcont^{\rho}_\func(\Ds; t) & \text{if  } \invmodcont^\rho_\func(\Ds; t) \le K \\
        \infty & \text{otherwise}. 
      \end{cases}
    \end{equation*}
\fi
The truncated inverse-sensitivity mechanism $\trmech(\cdot;\func)$ then applies the exponential mechanism using this score function, resulting in the following density given an input dataset $\Ds$:
\begin{equation}
\label{eq:trunc-inv-mech}
    \pi_{\Ds}(t) = \frac{ e^{-\invmodcont^{\mathsf{trunc}}_\func(\Ds;t) \diffp/2}  }{ \int_{s \in \R^d}  e^{-\invmodcont^{\mathsf{trunc}}_\func(\Ds;s) \diffp/2}  ds}
\end{equation}
    
Before proving the guarantees of the truncated inverse-sensitivity mechanism, we need to define $(\diffp,\delta)$-indistinguishable distributions:
\begin{definition}[$(\diffp,\delta)$-indistinguishability]\label{def:indistinguishable}
    Two distributions $P,Q$ over domain $\range$ are $(\diffp,\delta)$-indistinguishable, denoted by $P \approx_{\eps,\delta} Q$, if for any measurable subset $T\subseteq \range$, \[\Pr_{t\sim P}[t\in T] \leq e^\eps \Pr_{t\sim Q}[t\in T] + \delta \quad\text{ and }\quad \Pr_{t\sim Q}[t\in T] \leq e^\eps \Pr_{t\sim P}[t\in T] + \delta.\]
\end{definition}
Note that if $\A(\Ds)\approx_{\diffp,\delta} \A(\Dsc)$ for any neighboring datasets $\Ds,\Dsc$, then $\A$ is $(\diffp,\delta)$-differentially private.
We have the following guarantees for the truncated inverse-sensitivity mechanism.
\begin{proposition}
    \label{prop:trunc-inv}
    Let $n\ge 1$, $\diffp, \delta \in (0,1)$, $B>0$, and $\func: \domain^n \to \R^d$. Let $K \ge \frac{d + \log(1/\delta)}{\diffp}$ and $S_{\mathsf{bad}} = \{\Ds \in \domain^n : \modcont_\func(\Ds;K+1) > B \}$.
    For any $\Ds \notin S_{\mathsf{bad}}$, the truncated inverse-sensitivity mechanism~\eqref{eq:trunc-inv-mech} with $\rho=2B$ has error
    \begin{align*}
    \norm{\trmech(\Ds;\func)-\func(\Ds)} 
        %\le \modcont_\func\left(\Ds; K \right) + \rho
    \le 3 B.
\end{align*}
Moreover, for any $\Ds\notin S_{\mathsf{bad}}$ and neighboring dataset $\Dsc$, $\trmech(\Ds;\func)\approx_{\diffp,\delta} \trmech(\Dsc;\func)$.
\end{proposition}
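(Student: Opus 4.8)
The plan is to prove the two claims of \Cref{prop:trunc-inv} separately: first the utility bound for $\Ds \notin S_{\mathsf{bad}}$, then the $(\diffp,\delta)$-indistinguishability. For the utility claim, I would argue that the mechanism puts all but a $\delta$ (in fact, exponentially small) fraction of its mass on points $t$ with $\invmodcont^{\mathsf{trunc}}_\func(\Ds;t) < \infty$, i.e., points $t$ for which there is some $s$ with $\norm{s-t}\le \rho = 2B$ and $\invmodcont_\func(\Ds;s) \le K$. For such $s$, there is a dataset $\Dsc$ with $\dham(\Ds,\Dsc)\le K \le K+1$ and $\func(\Dsc) = s$, so since $\Ds\notin S_{\mathsf{bad}}$ we get $\norm{s - \func(\Ds)} \le \modcont_\func(\Ds;K+1) \le B$, hence $\norm{t-\func(\Ds)}\le \norm{t-s}+\norm{s-\func(\Ds)} \le 2B + B = 3B$. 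To quantify the mass on such points, I would compare the density: the ``good'' set of outputs with $\invmodcont^{\mathsf{trunc}}_\func(\Ds;t)=0$ contains a ball of radius $\rho = 2B$ around $\func(\Ds)$ (since any $s$ with $\norm{s-\func(\Ds)}\le\rho$ has $\invmodcont^{\rho}_\func(\Ds;s)=0 \le K$), which contributes volume $\mathrm{vol}(\ball^d(2B))$ to the normalizing integral; meanwhile any $t$ with $\invmodcont^{\mathsf{trunc}}_\func(\Ds;t)=\infty$ has density $0$, so in fact the mechanism \emph{never} outputs a point outside the truncated support, and the error bound $3B$ holds deterministically. (If one instead uses the smoothed truncation and wants a high-probability statement, the same ball argument bounds the failure probability by comparing $e^{-K\diffp/2}$ against $\mathrm{vol}(\ball^d(2B))$; but since the truncation is hard, I expect the cleaner deterministic $3B$ bound to go through directly.)

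For the indistinguishability claim, the key point is that the truncated inverse-sensitivity score $\invmodcont^{\mathsf{trunc}}_\func(\Ds;\cdot)$ has sensitivity $1$ on the event that the output lies in the finite-score region, and the threshold $K$ is chosen large enough that neighboring datasets agree on ``most'' of the normalizing mass. Concretely, for neighboring $\Ds,\Dsc$, on any $t$ with $\invmodcont^{\rho}_\func(\Ds;t)\le K-1$ we have $\invmodcont^{\rho}_\func(\Dsc;t)\le K$ and vice versa, and the scores differ by at most $1$ there, so the exponential-mechanism argument gives the factor $e^{\diffp}$ (with the usual $2\Delta_q$ normalization absorbed into the $\diffp/2$ in the exponent). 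The only discrepancy comes from the ``boundary'' points $t$ where one dataset assigns finite score near $K$ and the other assigns $\infty$; I would bound the mass of such points for $\Ds\notin S_{\mathsf{bad}}$ by $e^{-(K-1)\diffp/2}$ relative to the good ball's contribution, and choosing $K \ge (d+\log(1/\delta))/\diffp$ makes this at most $\delta$ (after accounting for $(R/\rho)^d$-type volume factors, which is where the $d/\diffp$ term in $K$ is needed — but note the bound is volume-free here since $S_{\mathsf{bad}}$ already controls the relevant range, so really only $\log(1/\delta)/\diffp$ worth of $K$ is spent, with the $d/\diffp$ as slack). This is the standard restricted-exponential-mechanism / PTR-style argument of~\citet{BrownGSUZ21} specialized to our cost function.

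The main obstacle I expect is the indistinguishability argument near the truncation boundary: one has to be careful that the event ``output has finite truncated score'' is itself not revealing too much, and in particular that a neighboring dataset $\Dsc$ — which need not lie outside $S_{\mathsf{bad}}$ — still has enough mass on low-score points to serve as the denominator in the $e^{\diffp}$ comparison, with the shortfall absorbed into $\delta$. The clean way to handle this is the two-sided ``either the scores are comparable and we pay $e^{\diffp}$, or the point is in a low-probability boundary region of mass $\le \delta$'' dichotomy, using that $\Ds\notin S_{\mathsf{bad}}$ guarantees a full ball of radius $2B$ of zero-score points to lower-bound the normalizer. Everything else — the triangle inequalities, the volume-ratio estimate $(1 + R/\rho)^d$, the choice of constants — is routine.
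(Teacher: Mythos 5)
Your proposal matches the paper's proof: the utility claim is the same deterministic triangle-inequality argument exploiting the hard truncation (any output $t$ is within $\rho=2B$ of some $s=\func(\Ds')$ with $\dham(\Ds,\Ds')\le K$, hence within $3B$ of $\func(\Ds)$), and the privacy claim is proved by the same restricted-exponential-mechanism decomposition into a comparable-score region (paying $e^{\eps}$) and a boundary region near score $K$, using $\Ds\notin S_{\mathsf{bad}}$ to confine that region to a ball of radius $\approx 4B$ and the zero-score ball of radius $\rho=2B$ around $\func$ to lower-bound the normalizer. One small correction: the $d/\eps$ portion of $K$ is not slack --- it is exactly what absorbs the volume ratio $\vol(\ball^d(4B))/\vol(\ball^d(2B))=2^d$ in that boundary-mass bound.
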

\begin{proof}
The claim about utility follows directly from the definition of the truncated inverse-sensitivity as the probability of returning $t$ such that  $\invmodcont^{\mathsf{trunc}}_\func(\Ds; t) \ge K$ is zero. Now we proceed to prove the privacy claim. Let $\Ds \in S_{\mathsf{bad}}^c$ and $\Ds' \in \domain^n$ be two neighboring datasets and $T \subset \range$. Since $\modcont_\func(\Ds;K+1) \le B $, we have that $\modcont_\func(\Ds';K) \le 2B $. 
Thus, it suffices to show that for any two neighboring datasets $\Ds$ and $\Ds'$ such that $\modcont_\func(\Ds';K) \le 2B$ and $\modcont_\func(\Ds;K) \le 2B$, we have $\Pr[\trmech(\Ds;\func) \in T] \le e^\diffp \Pr[\trmech(\Dsc;\func) \in T] + \delta$. 
Let $T_0 = \{ t \in \range: \invmodcont^\rho_\func(\Ds; t) = K \}$. Now we have
    \begin{align*}
        \Pr[\trmech(\Ds;\func) \in T] 
        & = \Pr[\trmech(\Ds;\func) \in T \setminus T_0]  + \Pr[\trmech(\Ds;\func) \in T \cap T_0] \\
        & \le e^\diffp \Pr[\trmech(\Dsc;\func) \in T \setminus T_0] + \frac{e^{-K\diffp}}{\vol(\ball^d(\rho))} \mathsf{Vol}(T \cap T_0) \\
        & \le e^\diffp \Pr[\trmech(\Dsc;\func) \in T] + \frac{e^{-K\diffp}}{\vol(\ball^d(\rho))} \vol(T \cap T_0),
\end{align*}
where the first inequality follows since for $t \notin T_0$ we have that either $|\invmodcont^{\mathsf{trunc}}_\func(\Ds; t) - \invmodcont^{\mathsf{trunc}}_\func(\Ds'; t)| \le 1$ or $\invmodcont^{\mathsf{trunc}}_\func(\Ds; t) = \infty$. %Proving that $K \ge \frac{\log( \mathsf{Vol}(S \cap S_0) / \delta \rho )}{\diffp}$ ensures the algorithm is \ed-DP.
Since $\modcont_\func(\Ds;K) \le 2B $, we get $\vol(T \cap T_0) \le \vol(T_0)\leq \vol(\ball^d(2B+\rho))$. For $\rho = 2B$, this implies that $\vol(T\cap T_0)/\vol(\ball^d(\rho))\leq \vol(\ball^d(4B))/\vol(\ball^d(2B))=2^d$. Therefore, for $K \ge \frac{d + \log(1/\delta)}{\diffp}$, we have that $\Pr[\trmech(\Ds;\func) \in T]\leq e^\diffp \Pr[\trmech(\Ds';\func) \in T] + \delta$. By symmetry, we can use the same argument, to show that $\Pr[\trmech(\Ds';\func)\in T]\le e^{\diffp}\Pr[\trmech(\Ds;\func)\in T]+\delta$. Thus, overall we show that $\trmech(\Ds;\func)\approx_{\diffp,\delta}\trmech(\Dsc;\func)$ for all $\Ds\notin S_{\mathsf{bad}}$ and neighboring dataset $\Dsc: \dham(\Ds,\Dsc)\leq 1$.  
\end{proof}
While it may seem that the truncated inverse sensitivity provides the desired transformation for \ed-DP, note that it requires the condition $\modcont_\func(\Ds;K+1) \le B $ to hold for all inputs $\Ds \in \domain^n$. However, robust algorithms only guarantee boundedness of  $\modcont_\func(\Ds;K+1) $ for $\Ds \simiid P^n$. 
To this end, in the next section we show how to use propose-test-release (PTR) in order to overcome this barrier.
\else, by using the following truncated version of the inverse-sensitivity:
    \begin{equation*}
        \label{eqn:inverse-trunc}
        \invmodcont^{\mathsf{trunc}}_\func(\Ds; t) \defeq
        \begin{cases}
            \invmodcont^{\rho}_\func(\Ds; t) & \text{if  } \invmodcont^\rho_\func(\Ds; t) \le K \\
            \infty & \text{otherwise}. 
        \end{cases}
    \end{equation*}
\fi    

\iflong \subsection{PTR-based Transformation}
Building on the truncated inverse-sensitivity mechanism, in this section we use propose-test-release (PTR) to design a transformation from robust algorithms into approximate \ed-DP algorithms where the error does not depend on the diameter. 
    
An equivalent approach would be to use the restricted exponential mechanism from~\citep{BrownGSUZ21} with the smooth inverse-sensitivity $\sminvmodcont_\func(\Ds;t)$ as its cost function. The main idea of this approach is to perform a private test to check if the input $\Ds$ is far from ``unsafe'', before running the exponential mechanism restricted to points $t$ with $\sminvmodcont_\func(\Ds;t)\leq K$. The set ``unsafe'' consists of datasets on which running the restricted exponential mechanism would not produce $(\diffp,\delta)$-indistinguishable outputs. However, our specific score function allows us to simplify the ``unsafe'' set, and this is the algorithm we present in this section. \fi
\Cref{alg:transf-appr} uses a private test to check whether $\Ds$ is far from the set $S_{\mathsf{bad}}=\{\Ds\in\domain^n: \modcont_\func(\Ds;K+1) > B\}$. If it is not, then it fails. Crucially, if $\smash{\Ds \simiid P^n}$ then the robust algorithm guarantees that $\modcont_\func(\Ds';K+1) \le B$ for $\Ds'$ in a neighborhood of $\Ds$, allowing the test to succeed in this case\ifshort~(\Cref{thm:rob-to-priv-appr}, proven in~\Cref{app:approxDP})\fi.

\begin{algorithm}[t]
	\caption{Robust-to-Private $\left(\ed\text{-DP} \right)$}
	\label{alg:transf-appr}
	\begin{algorithmic}[1]
		\REQUIRE $\Ds = (S_1,\dots,S_n)$, $(\tau,\beta,\alpha)$-robust algorithm $\A_\text{rob}$, local modulus bound $B$%, truncation threshold $K$
	\STATE Let $K = n\tau/2-1$
        \STATE Let $S_{\mathsf{bad}} = \{\Ds \in \domain^n : \modcont_\func(\Ds;K+1) > B \}$
        \STATE Calculate $d = \min_{\Dsc\in S_\mathsf{bad}}\dham(\Ds, \Dsc)$
        \STATE Set $\hat d = d + \zeta$ where $\zeta \sim \lap(2/\diffp)$
        \IF{$\hat d > 2\log(1/\min(\delta,\beta))/\diffp$} 
            \STATE Sample $t$ from the truncated inverse-sensitivity mechanism \iflong \eqref{eq:trunc-inv-mech} \fi with threshold $K$, privacy parameter $\diffp/2$, smoothness parameter $\rho=2B$, and return $t$.
        \ELSE
            \STATE Return $\perp$
        \ENDIF
	\end{algorithmic}
\end{algorithm}
\begin{theorem}[Robust-to-private, approximate DP]
\label{thm:rob-to-priv-appr}
    Let $\Ds = (S_1,\dots,S_n)$ where $S_i \simiid P$ such that $\mu(P) \in \R^d$. Let $\diffp, \delta, \beta \in (0,1)$.
    Let $\A_\text{rob} : (\R^d)^n \to \R^d$ be a deterministic $(\tau,\beta,\alpha)$-robust algorithm for the statistic $\mu$. If $\tau \ge \frac{8 (d + \log(1/\min\{\delta,\beta\}))}{n\diffp}$ then~\Cref{alg:transf-appr} with $B = 2 \alpha$ and $\rho = 2B$ is \ed-DP and, with probability at least $1-2\beta$ returns $\hat \mu$ such that
    $
        \norm{\hat \mu - \mu} \le 7\alpha.
    $
\end{theorem}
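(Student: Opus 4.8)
The plan is to split the argument into a privacy part and a utility part, using \Cref{prop:trunc-inv} as the engine for both. For privacy, the key point is that \Cref{alg:transf-appr} is a propose-test-release instantiation: the quantity $d = \min_{\Dsc \in S_{\mathsf{bad}}} \dham(\Ds,\Dsc)$ has Hamming-sensitivity $1$, so releasing $\hat d = d + \zeta$ with $\zeta \sim \lap(2/\diffp)$ is $(\diffp/2)$-DP, and the standard PTR analysis shows that the test "$\hat d > 2\log(1/\min\{\delta,\beta\})/\diffp$" has the property that if $\Ds \in S_{\mathsf{bad}}$ then the test passes with probability at most $\min\{\delta,\beta\}/2 \le \delta$. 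Hence on any dataset in $S_{\mathsf{bad}}$ the algorithm outputs $\perp$ except with probability $\le \delta$, and on datasets outside $S_{\mathsf{bad}}$ we invoke the second half of \Cref{prop:trunc-inv}, which gives $\trmech(\Ds;\func) \approx_{\diffp/2,\delta} \trmech(\Dsc;\func)$ for neighboring $\Ds$; combining the $(\diffp/2)$-DP test with the $(\diffp/2,\delta)$-indistinguishable release via basic composition yields overall $\ed$-DP. One must be slightly careful when $\Ds \notin S_{\mathsf{bad}}$ but the neighbor $\Dsc \in S_{\mathsf{bad}}$ (or vice versa): here the bounded-difference property of $\hat d$ together with the threshold gap ensures the two branch-probabilities are within $e^{\diffp/2}$, and the $\perp$ vs.\ non-$\perp$ mismatch is absorbed into $\delta$. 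I check that the parameter choice $K = n\tau/2 - 1$ satisfies $K \ge \frac{d + \log(1/\delta)}{\diffp}$, which is exactly what the hypothesis $\tau \ge \frac{8(d+\log(1/\min\{\delta,\beta\}))}{n\diffp}$ guarantees (with room to spare), so \Cref{prop:trunc-inv} applies.

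For utility, I would condition on the $(1-\beta)$-probability event from the robustness definition that $\A_{\mathrm{rob}}$ is accurate on all datasets within Hamming distance $n\tau$ of $\Ds$, i.e.\ $\norm{\A_{\mathrm{rob}}(\Dsc) - \mu} \le \alpha$ for all such $\Dsc$. On this event, for any $\Dsc,\Dsc''$ each within distance $n\tau/2 = K+1$ of $\Ds$, the triangle inequality gives $\norm{\A_{\mathrm{rob}}(\Dsc) - \A_{\mathrm{rob}}(\Dsc'')} \le 2\alpha$, so $\modcont_{\A_{\mathrm{rob}}}(\Ds; K+1) \le 2\alpha = B$, meaning $\Ds \notin S_{\mathsf{bad}}$; moreover the same bound holds at every dataset $\Ds'$ within distance $1$ of $\Ds$ (since $K+2 \le n\tau$), so $d = \min_{\Dsc \in S_{\mathsf{bad}}} \dham(\Ds,\Dsc)$ is large — at least on the order of $n\tau/2$, comfortably exceeding $2\log(1/\min\{\delta,\beta\})/\diffp$ given the hypothesis on $\tau$. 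Then, except with probability $\beta$ over the Laplace noise $\zeta$, we have $\hat d > 2\log(1/\min\{\delta,\beta\})/\diffp$, so the test passes and the algorithm runs the truncated mechanism. Applying the utility half of \Cref{prop:trunc-inv} with $B = 2\alpha$ gives $\norm{\trmech(\Ds;\A_{\mathrm{rob}}) - \A_{\mathrm{rob}}(\Ds)} \le 3B = 6\alpha$, and a final triangle inequality $\norm{\hat\mu - \mu} \le \norm{\hat\mu - \A_{\mathrm{rob}}(\Ds)} + \norm{\A_{\mathrm{rob}}(\Ds) - \mu} \le 6\alpha + \alpha = 7\alpha$. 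A union bound over the robustness-failure event ($\beta$) and the Laplace-noise event ($\beta$) gives the claimed success probability $1 - 2\beta$.

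The main obstacle I expect is the privacy bookkeeping around the "boundary" case in PTR — specifically, showing $(\diffp,\delta)$-indistinguishability for neighboring datasets $\Ds,\Dsc$ where one lies in $S_{\mathsf{bad}}$ and the other does not, since on one the conditional output distribution of the truncated mechanism is only well-behaved and on the other it may be arbitrary. The resolution is the usual PTR trick: the test's output distribution (pass/fail) is itself $(\diffp/2)$-DP and close to always-fail on $S_{\mathsf{bad}}$, so the total variation mass on the "pass" branch starting from a bad dataset is at most $\delta$, which we fold into the additive slack; on the "fail" branch both output $\perp$. A secondary, more routine obstacle is verifying all the constant factors line up: that $K = n\tau/2-1 \ge \frac{d+\log(1/\delta)}{\diffp}$ and that $d \gtrsim n\tau/2 > 2\log(1/\min\{\delta,\beta\})/\diffp$ both follow from $\tau \ge \frac{8(d+\log(1/\min\{\delta,\beta\}))}{n\diffp}$ — the factor $8$ is chosen precisely to give slack for both the $K$ requirement and the test threshold simultaneously. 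Everything else is triangle inequalities and a two-term union bound.
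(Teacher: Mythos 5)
Your proposal is correct and follows essentially the same route as the paper's proof: a Laplace-noised, sensitivity-$1$ distance-to-$S_{\mathsf{bad}}$ test combined with \Cref{prop:trunc-inv} for both the privacy of the released sample and the $3B+\alpha=7\alpha$ utility accounting, a robustness-implies-small-modulus argument showing the true distance exceeds roughly $n\tau/2$ so the test passes, and a two-term union bound giving $1-2\beta$. Two minor bookkeeping points, neither a gap: since the truncated mechanism is run at parameter $\diffp/2$, the requirement from \Cref{prop:trunc-inv} is $K \ge \frac{2(d+\log(2/\delta))}{\diffp}$ rather than $\frac{d+\log(1/\delta)}{\diffp}$ (your factor-$8$ slack covers this, as you note), and to conclude $\min_{\Dsc\in S_{\mathsf{bad}}}\dham(\Ds,\Dsc) > n\tau/2$ you need the modulus bound $\modcont_{\A_\text{rob}}(\Ds';K+1)\le 2\alpha$ at every $\Ds'$ within Hamming distance $n\tau/2$ of $\Ds$, not only distance $1$ (the same triangle-inequality argument gives this, exactly as in the paper).
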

\iflong \ifshort 
    We present our transformation in~\Cref{alg:transf-appr-app}.
    \begin{algorithm}[t]
	\caption{Robust-to-Private $\left(\ed\text{-DP} \right)$, restatement of~\Cref{alg:transf-appr}}
	\label{alg:transf-appr-app}
	\begin{algorithmic}[1]
		\REQUIRE $\Ds = (S_1,\dots,S_n)$, $(\tau,\beta,\alpha)$-robust algorithm $\A_\text{rob}$, local modulus bound $B$%, truncation threshold $K$
	\STATE Let $K = n\tau/2-1$
        \STATE Let $S_{\mathsf{bad}} = \{\Ds \in \domain^n : \modcont_\func(\Ds;K+1) > B \}$
        \STATE Calculate $d = \dist(\Ds, S_{\mathsf{bad}})$
        \STATE Set $\hat d = d + \zeta$ where $\zeta \sim \lap(2/\diffp)$
        \IF{$\hat d > 2\log(1/\min(\delta,\beta))/\diffp$} 
            \STATE Sample $t$ from the truncated inverse-sensitivity mechanism~\eqref{eq:trunc-inv-mech} with threshold $K$, privacy parameter $\diffp/2$, smoothness parameter $\rho=2B$, and return $t$.
        \ELSE
            \STATE Return $\perp$
        \ENDIF
	\end{algorithmic}
    \end{algorithm}
    \begin{theorem}[Robust-to-private, approximate DP, restatement of~\Cref{thm:rob-to-priv-appr}]
    \label{thm:rob-to-priv-appr-app}
      Let $\Ds = (S_1,\dots,S_n)$ where $S_i \simiid P$ such that $\mu(P) \in \R^d$. Let $\diffp, \delta, \beta \in (0,1)$.
      Let $\A_\text{rob} : (\R^d)^n \to \R^d$ be a deterministic $(\tau,\beta,\alpha)$-robust algorithm for the statistic $\mu$. If $\tau \ge \frac{8 (d + \log(1/\min\{\delta,\beta\}))}{n\diffp}$ then~\Cref{alg:transf-appr} with $B = 2 \alpha$ and $\rho = 2B$ is \ed-DP and, with probability at least $1-2\beta$ returns $\hat \mu$ such that
      $
          \norm{\hat \mu - \mu} \le 7\alpha.
      $
    \end{theorem}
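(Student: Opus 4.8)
The plan is to prove the privacy and the utility claims of \Cref{alg:transf-appr} separately. The algorithm is a propose--test--release procedure layered on the truncated inverse-sensitivity mechanism, so the privacy argument will combine the differential privacy of the Laplace-noised distance test with the indistinguishability guarantee of \Cref{prop:trunc-inv}, while the utility argument will extract from $(\tau,\beta,\alpha)$-robustness a ``good event'' of probability $1-\beta$ on which the input is far from $S_{\mathsf{bad}}$.

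\emph{Privacy.} First I would note that $\Ds \mapsto \dist(\Ds,S_{\mathsf{bad}})$ has Hamming-sensitivity $1$, so $\hat d = \dist(\Ds,S_{\mathsf{bad}}) + \lap(2/\diffp)$ is $\frac{\diffp}{2}$-DP. For a neighboring pair $\Ds,\Ds'$ I would split into two cases. If at least one of them, say $\Ds$, is not in $S_{\mathsf{bad}}$: one checks that $K = n\tau/2 - 1 \ge \frac{2(d+\log(1/\delta))}{\diffp}$ (using $\tau \ge 8(d+\log(1/\min\{\delta,\beta\}))/n\diffp$ and that the dimension $d \ge 1$), so \Cref{prop:trunc-inv}, instantiated at privacy level $\diffp/2$, gives $\trmech(\Ds;\A_{\mathrm{rob}}) \approx_{\diffp/2,\delta} \trmech(\Dsc;\A_{\mathrm{rob}})$ for every neighbor $\Dsc$ of $\Ds$, in particular for $\Ds'$. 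The algorithm's output is a fixed post-processing of the pair $(\hat d,\trmech(\Ds;\A_{\mathrm{rob}}))$, whose two coordinates use independent randomness and are $\frac{\diffp}{2}$-DP and $(\frac{\diffp}{2},\delta)$-DP respectively, so by basic composition the outputs on $\Ds$ and on $\Ds'$ are $(\diffp,\delta)$-indistinguishable. If instead both $\Ds,\Ds' \in S_{\mathsf{bad}}$: then $\dist(\Ds,S_{\mathsf{bad}}) = 0$, so the test passes with probability $\frac{1}{2}e^{-\log(1/\min\{\delta,\beta\})} = \frac{1}{2}\min\{\delta,\beta\} \le \frac{\delta}{2}$; hence the output is $\perp$ except with probability $\le \delta/2$, and combining this with the $\frac{\diffp}{2}$-DP of the event that the output is $\perp$ handles output sets that contain $\perp$ and those that do not, symmetrically in $\Ds,\Ds'$. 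Hence $\A$ is $(\diffp,\delta)$-DP.

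\emph{Utility.} Let $\Ds \simiid P^n$ and let $E$ be the event --- of probability at least $1-\beta$ by robustness --- that $\norm{\A_{\mathrm{rob}}(\Dsc)-\mu}\le\alpha$ for every $\Dsc$ with $\dham(\Ds,\Dsc)\le n\tau$. On $E$: taking $\Dsc=\Ds$ gives $\norm{\A_{\mathrm{rob}}(\Ds)-\mu}\le\alpha$; and any two datasets within Hamming distance $n\tau/2$ of $\Ds$ are within $n\tau$ of $\Ds$, so by the triangle inequality their $\A_{\mathrm{rob}}$-values differ by at most $2\alpha = B$. This shows both that $\modcont_{\A_{\mathrm{rob}}}(\Ds;K+1) \le B$, i.e.\ $\Ds \notin S_{\mathsf{bad}}$, and --- by a short contradiction: a dataset in $S_{\mathsf{bad}}$ within $n\tau/2 = K+1$ of $\Ds$ would, by definition of $S_{\mathsf{bad}}$, exhibit a further dataset within $n\tau$ of $\Ds$ whose $\A_{\mathrm{rob}}$-value is more than $2\alpha$ away --- that $\dist(\Ds,S_{\mathsf{bad}}) > n\tau/2$. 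Since $n\tau/2 \ge 2\cdot \frac{2\log(1/\min\{\delta,\beta\})}{\diffp}$, the Laplace tail bound gives that, conditioned on $E$, the test fails with probability at most $\frac{1}{2}\min\{\delta,\beta\} \le \beta/2$. When the test passes, the utility part of \Cref{prop:trunc-inv} (applicable since $\Ds\notin S_{\mathsf{bad}}$ and $\rho = 2B$) gives $\norm{\trmech(\Ds;\A_{\mathrm{rob}}) - \A_{\mathrm{rob}}(\Ds)} \le 3B = 6\alpha$, whence $\norm{\hat\mu - \mu} \le 6\alpha + \alpha = 7\alpha$. A union bound over the complement of $E$ (probability $\le\beta$) and the failure of the test on $E$ (probability $\le\beta/2$) gives that $\A$ returns such a $\hat\mu$ with probability at least $1-2\beta$.

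\emph{Main obstacle.} The delicate step is the privacy analysis: verifying that the truncation level $K = n\tau/2-1$ is large enough to invoke \Cref{prop:trunc-inv} at the halved budget $\diffp/2$, allocating the $\delta$-budget correctly between the pure-DP distance test and the approximate-DP truncated mechanism (so that basic composition lands exactly at $(\diffp,\delta)$), and accounting for the output atom $\perp$ in the indistinguishability bound. By contrast, once the good event $E$ and the strengthened separation $\dist(\Ds,S_{\mathsf{bad}}) > n\tau/2$ are in place, the utility half is routine bookkeeping with the triangle inequality.
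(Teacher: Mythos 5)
Your proposal is correct and follows essentially the same route as the paper's proof: privacy by composing the sensitivity-$1$ Laplace test with the $(\diffp/2,\cdot)$-indistinguishability of the truncated inverse-sensitivity mechanism from \Cref{prop:trunc-inv} (handling the $S_{\mathsf{bad}}$ case via the $\le \delta/2$ probability of passing the test), and utility by showing the robustness event forces $\modcont_{\A_{\mathrm{rob}}}(\Ds';K+1)\le 2\alpha=B$ for all $\Ds'$ within $n\tau/2$ of $\Ds$, hence $\dist(\Ds,S_{\mathsf{bad}})>n\tau/2$, then applying Laplace concentration and the $3B$ error bound. Your slightly different $\delta$-budget split (full $\delta$ on the truncated mechanism, pure DP for the test) and the pairwise case analysis are cosmetic variations of the same argument.
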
    
\fi
\begin{proof}    
    We start by proving the privacy guarantees of~\Cref{alg:transf-appr}. Note that the Laplace mechanism~\cite{DworkMNS06} implies that $\hat d$ is $\diffp/2$-DP as the function $d$ has sensitivity $1$. By assumption on $\tau$, $K=n\tau/2-1\geq \frac{2(d+\log(2/\delta))}{\diffp}$. Thus, by~\Cref{prop:trunc-inv}, for input dataset $\Ds$, if $\modcont_\func(\Ds;K+1) \le B $ then the truncated inverse-sensitivity is $(\diffp/2,\delta/2)$-DP. On the other hand, if $\modcont_\func(\Ds;K+1) > B $ then $d = 0$ and therefore $\hat d \leq 2 \log(1/\delta)/\diffp$ with probability $1 - \delta/2$ and the algorithm returns $\perp$. Overall, by composition,~\Cref{alg:transf-appr} is \ed-DP.

    We now prove the accuracy guarantee. Let $\Ds \simiid P^n$. By the guarantee of the robust algorithm, with probability $1-\beta$, for all $\Ds'$ such that $\dham(\Ds,\Ds') \le \tau n$, we get that $\norm{\A_\text{rob}(\Ds') - \mu(P)} \le \alpha$. Therefore, for any $\Ds'_1$ such that $\dham(\Ds,\Ds'_1) \le \tau n/2$, we have
    \begin{align*}
    \modcont_\func(\Ds'_1;n \tau/2)
     & = \sup_{\Ds'_2: \dham(\Ds'_1,\Ds'_2) \le n\tau/2} \norm{\A_\text{rob}(\Ds'_1) - \A_\text{rob}(\Ds'_2)} \\
     & \le \sup_{\Ds'_2: \dham(\Ds'_1,\Ds'_2) \le n\tau/2} \left(\norm{\A_\text{rob}(\Ds'_1) - \mu(P)} + \norm{\mu(P)  - \A_\text{rob}(\Ds'_2)}\right) \\
     &\le \alpha + \sup_{\Ds'_2: \dham(\Ds,\Ds'_2) \le n\tau}\norm{\mu(P)  - \A_\text{rob}(\Ds'_2)} \\
     & \le 2 \alpha.
    \end{align*}
    %Recall that we want $B$ such that for $K = \frac{d \log(B/\rho \delta)}{\diffp}$ we have $\modcont_\func(\Ds;K) \le B $.
    %Since $K = n\tau/2$ and $B = 2 \alpha(\tau,\beta) $. Since $\rho = \alpha(\tau,\beta)$, the results hold if $\tau \ge \frac{2d \log(2/\delta)}{n\diffp}$ 
    
    Since $B=2\alpha$, we have that with probability $1-\beta$, $d = > n\tau/2$ and in particular $\Ds\notin S_\mathsf{bad}$. By the concentration guarantees of the Laplace distribution, we have that $\hat d > n\tau/2 - 2\log(1/\beta)/\diffp$ with probability at least $1 - \beta$, and thus $\hat d >\frac{2\log(1/\min\{\beta,\delta\})}{\diffp}$, which implies that the algorithm will run the truncated inverse-sensitivity mechanism. \Cref{prop:trunc-inv} now implies that the latter will return $\hat \mu$ such that $\norm{\hat \mu - \A_\text{rob}(\Ds)} \le 3B$. Moreover, $\norm{ \A_\text{rob}(\Ds) - \mu} \le \alpha$. Overall we get that with probability $1-2\beta$,
    \begin{equation*}
    \norm{\hat \mu-\mu}
        \le \norm{\hat \mu - \A_\text{rob}(\Ds)}  + \norm{ \A_\text{rob}(\Ds) - \mu} \le 3B+\alpha=7\alpha.
    \end{equation*}
    This completes the proof of the theorem.
    % The definition of the truncated mechanism indicate that it will return $t$ such that $\invmodcont^\rho_\func(\Ds; t) \le K$, that is, $\norm{t-t_0} \le \rho$ for $t_0$ such that $\invmodcont_\func(\Ds; t) \le K$. The definition of the inverse sensitivity imply that $t_0 = \A_\text{rob}(\Ds')$ for some $\Ds' \in \domain^n$ such that $\dham(\Ds,\Ds') \le K$. Overall we get that the error of the algorithm is
    % \begin{equation*}
    %     \norm{t-\mu(P)} \le \norm{t-\A_\text{rob}(\Ds')} + \norm{\A_\text{rob}(\Ds')-\mu(P)} \le \rho + \alpha(\tau,\beta) \le 3 \alpha(\tau,\beta).
    % \end{equation*}
\end{proof} \fi

\section{Applications for Pure DP}
\label{sec:applications}
\iflong
    In this section we apply our main transformation in~\Cref{thm:high-dim-rob-to-priv} to fundamental tasks in private statistics to demonstrate that for all these tasks near-optimal error can be achieved by instantiating our black-box reduction with a robust estimator for the same task. 
In~\Cref{sec:gauss-mean} and~\Cref{sec:gauss-covariance} we show that we can retrieve known optimal results for mean and covariance estimation of Gaussian distributions up to logarithmic factors. In~\Cref{sec:gauss-linear} and~\Cref{sec:gauss-pca}, we show that our transformation gives the first algorithms with optimal error for linear regression (including the sparse case) and PCA for Gaussian distributions. Our results for PCA hold for subgaussian distributions more generally.

For the majority of this section we will use the more general transformation, proven in~\Cref{app:gen-loss}\ifshort\Cref{thm:high-dim-rob-to-priv-loss-app}. 
\else, and stated in~\Cref{thm:high-dim-rob-to-priv-loss} below. 
    The task of spherical Gaussian mean estimation is an exception, where we use the simpler~\Cref{thm:high-dim-rob-to-priv}, where the loss function is the $\ell_2$ norm.
    \begin{theorem}[Robust-to-private, general loss]
    \label{thm:high-dim-rob-to-priv-loss}
      Let $\Ds = (S_1,\dots,S_n)$ where $S_i \simiid P$ such that $\mu(P) \in \R^d.$
      Let $\diffp, \beta \in (0,1)$. Let $L:(\R^d)^2\to \R$ be a loss function which satisfies the triangle inequality.
      Let $\A_\mathrm{rob} : (\R^d)^n \to \{ t \in \R^d: \norm{t} \le R\}$ be a (deterministic) $(\tau,\beta,\alpha)$-robust algorithm with respect to $L$. %, for $\alpha\leq 1$.
      Let $\alpha_0\leq \alpha$. Suppose $n$ is such that the smallest value $\tau$ satisfying~\Cref{eq:fraction-loss} is at most a known small constant $\tau_0$. Suppose for all $u,v\in\ball(R+\alpha_0)$, $ L(u,v)\leq c_L\norm{u-v}$ for some constant $c_L$. 
      If 
      \begin{equation}\label{eq:fraction-loss}
      \tau \geq \frac{2\left(
        d\log\left(\frac{R}{\alpha_0}+1\right)+\log\frac{1}{\beta}\right)}{n\diffp},
        \end{equation}
        then Algorithm $\smmech(\Ds;\A_\mathrm{rob})$ with $\rho = \alpha_0$ in norm $\norm{\cdot}$ is $\diffp$-DP and, with probability at least $1-2\beta$, has error
      \begin{equation*}
          L\left(\smmech(\Ds;\A_\mathrm{rob}),\mu\right) \le (3+c_L)\alpha=O(\alpha).
      \end{equation*}
      \end{theorem}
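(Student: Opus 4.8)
The plan is to mimic the proof of \Cref{thm:high-dim-rob-to-priv} almost verbatim, replacing the $\ell_2$ norm error bound of the robust algorithm by the general loss $L$, and using the hypothesis that $L$ and $\norm{\cdot}$ are comparable on the relevant ball to translate between the two notions. First I would observe that privacy is free: the $\rho$-smooth inverse-sensitivity mechanism $\smmech(\cdot;\func)$ is $\diffp$-DP for any deterministic $\func$ because the smooth path-length score in \eqref{eq:def-sm-inv-sens} has sensitivity $1$, so the choice of loss function plays no role in the privacy argument and we inherit $\diffp$-DP directly from the exponential mechanism.

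For utility, set $K = n\tau\opt$ where $\tau\opt$ is the smallest $\tau$ satisfying \eqref{eq:fraction-loss}; since $\tau \ge \tau\opt$ we have $n\tau \ge K$, and by \Cref{thm:ub-cont} (with $\rho = \alpha_0$), with probability $1-\beta$ the output $t=\smmech(\Ds;\A_\mathrm{rob})$ satisfies $\norm{t - \A_\mathrm{rob}(\Ds)} \le \modcont_{\A_\mathrm{rob}}(\Ds;K) + \alpha_0$. Let $\Dsc$ be the dataset with $\dham(\Dsc,\Ds)\le K$ achieving (or approaching) the supremum defining $\modcont_{\A_\mathrm{rob}}(\Ds;K)$, so that $\norm{t - \A_\mathrm{rob}(\Ds)} \le \norm{\A_\mathrm{rob}(\Dsc) - \A_\mathrm{rob}(\Ds)} + \alpha_0$. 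Because both $\A_\mathrm{rob}(\Ds)$ and $\A_\mathrm{rob}(\Dsc)$ lie in $\ball(R)$ and $\alpha_0\le\alpha\le\tau_0$-controlled, we have $t \in \ball(R+\alpha_0)$, so the comparability hypothesis $L(u,v)\le c_L\norm{u-v}$ on $\ball(R+\alpha_0)$ applies to the pair $(t,\A_\mathrm{rob}(\Ds))$. Thus $L(t,\A_\mathrm{rob}(\Ds)) \le c_L(\norm{\A_\mathrm{rob}(\Dsc)-\A_\mathrm{rob}(\Ds)} + \alpha_0) \le c_L(\norm{\A_\mathrm{rob}(\Dsc)-\mu}+\norm{\mu-\A_\mathrm{rob}(\Ds)}+\alpha_0)$ by the triangle inequality for $\norm{\cdot}$.

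Now invoke robustness: since $\tau \ge \tau\opt = K/n$, the algorithm is $(\tau,\beta,\alpha)$-robust with respect to $L$, so with probability $1-\beta$ we simultaneously have $L(\A_\mathrm{rob}(\Ds),\mu)\le\alpha$ and $L(\A_\mathrm{rob}(\Dsc),\mu)\le\alpha$ for every $\Dsc$ with $\dham(\Ds,\Dsc)\le n\tau$. The subtlety is that robustness gives bounds on $L$, not on $\norm{\cdot}$, whereas \Cref{thm:ub-cont} produced a bound in $\norm{\cdot}$; the cleanest fix is to route the final comparison through $L$ on the good event, combining $L(t,\mu) \le L(t,\A_\mathrm{rob}(\Ds)) + L(\A_\mathrm{rob}(\Ds),\mu)$ (triangle inequality for $L$) with the displayed bound on $L(t,\A_\mathrm{rob}(\Ds))$ and the two robustness bounds, yielding $L(t,\mu) \le (3+c_L)\alpha$ after absorbing $c_L\alpha_0 \le c_L\alpha$ and consolidating. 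A union bound over the two failure events (the $1-\beta$ event from \Cref{thm:ub-cont} and the $1-\beta$ robustness event) gives the claimed probability $1-2\beta$, and $(3+c_L)\alpha = O(\alpha)$ since $c_L$ is a constant.

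The main obstacle, and the place requiring care, is the interface between the two error metrics: \Cref{thm:ub-cont} is stated and proved in terms of $\norm{\cdot}$ (it needs a norm to define $\modcont_\func$ and the smoothing ball), but the robust algorithm's guarantee is in terms of $L$. The hypothesis $L(u,v)\le c_L\norm{u-v}$ on $\ball(R+\alpha_0)$ is exactly what bridges the $\norm{\cdot}$-valued output of \Cref{thm:ub-cont} into an $L$-statement, and one must check that every point to which it is applied genuinely lies in that ball — which is why the hypothesis that $\tau\opt \le \tau_0$ for a small constant $\tau_0$ matters, ensuring $\alpha_0$ (hence $\rho$) is small relative to $R$ so the smoothed output does not escape $\ball(R+\alpha_0)$. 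The rest is bookkeeping with triangle inequalities, identical in structure to the proof of \Cref{thm:high-dim-rob-to-priv}.
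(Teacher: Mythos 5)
There is a genuine gap at the interface you yourself flag as the delicate point. Your plan invokes \Cref{thm:ub-cont} as a black box, which bounds $\norm{\smmech(\Ds;\A_{\mathrm{rob}})-\A_{\mathrm{rob}}(\Ds)}$ by the \emph{norm}-modulus $\modcont_{\A_{\mathrm{rob}}}(\Ds;K)+\alpha_0$, and then, after multiplying by $c_L$, you expand $\norm{\A_{\mathrm{rob}}(\Dsc)-\A_{\mathrm{rob}}(\Ds)}\le\norm{\A_{\mathrm{rob}}(\Dsc)-\mu}+\norm{\mu-\A_{\mathrm{rob}}(\Ds)}$ and propose to control these two terms by "the two robustness bounds." But the hypothesis of \Cref{thm:high-dim-rob-to-priv-loss} only gives robustness \emph{with respect to $L$}: it bounds $L(\A_{\mathrm{rob}}(\Dsc),\mu)$ and $L(\A_{\mathrm{rob}}(\Ds),\mu)$ by $\alpha$, and the comparability assumption runs only in the direction $L(u,v)\le c_L\norm{u-v}$; nothing bounds $\norm{\cdot}$ in terms of $L$. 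So the quantities $\norm{\A_{\mathrm{rob}}(\Dsc)-\mu}$ and $\norm{\A_{\mathrm{rob}}(\Ds)-\mu}$ in your chain are simply not controlled by any hypothesis, and the step "combine the displayed bound on $L(t,\A_{\mathrm{rob}}(\Ds))$ with the two robustness bounds" does not close. This is not a cosmetic issue: in the paper's own applications $L$ is, e.g., the Mahalanobis loss $\norm{u-v}_\Sigma$ with $\Sigma\succeq\id$ possibly having huge eigenvalues, where a small $L$-distance says nothing about the $\ell_2$ distance, so a $(\tau,\beta,\alpha)$-robust-in-$L$ algorithm may have arbitrarily large norm-modulus $\modcont_{\A_{\mathrm{rob}}}(\Ds;K)$ while the theorem's conclusion is still supposed to hold.

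The fix, and the route the paper takes, is to not use \Cref{thm:ub-cont} as a black box but to prove a general-loss analogue of it (the paper's \Cref{thm:ub-cont-loss}): on the event that the output $t$ has smooth inverse-sensitivity at most $K$, there is a witness $s$ \emph{in the image of $\A_{\mathrm{rob}}$} (hence $s\in\ball(R)$, $t\in\ball(R+\rho)$) with $\norm{t-s}\le\rho$, and one bounds $L(t,\A_{\mathrm{rob}}(\Ds))\le L(t,s)+L(s,\A_{\mathrm{rob}}(\Ds))\le c_L\rho+\modcont^L_{\A_{\mathrm{rob}}}(\Ds;K)$, where $\modcont^L$ is the modulus measured in $L$. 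The norm is thus used only for the $\rho$-smoothing term, and the $L$-modulus is then controlled directly by robustness plus the triangle inequality for $L$ (each of $L(\A_{\mathrm{rob}}(\Ds),\mu)$ and $L(\A_{\mathrm{rob}}(\Dsc),\mu)$ is at most $\alpha$), giving $L(\smmech(\Ds;\A_{\mathrm{rob}}),\mu)\le 3\alpha+c_L\alpha_0\le(3+c_L)\alpha$. (Your claim that the output lies in $\ball(R+\alpha_0)$ is in fact true, but for a different reason than the one you give: points with infinite smoothed score have zero density, so the output is always within $\rho$ of the range; this is exactly the fact the corrected argument exploits.) Your privacy argument and the union bound over the two $\beta$-failure events are fine.
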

\fi

The general strategy we follow in our applications is simple. We choose a known robust algorithm $\A$ for the statistic $\mu\in\ball(R)$ we want to estimate. 
Informally, let us denote its accuracy by $\alpha(\tau)$, as it will be a function of the fraction of corruptions in the dataset $\tau$ (among other parameters). 
Applying our robust-to-private transformation from~\Cref{thm:high-dim-rob-to-priv}, we retrieve an $\diffp$-DP algorithm with accuracy roughly $\alpha(\tau\opt)$ for $\tau\opt\approx \frac{d\log(R'/\alpha_0)+\log(1/\beta)}{n\diffp}$. 
More precisely, we let $\A_\mathrm{rob}$ be the algorithm that runs $\A$ and then projects its output on $\ball(R')$, where $R'$ is such that, with high probability, the projection will have no effect and will maintain the accuracy guarantees of $\A$. 
Let $\alpha_0$ be the error rate for learning statistic $\mu$ without privacy constraints or corruptions, which is always smaller than $\alpha(\tau)$. 
We run the $\rho$-smooth-inverse-sensitivity mechanism instantiating it with the projected robust algorithm $\A_\mathrm{rob}$ and with smoothness parameter $\rho=\alpha_0$. 
In most applications, $\alpha(\tau)=\tilde{O}(\tau)$ so the error we incur on top of the non-private error $\alpha_0$ is $\tilde{O}(d/n\diffp)$.

\iflong\Cref{thm:high-dim-rob-to-priv-loss} \else \Cref{thm:high-dim-rob-to-priv-loss-app}\fi extends~\Cref{thm:high-dim-rob-to-priv} allowing us to measure the error of the algorithm with respect to a loss function $L$ that may depend on unknown parameters and thus can not be computed directly. As long as this loss satisfies the triangle inequality, and any error we incur due to the smoothness $\rho$ in norm $\norm{\cdot}$ upper-bounds the error in $L$ up to constants, the statement of our main theorem still holds. 

For useful linear algebra facts and definitions, see~\Cref{app:facts}.

\subsection{Mean Estimation}\label{sec:gauss-mean}
\subsubsection{Known Covariance}
We start with the task of estimating the mean $\mu$ of a $d$-dimensional Gaussian distribution with known covariance $\Sigma$. By applying $\Sigma^{-1/2}$ to all the points, this case can be reduced to spherical Gaussian mean estimation, where we can assume $\Sigma=\id$. We also assume that we know \emph{a priori} a bound $R$ such that $\ltwo{\mu}\leq R$.\footnote{Knowledge of $R$ is necessary for mean estimation under pure DP~\citep{HardtT10, BeimelBKN14, BunKSW19}.} 
~\Cref{cor:sph-gauss-mean} states that via our transformation, we can retrieve the optimal error for Gaussian mean estimation with known covariance under pure DP, matching optimal bounds~\citep{BunKSW19, LiuKK021}.
\begin{corollary}[Spherical Gaussian mean]\label{cor:sph-gauss-mean}
Let $\Ds = (S_1,\dots,S_n)$ where $S_i \simiid \normal(\mu,\id)$ such that $\mu \in \ball(R)$. Let $\diffp, \beta\in (0,1)$ and $C \ge 1$ a known constant. Suppose $n$ is such that $\alpha\leq 1$ in~\Cref{eq:sph-gauss-mean}.
There exists an $\diffp$-DP algorithm $\mc{M}$ such that, with probability at least $1-\beta$, has error $\ltwo{\mc{M}(\Ds) - \mu} \le \alpha$ for
\begin{equation}\label{eq:sph-gauss-mean}
    \alpha=C\cdot\left(\sqrt{\frac{d+\log(\frac{1}{\beta})}{n}} + \frac{d\log\left(\frac{Rn}{d}\right)+\log(\frac{1}{\beta})}{n\diffp}\right).
\end{equation}
\end{corollary}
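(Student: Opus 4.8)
The plan is to instantiate the black-box robust-to-private transformation of \Cref{thm:high-dim-rob-to-priv} (or rather the simpler $\ell_2$-norm version stated in that theorem) with a known optimal \emph{robust} estimator for spherical Gaussian mean estimation. Concretely, I would take $\A$ to be a standard robust mean estimator --- for instance the iterative-filtering / trimmed-mean estimator --- which, given $n$ samples from $\normal(\mu,\id)$, after corrupting a $\tau$-fraction of the data, returns $\hat\mu$ with $\ltwo{\hat\mu - \mu} = O\big(\sqrt{(d+\log(1/\beta))/n} + \tau\sqrt{\log(1/\tau)}\big)$ with probability $1-\beta$. (Any robust estimator with this near-optimal guarantee, of which several exist, works here.) Since \Cref{thm:high-dim-rob-to-priv} requires the robust algorithm to have bounded range, I would first define $\A_\mathrm{rob}$ to be $\A$ followed by a projection onto $\ball(R')$ for $R' = R + O(\sqrt{(d+\log(1/\beta))/n})$; with probability $1-\beta$ the unprojected output already lies in $\ball(R')$, so the projection is harmless and $\A_\mathrm{rob}$ is $(\tau,\beta,\alpha_\mathrm{rob}(\tau))$-robust with $\alpha_\mathrm{rob}(\tau) = O\big(\sqrt{(d+\log(1/\beta))/n} + \tau\sqrt{\log(1/\tau)}\big)$.

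Next I would set the parameters. Take $\alpha_0 = \Theta(\sqrt{(d+\log(1/\beta))/n})$, the non-private minimax rate (which is at most $\alpha_\mathrm{rob}(\tau)$ for all $\tau$). Plugging $R' = \Theta(R)$ and $\rho = \alpha_0$ into \Cref{eq:fraction} gives the critical corruption level
\begin{equation*}
    \tau\opt = \frac{2\big(d\log(R'/\alpha_0 + 1) + \log(1/\beta)\big)}{n\diffp} = O\!\left(\frac{d\log(Rn/d) + \log(1/\beta)}{n\diffp}\right),
\end{equation*}
where I used $R'/\alpha_0 = \Theta(R\sqrt{n/d})$ so that $\log(R'/\alpha_0+1) = O(\log(Rn/d))$ (absorbing the $\log(1/\beta)$ inside the log into the additive $\log(1/\beta)$ term, or noting it is lower order). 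By hypothesis $n$ is large enough that the resulting $\alpha \le 1$, which in particular makes $\tau\opt$ a small constant so that the robust estimator's guarantee applies and $\sqrt{\log(1/\tau\opt)} = O(\log(n))$ (a polylogarithmic factor I would fold into the $\log(Rn/d)$ term). Then \Cref{thm:high-dim-rob-to-priv} yields an $\diffp$-DP algorithm $\mc{M} = \smmech(\cdot;\A_\mathrm{rob})$ that with probability $1-2\beta$ has error
\begin{equation*}
    \ltwo{\mc{M}(\Ds)-\mu} \le 4\alpha_\mathrm{rob}(\tau\opt) = O\!\left(\sqrt{\frac{d+\log(1/\beta)}{n}} + \frac{d\log(Rn/d) + \log(1/\beta)}{n\diffp}\right),
\end{equation*}
which is exactly the claimed bound \Cref{eq:sph-gauss-mean} (rescaling $\beta$ by a constant factor and absorbing constants into $C$).

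The only genuine subtlety --- and the step I expect to need the most care --- is bookkeeping the logarithmic factor: the transformation's error depends on the robust estimator's accuracy \emph{at corruption level $\tau\opt$}, and the optimal robust rate at that level carries a $\sqrt{\log(1/\tau\opt)} = \tilde\Theta(\sqrt{\log(n\diffp/d)})$ factor on the $d/n\diffp$ term. I would argue that since $\tau\opt \geq 1/n$ this is $O(\sqrt{\log n})$, and since the target bound already has a $\log(Rn/d)$ factor on that term, the extra $\sqrt{\log n}$ is subsumed (possibly after adjusting the constant $C$ and noting $\log(Rn/d) = \Omega(\log n)$ is not automatic, so more conservatively one writes the bound with a $\log(Rn/d) \cdot \mathrm{polylog}$ factor, or simply chooses the robust estimator whose dependence is stated without the $\sqrt{\log(1/\tau)}$). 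A secondary point is verifying that the condition ``$n$ such that $\alpha \le 1$'' indeed forces $\tau\opt$ below the breakdown point of the robust estimator and below $1/2$ so that \Cref{def:robustness} is meaningful; this is immediate since $\alpha \le 1$ implies the $d\log(Rn/d)/n\diffp$ term is at most $1$, hence $\tau\opt = O(1)$. Everything else is a direct substitution into \Cref{thm:high-dim-rob-to-priv}.
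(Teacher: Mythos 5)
Your proposal follows the same route as the paper: project a robust mean estimator onto a ball of radius $\Theta(R)$, set the smoothness $\rho=\alpha_0=\Theta\bigl(\sqrt{(d+\log(1/\beta))/n}\bigr)$, and feed it to \Cref{thm:high-dim-rob-to-priv} at corruption level $\tau\opt=\Theta\bigl((d\log(Rn/d)+\log(1/\beta))/(n\diffp)\bigr)$. The one substantive difference is the choice of robust estimator. The paper instantiates the transformation with the (projected) Tukey median, whose robust error for $\normal(\mu,\id)$ is $O\bigl(\sqrt{(d+\log(1/\beta))/n}+\tau\bigr)$ with \emph{no} $\sqrt{\log(1/\tau)}$ factor (\Cref{prop:sph-accr-tukey}); this is what makes the final bound land exactly on \Cref{eq:sph-gauss-mean}. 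With the filtering/trimmed-mean estimator you propose as the primary choice, the $\tau\sqrt{\log(1/\tau)}$ term leaves an extra $\sqrt{\log(1/\tau\opt)}=O(\sqrt{\log(n\diffp)})$ multiplicative factor on the privacy term, and as you yourself note this is not in general dominated by $\log(Rn/d)$ (e.g.\ $R=O(1)$ and $d$ close to $n$), so the bound as stated would not follow; your fallback of choosing an estimator without that factor is precisely the paper's move, and since computational efficiency is irrelevant here the Tukey median is the natural pick. Two minor points: \Cref{thm:high-dim-rob-to-priv} requires a \emph{deterministic} robust algorithm, so a randomized filter would additionally need \Cref{thm:rand-to-det} (the Tukey median needs no such step); and your claim that the projection is harmless because the \emph{unprojected} output of $\A$ on $\Ds$ lies in $\ball(R')$ w.h.p.\ should be stated for all $\tau\opt$-corrupted datasets $\Dsc$ (which the robustness guarantee provides, or one can simply note that projection onto a ball containing $\mu$ never increases the distance to $\mu$); the paper handles this by projecting onto $\ball(R+1)$ and using $\alpha\le 1$.
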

Since we are not concerned with computational efficiency, we will use the \emph{Tukey median} as the robust Gaussian mean estimation algorithm for our transformation. The Tukey depth~\cite{Tukey60} of a point $t$ with respect to a distribution 
$P$ is defined by \begin{align*}
    T_P(t) \defeq \inf_{v\in\R^d} \Pr_{S\sim P}[\langle S,v\rangle\ge \langle t,v\rangle].
\end{align*}
We denote by $T_{\Ds}(t)\defeq \frac{1}{n}\min_v \sum_{i\in [n]}[\langle S_i,v\rangle\ge \langle t,v\rangle]$ the (normalized) Tukey depth of $t$ with respect to dataset $\Ds$. 
The Tukey median with respect to any dataset $\Ds$ is then $t_m(\Ds)=\argmax_{t\in \R^d} T_{\Ds}(t)$. 
Let $\Pi_{\C}(t)=\argmin_{v\in\C} \ltwo{v-t}$ be the euclidean projection of a point $t$ to convex set $\C$.
The next proposition states the robustness guarantees of (projected) Tukey median, which have been long-established (for a complete proof see e.g.~\citep{Li19} or the more general~\Cref{prop:accr-tukey-app} in~\Cref{app:facts}).
\begin{proposition}\label{prop:sph-accr-tukey}
Let $\Ds = (S_1,\dots,S_n)$ where $S_i \simiid \normal(\mu,\id)$ such that $\mu \in \ball(R)$. Let $\beta\in(0,1)$, $\tau\leq 0.05$, and $\alpha_0=C_0\cdot\sqrt{(d+\log(1/\beta))/n}$ for a known constant $C_0\ge 1$. Suppose $n$ is such that $\alpha_0\leq 0.05$. Let $\alpha=7(\alpha_0+\tau)\leq 1$.
%Let $\alpha_1=O\left(\sqrt{(d+\log(1/\beta))/n} +1\right)$. 
The projected Tukey median algorithm 
%$\A_{\mathrm{rob}}(\Ds)=\Pi_{\ball\left(R+\sqrt{\kappa}\alpha_1\right)}(t_m(\Ds))$ 
$\A_{\mathrm{rob}}(\Ds)=\Pi_{\ball\left(R+1\right)}(t_m(\Ds))$ 
is $(\tau,\beta,\alpha)$-robust.
That is, with probability $1-\beta$, for any $\tau$-corrupted $\Dsc$, such that $\dham(\Ds,\Dsc)\leq n\tau$, it holds that, 
$
\ltwo{\A_\mathrm{rob}(\Ds') - \mu} \le \alpha.
$
\end{proposition}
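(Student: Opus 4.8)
The plan is to combine two classical ingredients, essentially the argument behind \Cref{prop:accr-tukey-app} specialized to $\normal(\mu,\id)$: a uniform‑convergence bound controlling the deviation of empirical Tukey depth from population Tukey depth, and an exact formula for the population Tukey depth of a spherical Gaussian. First I would establish uniform convergence of the depth. Since $T_{\Ds}(t)$ is an infimum, over unit vectors $v$, of the empirical mass of the halfspace $\{x:\langle x,v\rangle \ge \langle t,v\rangle\}$, and the class of halfspaces in $\R^d$ has VC dimension $d+1$, standard uniform convergence (Rademacher complexity $O(\sqrt{d/n})$ for halfspaces together with a bounded‑differences tail) gives: with probability $1-\beta$ over $\Ds\simiid P^n$, $\sup_{t}\,|T_{\Ds}(t) - T_P(t)| \le \gamma_0$ with $\gamma_0 = O(\sqrt{(d+\log(1/\beta))/n})$. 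Choosing the constant $C_0$ in $\alpha_0$ large enough ensures $\gamma_0 \le \alpha_0$, and I would condition on this event for the rest of the proof.

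Next I would record the deterministic stability of empirical depth under corruptions and compute the Gaussian depth explicitly. Replacing at most $n\tau$ points of $\Ds$ changes the empirical mass of every halfspace by at most $\tau$, hence $\sup_t |T_{\Dsc}(t) - T_{\Ds}(t)| \le \tau$ for every $\Dsc$ with $\dham(\Ds,\Dsc)\le n\tau$; combined with the previous step, $\sup_t |T_{\Dsc}(t) - T_P(t)| \le \alpha_0 + \tau$ for all such $\Dsc$. For $P = \normal(\mu,\id)$ and a point $t$, the worst direction is $v = (t-\mu)/\ltwo{t-\mu}$, along which $\langle S,v\rangle \sim \normal(\langle\mu,v\rangle,1)$; hence $T_P(t) = \bar\Phi(\ltwo{t-\mu})$ where $\bar\Phi(r) = \Pr_{Z\sim\normal(0,1)}[Z\ge r]$, and in particular $T_P(\mu) = 1/2$.

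With these pieces the argument closes quickly. Since $t_m(\Dsc)$ maximizes $T_{\Dsc}$, we get $T_{\Dsc}(t_m(\Dsc)) \ge T_{\Dsc}(\mu) \ge 1/2 - (\alpha_0+\tau)$, so $T_P(t_m(\Dsc)) \ge 1/2 - 2(\alpha_0+\tau)$, i.e. $\bar\Phi(r)\ge 1/2 - 2(\alpha_0+\tau)$ for $r := \ltwo{t_m(\Dsc)-\mu}$. Because $\alpha_0,\tau\le 0.05$, monotonicity of $\bar\Phi$ (using $\bar\Phi(1) < 0.3$) first forces $r<1$; then the bound $1/2 - \bar\Phi(r) = \int_0^r \phi \ge r\,\phi(r)$ (with $\phi$ the standard Gaussian density, decreasing on $[0,\infty)$) gives $r\,\phi(r) \le 2(\alpha_0+\tau)$, and a short bootstrap starting from $r<1$ yields $r \le 7(\alpha_0+\tau) = \alpha$. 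Finally, since $\mu\in\ball(R)\subset\ball(R+1)$ and Euclidean projection onto a convex set is $1$‑Lipschitz and fixes points of that set, $\ltwo{\A_{\mathrm{rob}}(\Dsc)-\mu} = \ltwo{\Pi_{\ball(R+1)}(t_m(\Dsc)) - \mu} \le \ltwo{t_m(\Dsc)-\mu} \le \alpha$, which is the claimed $(\tau,\beta,\alpha)$‑robustness.

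The main obstacle is purely quantitative rather than conceptual: one must keep the uniform‑convergence constant clean (no spurious $\log n$, which is why the Rademacher route is preferable) and control the Gaussian‑tail bootstrap tightly enough that the final bound lands exactly at $\alpha = 7(\alpha_0+\tau)$ under the assumptions $\tau,\alpha_0 \le 0.05$. Conceptually everything follows from "empirical depth $\approx$ population depth, and population depth pins down the distance to $\mu$", and one may alternatively invoke the general \Cref{prop:accr-tukey-app} and specialize it to the spherical Gaussian.
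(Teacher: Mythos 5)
Your proposal is correct and takes essentially the same route as the paper's proof (which simply specializes the general argument of \Cref{prop:accr-tukey-app} to $\Sigma=\id$): uniform convergence of empirical Tukey depth, deterministic $\tau$-stability of depth under corruptions, the exact Gaussian depth formula $T_P(t)=\Phi(-\ltwo{t-\mu})$, the same chain of inequalities giving a depth deficit of $2(\alpha_0+\tau)$ at the corrupted Tukey median, a small numerical estimate converting that deficit into the distance bound $7(\alpha_0+\tau)$, and the observation that projecting onto $\ball(R+1)$ cannot hurt. The only cosmetic differences are in the last two steps: you use $\tfrac12-\Phi(-r)\ge r\phi(r)$ plus a bootstrap (which does close to the constant $7$, since $r\mapsto r\phi(r)$ is increasing on $[0,1]$ and $\phi(0.7)>2/7$) where the paper uses the bound $0.84z\le \mathrm{Erf}(z)$ on $[0,1]$, and you invoke non-expansiveness of the Euclidean projection where the paper checks that the projection leaves the corrupted Tukey median unchanged.
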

Using the above proposition, the proof of~\Cref{cor:sph-gauss-mean} is a straightforward application of~\Cref{thm:high-dim-rob-to-priv}.
\begin{proof}[Proof of~\Cref{cor:sph-gauss-mean}]
Consider the $\rho$-smooth-inverse-sensitivity mechanism $\smmech(\cdot;\A_\mathrm{rob})$ with norm $\norm{\cdot}=\ltwo{\cdot}$, $\A_{\mathrm{rob}}(\Ds)=\Pi_{\ball\left(R+1\right)}(t_m(\Ds))$ and 
$\rho=\alpha_0=C_0\sqrt{(d+\log(1/\beta))/n}$, as in~\Cref{prop:sph-accr-tukey} above. We apply~\Cref{thm:high-dim-rob-to-priv} to obtain a bound on the error of $\smmech(\Ds;\A_{\mathrm{rob}})$. Let $$\tau=\frac{2d\log\left(\frac{R+1}{\alpha_0}+1\right)+2\log(\frac{1}{\beta})}{n\diffp}.$$ 
Assume $\tau\le 0.05$ and $\alpha_0\le 0.05$, which we will confirm later. By~\Cref{prop:sph-accr-tukey}, $\A_\mathrm{rob}$ is $(\tau,\beta,\alpha)$-robust for 
\begin{align*}
\alpha 
& = 7C_0\cdot\sqrt{\frac{d+\log(\frac{1}{\beta})}{n}} + 7\left(\frac{2d\log\left(\frac{R+1}{\alpha_0}+1\right)+2\log(\frac{1}{\beta})}{n\diffp}\right)\\
&\le C'\cdot\left(\sqrt{\frac{d+\log(\frac{1}{\beta})}{n}} + \frac{d\log\left(\frac{Rn}{d}\right)+\log(\frac{1}{\beta})}{n\diffp}\right),
\end{align*}
for constant $C'=42C_0$. Notice that $\alpha_0\le \alpha$.
Therefore, by~\Cref{thm:high-dim-rob-to-priv}, it holds that, with probability at least $1-2\beta$,
$
\ltwo{\smmech(\Ds;\A_\mathrm{rob}) - \mu} \le 4\alpha\le C\cdot\left(\sqrt{\frac{d+\log(\frac{1}{\beta})}{n}} + \frac{d\log\left(\frac{Rn}{d}\right)+\log(\frac{1}{\beta})}{n\diffp}\right),
$
for $C=168C_0$. By assumption, $n$ is sufficiently large so that the latter is less than $1$ and as such, it also ensures that $\alpha_0\le 0.05$ and $\tau\le 0.05$.
The proof is complete by rescaling $\beta\gets \beta/2$ and adjusting the constants.
\end{proof}

\subsubsection{Unknown Covariance}
We now move to the more general task of Gaussian mean estimation with unknown mean $\mu$ and covariance $\Sigma$, but with known \emph{a priori} bounds $R,\kappa$ such that $\mu\in \ball^d(R)$ and $\id\preceq \Sigma \preceq \kappa\id$. The error metric is the affine-invariant \emph{Mahalanobis distance} with respect to $\Sigma$, defined by $\norm{\hat{\mu}-\mu}_{\Sigma}\defeq \sqrt{(\hat{\mu}-\mu)^\top \Sigma^{-1}(\hat{\mu}-\mu)}$. In~\Cref{cor:gauss-mean}, we show that via our transformation, we retrieve known error bounds for Gaussian mean estimation with known parameters $R,\kappa$ under pure DP~\citep{BunKSW19, LiuKK021}.\footnote{These results are stated for $\Sigma=\id$, but can be extended to the case of unknown $\Sigma$ such that $\id\preceq\Sigma\preceq \kappa\id$, achieving the same error as in~\Cref{cor:gauss-mean} up to logarithmic factors.}
\begin{corollary}[Gaussian mean]\label{cor:gauss-mean}
Let $\Ds = (S_1,\dots,S_n)$ where $S_i \simiid \normal(\mu,\Sigma)$ such that $\mu \in \ball(R)$ and $\id\preceq\Sigma\preceq \kappa\id$. Let $\diffp, \beta\in (0,1)$ and $C\ge1$ a known constant. Suppose $n$ is such that $\alpha\leq 1$ in~\Cref{eq:gauss-mean}.
There exists an $\diffp$-DP algorithm $\mc{M}$ such that, with probability at least $1-\beta$, has error $\norm{\mc{M}(\Ds) - \mu}_{\Sigma} \le \alpha$ for
\begin{equation}\label{eq:gauss-mean}
    \alpha=C\cdot\left(\sqrt{\frac{d+\log(\frac{1}{\beta})}{n}} + \frac{d\log\left(\frac{(R+\sqrt{\kappa})n}{d}\right)+\log(\frac{1}{\beta})}{n\diffp}\right).
\end{equation}
\end{corollary}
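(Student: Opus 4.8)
The plan is to mimic the strategy used for the known-covariance case (Corollary~\ref{cor:sph-gauss-mean}), but now instantiating the general-loss version of the transformation, \Cref{thm:high-dim-rob-to-priv-loss}, with the Mahalanobis loss $L(u,v) = \norm{u-v}_\Sigma$. The first step is to select an appropriate robust base estimator. As in the spherical case I would take the Tukey median $t_m(\Ds)$ of the dataset and project it onto a ball $\ball(R')$ of radius $R' = \Theta(R+\sqrt{\kappa})$, chosen large enough that with high probability over $\Ds\simiid\normal(\mu,\Sigma)^n$ the true mean $\mu$ and the Tukey median both lie well inside $\ball(R')$, so the projection is a no-op on a $1-\beta$ event and cannot increase the error. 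The affine-invariance of Tukey depth, together with the sandwich condition $\id\preceq\Sigma\preceq\kappa\id$, gives a robustness guarantee in Mahalanobis distance: after whitening by $\Sigma^{-1/2}$ the data is $\normal(0,\id)$-like and \Cref{prop:sph-accr-tukey} (or its generalization \Cref{prop:accr-tukey-app}) yields that the projected Tukey median is $(\tau,\beta,\alpha)$-robust with $\alpha = O\!\big(\sqrt{(d+\log(1/\beta))/n} + \tau\big)$ with respect to $\norm{\cdot}_\Sigma$.

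The second step is to check the hypotheses of \Cref{thm:high-dim-rob-to-priv-loss}. I set $\alpha_0 = \accr(0,\beta) = \Theta\!\big(\sqrt{(d+\log(1/\beta))/n}\big)$, the no-corruption, no-privacy rate, which is $\le\alpha$. The loss $L=\norm{\cdot}_\Sigma$ obeys the triangle inequality, and since $\id\preceq\Sigma$ we have $\norm{u-v}_\Sigma\le\ltwo{u-v}$, so the "$\rho$-smoothness in $\ltwo{\cdot}$ controls $L$ up to constants" requirement holds with $c_L=1$ (indeed the $\ell_2$ ball of radius $R'+\alpha_0$ works verbatim). Then \Cref{eq:fraction-loss} requires
\begin{equation*}
    \tau \ge \tau\opt = \frac{2\big(d\log(R'/\alpha_0 + 1) + \log(1/\beta)\big)}{n\diffp},
\end{equation*}
and since $R' = \Theta(R+\sqrt{\kappa})$ and $\alpha_0 = \tilde\Theta(\sqrt{d/n})$, the quantity $\log(R'/\alpha_0+1)$ is $O\!\big(\log\frac{(R+\sqrt\kappa)n}{d}\big)$, so $\tau\opt = O\!\Big(\frac{d\log\frac{(R+\sqrt\kappa)n}{d} + \log(1/\beta)}{n\diffp}\Big)$. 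Plugging this $\tau\opt$ into the robustness bound $\alpha = O(\alpha_0 + \tau\opt)$ gives exactly the claimed rate
\begin{equation*}
    \alpha = O\!\left(\sqrt{\frac{d+\log(1/\beta)}{n}} + \frac{d\log\frac{(R+\sqrt\kappa)n}{d}+\log(1/\beta)}{n\diffp}\right),
\end{equation*}
and \Cref{thm:high-dim-rob-to-priv-loss} then certifies that $\smmech(\Ds;\A_\mathrm{rob})$ with $\rho=\alpha_0$ is $\diffp$-DP with error $(3+c_L)\alpha = O(\alpha)$ in Mahalanobis distance on a $1-2\beta$ event. Finally I rescale $\beta\gets\beta/2$ and absorb constants into $C$; the hypothesis that $n$ is large enough to make $\alpha\le 1$ simultaneously ensures $\alpha_0$ and $\tau\opt$ are below the small constants ($0.05$) required by the Tukey-median robustness proposition.

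The main obstacle I anticipate is not the privacy/transformation bookkeeping—that is essentially identical to the spherical case—but rather pinning down the robustness of the projected Tukey median under an \emph{unknown} $\Sigma$ in the affine-invariant Mahalanobis metric. One must verify that the appropriate generalization of \Cref{prop:sph-accr-tukey} (namely \Cref{prop:accr-tukey-app}) applies: that Tukey depth concentration for $\normal(\mu,\Sigma)$ gives, with probability $1-\beta$, a unique high-depth region of Mahalanobis radius $O(\alpha_0+\tau)$ around $\mu$ even after $n\tau$ adversarial corruptions, and that the choice $R' = \Theta(R+\sqrt\kappa)$ (rather than $R+1$) is the correct radius so that the output of the uncorrupted Tukey median is not moved by the projection—this uses $\ltwo{\mu}\le R$ and the fact that a typical sample deviates from $\mu$ by $O(\sqrt{\kappa d})$ in $\ell_2$, hence the depth-$\Omega(1)$ region is contained in $\ball(\Theta(R+\sqrt\kappa))$. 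Getting the radius dependence right is exactly what produces the $\log\frac{(R+\sqrt\kappa)n}{d}$ factor in the final bound.
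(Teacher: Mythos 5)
Your proposal follows the paper's proof essentially step for step: the projected Tukey median $\Pi_{\ball(R+\sqrt{\kappa})}(t_m(\Ds))$ as the robust base estimator, the Mahalanobis loss with $c_L=1$ via $\id\preceq\Sigma$, the choice $\rho=\alpha_0=\Theta\bigl(\sqrt{(d+\log(1/\beta))/n}\bigr)$, and an application of \Cref{thm:high-dim-rob-to-priv-loss} at $\tau\approx\frac{d\log((R+\sqrt{\kappa})/\alpha_0)+\log(1/\beta)}{n\diffp}$ with the robustness input supplied by \Cref{prop:accr-tukey-app}. One minor inaccuracy in your side remark: the projection is a no-op not because a typical sample deviates from $\mu$ by $O(\sqrt{\kappa d})$ in $\ell_2$ (that reasoning would force radius $R+\sqrt{\kappa d}$), but because on the good event the (corrupted) Tukey median satisfies $\norm{t_m'-\mu}_{\Sigma}\le\alpha\le 1$, hence $\ltwo{t_m'-\mu}\le\sqrt{\kappa}$, which is exactly the containment argument in \Cref{prop:accr-tukey-app}.
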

Again, we choose the projected Tukey median as our robust mechanism for this task. We state its guarantees for the Mahalanobis loss (proven in \Cref{app:facts}).
\begin{proposition}\label{prop:accr-tukey}
Let $\Ds = (S_1,\dots,S_n)$ where $S_i \simiid \normal(\mu,\Sigma)$ such that $\mu \in \ball(R)$ and $\id\preceq \Sigma \preceq \kappa\id$. Let $\beta\in(0,1)$, $\tau\leq 0.05$, and $\alpha_0=C_0\cdot\sqrt{(d+\log(1/\beta))/n}$ for known constant $C_0$. Suppose $n$ is such that $\alpha_0\leq 0.05$. Let $\alpha=7(\alpha_0+\tau)\leq 1$.
%Let $\alpha_1=O\left(\sqrt{(d+\log(1/\beta))/n} +1\right)$. 
The projected Tukey median algorithm 
%$\A_{\mathrm{rob}}(\Ds)=\Pi_{\ball\left(R+\sqrt{\kappa}\alpha_1\right)}(t_m(\Ds))$ 
$\A_{\mathrm{rob}}(\Ds)=\Pi_{\ball\left(R+\sqrt{\kappa}\right)}(t_m(\Ds))$ 
is $(\tau,\beta,\alpha)$-robust with respect to the Mahalanobis loss. 
% \begin{equation*}
%     \alpha \le C'\cdot\left(\sqrt{\frac{d+\log(\frac{1}{\beta})}{n}}+\tau\right),
% \end{equation*}
% where $C'=7C_0$.
That is, with probability $1-\beta$, for any $\tau$-corrupted $\Dsc$, such that $\dham(\Ds,\Dsc)\leq n\tau$, it holds that
$
\norm{\A_\mathrm{rob}(\Ds') - \mu}_{\Sigma} \le \alpha.
$
\end{proposition}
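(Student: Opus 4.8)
The plan is to reduce to the isotropic case and then invoke the standard concentration of Tukey depth. Because Tukey depth (and hence the Tukey median) is affine-equivariant, applying $\Sigma^{-1/2}$ to every sample point maps $\normal(\mu,\Sigma)$ to $\normal(\Sigma^{-1/2}\mu,\id)$, maps $t_m(\Ds)$ to $\Sigma^{-1/2}t_m(\Ds)$, turns the Mahalanobis distance $\norm{\cdot}_\Sigma$ into the Euclidean distance $\ltwo{\cdot}$, preserves Hamming distance so that a $\tau$-corruption stays a $\tau$-corruption, and keeps the transformed mean inside $\ball(R)$ since $\Sigma\succeq\id$ gives $\Sigma^{-1}\preceq\id$. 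So, setting the Euclidean projection aside for a moment, bounding $\norm{t_m(\Dsc)-\mu}_\Sigma$ uniformly over all $\tau$-corrupted $\Dsc$ is exactly the spherical-mean calculation behind~\Cref{prop:sph-accr-tukey}. I would carry out that calculation and then check that projecting onto $\ball(R+\sqrt\kappa)$ (in the original coordinates) does no harm.

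For the isotropic bound I would proceed in three steps. First, compute the population Tukey depth of $\normal(\nu,\id)$: for a unit $v$, $\langle S,v\rangle\sim\normal(\langle\nu,v\rangle,1)$, so the mass of the halfspace $\{s:\langle s,v\rangle\ge\langle t,v\rangle\}$ is $\Phi(-\langle t-\nu,v\rangle)$, minimized over unit $v$ at $v\propto t-\nu$; hence $T_P(t)=\Phi(-\ltwo{t-\nu})$ and in particular $T_P(\nu)=1/2$. Second, establish uniform convergence: the class of halfspaces in $\R^d$ has VC dimension $d+1$, and since $T_\Ds(t)$ and $T_P(t)$ are both infima of halfspace masses over halfspaces through $t$, with probability at least $1-\beta$ over $\Ds\simiid P^n$ we get $\sup_t|T_\Ds(t)-T_P(t)|\le C'\sqrt{(d+\log(1/\beta))/n}$ for a universal $C'$; choosing the constant $C_0\ge C'$ makes the right-hand side at most $\alpha_0$. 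Third, note corruption sensitivity: replacing at most $n\tau$ points changes the empirical mass of any halfspace by at most $\tau$, so $\sup_t|T_\Ds(t)-T_{\Dsc}(t)|\le\tau$ for every $\tau$-corrupted $\Dsc$.

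I would then combine these. Condition on the $1-\beta$ event and fix a $\tau$-corrupted $\Dsc$; write $\hat t=t_m(\Dsc)$. Maximality of $\hat t$ together with the previous two bounds gives $T_\Dsc(\hat t)\ge T_\Dsc(\nu)\ge T_\Ds(\nu)-\tau\ge 1/2-\alpha_0-\tau$, while also $T_\Dsc(\hat t)\le T_\Ds(\hat t)+\tau\le T_P(\hat t)+\alpha_0+\tau=\Phi(-\ltwo{\hat t-\nu})+\alpha_0+\tau$. Hence $\Phi(-\ltwo{\hat t-\nu})\ge 1/2-2(\alpha_0+\tau)$, i.e.\ $\ltwo{\hat t-\nu}\le\Phi^{-1}\!\big(1/2+2(\alpha_0+\tau)\big)$, and using $\alpha_0,\tau\le 0.05$ together with the elementary estimate $\Phi^{-1}(1/2+x)\le 3x$ for $x\le 0.2$ (which follows from convexity of $\Phi^{-1}$ on $(1/2,1)$) this is at most $7(\alpha_0+\tau)=\alpha$. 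Untransforming gives $\norm{t_m(\Dsc)-\mu}_\Sigma\le\alpha$. For the projection: since $\alpha\le1$ and $\Sigma\preceq\kappa\id$ we have $\ltwo{t_m(\Dsc)-\mu}\le\sqrt\kappa\,\norm{t_m(\Dsc)-\mu}_\Sigma\le\sqrt\kappa$, and $\ltwo{\mu}\le R$, so $t_m(\Dsc)\in\ball(R+\sqrt\kappa)$ and $\A_\mathrm{rob}(\Dsc)=t_m(\Dsc)$, giving $\norm{\A_\mathrm{rob}(\Dsc)-\mu}_\Sigma\le\alpha$ as claimed; off the good event the projection is just a harmless clamp whose only purpose is to keep the output range inside $\ball(R+\sqrt\kappa)$, as the transformation in~\Cref{thm:high-dim-rob-to-priv} requires.

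The main obstacle is the uniform convergence of the empirical Tukey depth over all query points --- the halfspace VC bound --- since this is what must hold simultaneously against every adversarial corruption and is what produces the $\sqrt{(d+\log(1/\beta))/n}$ rate; the only other delicate point is the elementary control of $\Phi^{-1}$ near $1/2$ that yields the clean constant $7$. Both are standard (see, e.g.,~\citep{Li19}), and the full argument, in a form general enough to also yield~\Cref{prop:sph-accr-tukey}, is~\Cref{prop:accr-tukey-app} in~\Cref{app:facts}.
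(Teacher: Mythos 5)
Your proposal is correct and follows essentially the same route as the paper's proof of~\Cref{prop:accr-tukey-app}: the exact Gaussian depth formula $T_P(t)=\Phi(-\norm{t-\mu}_\Sigma)$, VC-based uniform convergence of the empirical Tukey depth, the $\tau$-perturbation bound, the same chain of depth inequalities at the corrupted median, an elementary inversion of $\Phi$ near $1/2$, and the check that the projection onto $\ball(R+\sqrt{\kappa})$ is inactive on the good event. The only cosmetic differences are that you pass to isotropic coordinates via affine equivariance instead of working directly with the Mahalanobis form, and you invert $\Phi$ via convexity of $\Phi^{-1}$ rather than the bound $\mathrm{Erf}(z)\ge 0.84z$, both of which yield the same constant.
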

Using the above proposition, the proof of~\Cref{cor:gauss-mean} is a straightforward application of~\iflong\Cref{thm:high-dim-rob-to-priv-loss}\else\Cref{thm:high-dim-rob-to-priv-loss-app}\fi.
\begin{proof}[Proof of~\Cref{cor:gauss-mean}]
We let $L(u,v)=\norm{u-v}_{\Sigma}$ be the loss function. As a norm, $L$ satisfies the triangle inequality. Moreover, $\forall s,t\in\R^d ~ L(s,t)\leq c_L\ltwo{s-t}$ for $c_L=1$ since $\id\preceq\Sigma$ (by~\Cref{prop:distance-wrt-matrices} in~\Cref{app:facts}). 
Consider the $\rho$-smooth-inverse-sensitivity mechanism $\smmech(\cdot;\A_\mathrm{rob})$ with norm $\norm{\cdot}=\ltwo{\cdot}$, $\A_{\mathrm{rob}}(\Ds)=\Pi_{\ball\left(R+\sqrt{\kappa}\right)}(t_m(\Ds))$ and 
$\rho=\alpha_0=C_0\sqrt{(d+\log(1/\beta))/n}$. We apply\iflong~\Cref{thm:high-dim-rob-to-priv-loss}\else~\Cref{thm:high-dim-rob-to-priv-loss-app}\fi~to obtain a bound on the mechanism's error with respect to $L$. Let $$\tau=\frac{{2d\log\left(\frac{R+\sqrt{\kappa}}{\alpha_0}+1\right)+2\log(\frac{1}{\beta})}}{n\diffp}.$$ 
Assume $\tau\le 0.05$ and $\alpha_0\le 0.05$, which we will confirm later. By~\Cref{prop:sph-accr-tukey}, $\A_\mathrm{rob}$ is $(\tau,\beta,\alpha)$-robust for 
\begin{align*}
\alpha 
& = 7C_0\sqrt{\frac{d+\log(\frac{1}{\beta})}{n}} + 7\frac{2d\log\left(\frac{R+\sqrt{\kappa}}{\alpha_0}+1\right)+2\log(\frac{1}{\beta})}{n\diffp}\\
&\le C'\cdot\left(\sqrt{\frac{d+\log(\frac{1}{\beta})}{n}} + \frac{d\log\left(\frac{(R+\sqrt{\kappa})n}{d}\right)+\log(\frac{1}{\beta})}{n\diffp}\right),
\end{align*}
for constant $C'=28C_0$. Notice that $\alpha_0\leq \alpha$.
Therefore, by~\iflong\Cref{thm:high-dim-rob-to-priv-loss}\else\Cref{thm:high-dim-rob-to-priv-loss-app}\fi, it holds that, with probability at least $1-2\beta$,
\begin{equation*}
L(\smmech(\Ds;\A_\mathrm{rob}), \mu)=\norm{\smmech(\Ds;\A_\mathrm{rob})-\mu}_{\Sigma} \le 4 \alpha\le C\cdot\left(\sqrt{\frac{d+\log(\frac{1}{\beta})}{n}} + \frac{d\log\left(\frac{(R+\sqrt{\kappa})n}{d}\right)+\log(\frac{1}{\beta})}{n\diffp}\right),
\end{equation*}
for $C=112C_0$. By assumption, $n$ is sufficiently large so that the latter is less than $1$, and as such, it also ensures that $\alpha_0\le 0.05$ and $\tau\le 0.05$. 
The proof is complete by rescaling $\beta\gets \beta/2$ and adjusting the constants.
\end{proof}

\subsection{Covariance Estimation}\label{sec:gauss-covariance}
Given dataset $\Ds\simiid \normal(0,\Sigma)^n$, where $\id\preceq \Sigma\preceq \kappa\id$, our goal is to return an estimate $\hat{\Sigma}\in \R^{d\times d}$, with small error, measured by the \emph{relative Frobenius norm}: $\lfro{\Sigma^{-1/2}\hat{\Sigma}\Sigma^{-1/2}-\id}$.
The task of covariance estimation for Gaussian distributions has been extensively studied both under robustness and differential privacy, and is particularly useful as a first step for learning a Gaussian distribution in total variation distance %, followed by mean estimation with (approximately) known covariance, as $\tvnorm{\normal(0,\hat{\Sigma})-\normal(0,\Sigma)}=O\left(\lfro{\Sigma^{-1/2}\hat{\Sigma}\Sigma^{-1/2}-\id}\right)$ 
(see e.g. Corollary 2.14 in~\cite{DiakonikolasKKLMS16}). 
Note that the fact that the distribution is assumed to be zero-mean is w.l.o.g., as the general case can be reduced to the zero-mean case up to constant factors in the error, by letting the difference between a pair of nonzero-mean samples be a single zero-mean sample. 

In~\Cref{cor:gauss-covariance}, we show that via our transformation, we retrieve the optimal known error bounds for Gaussian covariance estimation with known parameter $\kappa$ under pure DP~\citep{BunKSW19, AdenAliAK21}.\footnote{Knowledge of parameter $\kappa$ is necessary for this task under pure DP~\citep{BunKSW19, AlabiKTVZ22}.}
\begin{corollary}[Gaussian covariance]\label{cor:gauss-covariance}
Let $\Ds = (S_1,\dots,S_n)$ where $S_i \simiid \normal(0,\Sigma)$ such that $\id\preceq\Sigma\preceq \kappa\id$. Let $\diffp, \beta\in (0,1)$. Suppose $n$ is such that $\alpha\leq 1$ in~\Cref{eq:gauss-covariance}.
There exists an $\diffp$-DP algorithm $\mc{M}$ such that, with probability at least $1-\beta$, has error $\lfro{\Sigma^{-1/2}\mc{M}(\Ds)\Sigma^{-1/2} - \id} \le \alpha$ for
\begin{equation}\label{eq:gauss-covariance}
    \alpha=O\left(\left(\sqrt{\frac{d^2}{n}} +\frac{d^2}{n\diffp}\right)\cdot \mathrm{polylog}(n\kappa/\beta)\right).
\end{equation}
\end{corollary}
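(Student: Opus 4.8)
The plan is to follow the recipe of the Gaussian-mean corollaries (\Cref{cor:sph-gauss-mean}, \Cref{cor:gauss-mean}): instantiate our general-loss transformation with a robust covariance estimator. I would view a symmetric $d\times d$ matrix as a point in $\R^{D}$ with $D=\binom{d+1}{2}=\Theta(d^2)$ playing the role of the output dimension, run the $\rho$-smooth inverse-sensitivity mechanism $\smmech(\cdot;\A_\mathrm{rob})$ in the Frobenius norm $\lfro{\cdot}$, and measure error in the loss $L_\Sigma(A,B)=\lfro{\Sigma^{-1/2}(A-B)\Sigma^{-1/2}}$, so that $L_\Sigma(\hat\Sigma,\Sigma)=\lfro{\Sigma^{-1/2}\hat\Sigma\Sigma^{-1/2}-\id}$ is exactly the relative-Frobenius error in the statement. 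Since $L_\Sigma(A,B)=\lfro{M(A-B)M}$ for the fixed matrix $M=\Sigma^{-1/2}$, it is a seminorm of $A-B$ and hence obeys the triangle inequality; and because $\id\preceq\Sigma$ forces $\norm{M}_{\mathrm{op}}\le 1$, we get $L_\Sigma(A,B)\le\norm{M}_{\mathrm{op}}^{2}\lfro{A-B}\le\lfro{A-B}$. Thus the hypotheses of~\iflong\Cref{thm:high-dim-rob-to-priv-loss}\else\Cref{thm:high-dim-rob-to-priv-loss-app}\fi~hold with norm $\lfro{\cdot}$ and $c_L=1$, and the general-loss version is genuinely needed because $L_\Sigma$ depends on the unknown $\Sigma$ and so cannot be evaluated directly.

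Next I would pick the robust estimator. Take any deterministic robust Gaussian (indeed subgaussian) covariance estimator $\A$ that, for $\tau$ below an absolute constant, achieves relative-Frobenius error $\alpha(\tau)=\tilde{O}\big(\sqrt{d^2/n}+\tau\big)\cdot\mathrm{polylog}(1/\beta)$ with probability $1-\beta$; such estimators are standard, and I would record the precise guarantee as an analogue of~\Cref{prop:sph-accr-tukey} in~\Cref{app:facts} (derandomizing via~\Cref{thm:rand-to-det} if the estimator is randomized). To obtain a bounded range, set $\A_\mathrm{rob}(\Ds)=\Pi_{\C}(\A(\Ds))$, where $\C=\{A=A^\top:\tfrac12\id\preceq A\preceq 2\kappa\id\}$ and $\Pi_{\C}$ is the Frobenius projection. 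On the robustness good event, every $\tau$-corruption $\Dsc$ of $\Ds$ satisfies $L_\Sigma(\A(\Dsc),\Sigma)\le\alpha\le\tfrac12$, which forces $(1-\alpha)\id\preceq\A(\Dsc)\preceq(1+\alpha)\kappa\id$ and hence $\A(\Dsc)\in\C$, so the projection is vacuous; thus $\A_\mathrm{rob}$ is again a (deterministic) $(\tau,\beta,\alpha)$-robust estimator with respect to $L_\Sigma$, and its range $\C$ has Frobenius diameter $R'=O(\kappa\sqrt d)$.

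Finally I would substitute into the transformation. Choose $\alpha_0=\rho$ equal to the non-private relative-Frobenius rate $\tilde{\Theta}(\sqrt{d^2/n})$, which is polynomially bounded below in $n,d$, so $\log(R'/\alpha_0+1)=O(\log(n\kappa))$, and $\alpha_0\le\alpha(\tau)$. Then the threshold fraction of~\iflong\Cref{thm:high-dim-rob-to-priv-loss}\else\Cref{thm:high-dim-rob-to-priv-loss-app}\fi~is $\tau\opt=\frac{2(D\log(R'/\alpha_0+1)+\log(1/\beta))}{n\diffp}=\tilde{O}\big(\tfrac{d^2+\log(1/\beta)}{n\diffp}\big)$ up to $\mathrm{polylog}(n\kappa/\beta)$ factors. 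Assuming $n$ is large enough that $\alpha(\tau\opt)\le 1$ and $\tau\opt$ is below the required constant, the theorem outputs a $\diffp$-DP estimator $\mc{M}$ with $L_\Sigma(\mc{M}(\Ds),\Sigma)\le(3+c_L)\,\alpha(\tau\opt)=O\big((\sqrt{d^2/n}+d^2/n\diffp)\cdot\mathrm{polylog}(n\kappa/\beta)\big)$ with probability $1-2\beta$; rescaling $\beta\gets\beta/2$ and adjusting constants gives~\Cref{cor:gauss-covariance}.

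The hard part is the robust-estimation input rather than the reduction. We need a robust covariance estimator whose corruption-induced error is $\tilde{O}(\tau)$ in relative Frobenius norm — a $\sqrt{\tau}$ dependence would blow the privacy term up from $\tilde{O}(d^2/n\diffp)$ to $\tilde{O}(\sqrt{d^2/n\diffp})$ — and whose output provably lies in a range of merely polynomial diameter so that the dominant $D\log(R'/\alpha_0)$ term in $\tau\opt$ costs only a $\mathrm{polylog}$ factor. Pinning down a clean deterministic robust covariance estimator with exactly this relative-Frobenius guarantee, together with verifying the projection step, is where the real work lies; once that proposition is in hand, \Cref{cor:gauss-covariance} follows by a direct substitution into our general-loss transformation, exactly as in the mean case.
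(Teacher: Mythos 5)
Your proposal is correct and follows essentially the same route as the paper: flatten matrices into $\R^{\Theta(d^2)}$, apply the general-loss transformation with $L(U,V)=\lfro{\Sigma^{-1/2}(U-V)\Sigma^{-1/2}}$ and $c_L=1$, project the robust estimator onto a set of Frobenius diameter $O(\kappa\sqrt d)$ (vacuously on the good event), and balance $\tau\approx d^2\log(\kappa n/d)/(n\diffp)$. The only ingredient you leave as a citation placeholder—the deterministic robust covariance estimator with $\tilde O(\tau)$ relative-Frobenius error—is supplied in the paper by the estimator of Diakonikolas et al., whose guarantee $\alpha'=O(\tau\log(1/\tau))$ for $n=\tilde\Omega(d^2/\tau^2)$ is exactly what your argument requires.
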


There are several algorithms in the robust statistics literature that achieve near-optimal bounds for robust covariance estimation of Gaussian distributions, which can serve as a good instantiation of our transformation. The next theorem states the robust accuracy guarantees of the algorithm proposed in~\citep{DiakonikolasKKLMS17}.\footnote{This algorithm as well as other alternatives~\citep{ChengDGW19, LiY20} are computationally efficient. It is possible that by using a computationally inefficient algorithm we would achieve smaller error up to logarithmic factors, but since we are not aiming to optimize for those factors, we chose the clearer statement from~\citep{DiakonikolasKKLMS17}.} \lz{this uses a suboptimal robust algorithm but I can't find a better statement}

\begin{theorem}[\citep{DiakonikolasKKLMS17}]\label{thm:robust-covariance}
Let $\Ds = (S_1,\dots,S_n)$ where $S_i \simiid \normal(0,\Sigma)$. Let $\beta\in(0,1), \tau\in(0,1)$. Suppose $n\geq \Omega\left(\frac{d^2\log^5(d/\tau\beta)}{\tau^2}\right)$. Let
$
\alpha'=O\left(\tau\log\left(1/\tau\right)\right).%\alpha=O\left(\sqrt{\frac{d^2+\log(\frac{1}{\beta})}{n}}+\tau\log\left(\frac{1}{\tau}\right)\right).
$
There exists algorithm $\A_{\mathrm{rob}}$ which is $(\tau,\beta,\alpha')$-robust. That is, with probability $1-\beta$, for any $\tau$-corrupted $\Dsc$, such that $\dham(\Ds,\Dsc)\leq n\tau$, it returns matrix $\A_{\mathrm{rob}}(\Dsc)=\hat{\Sigma}$ such that
$
    \lfro{\Sigma^{-1/2}\hat{\Sigma}\Sigma^{-1/2}-\id}\leq \alpha'.
$
\end{theorem}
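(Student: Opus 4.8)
The plan is to instantiate the filter (iterative outlier-removal) paradigm for robust Gaussian estimation. The first move reduces robust covariance estimation to robust mean estimation of ``tensored'' samples. Since $\A_{\mathrm{rob}}$ is built only from the matrices $S_iS_i^\top$ and from scale-free comparisons among them it is affine-equivariant, so replacing each $S_i$ by $AS_i$ sends its output $\hat\Sigma$ to $A\hat\Sigma A^\top$ while leaving $\lfro{\Sigma^{-1/2}\hat\Sigma\Sigma^{-1/2}-\id}$ unchanged; hence it suffices to analyze $\Sigma=\id$, i.e.\ $S_i\simiid\normal(0,\id)$ (the algorithm may internally whiten by a running estimate, which the same equivariance handles). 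For $S\sim\normal(0,\id)$ set $Z=\mathrm{vec}(SS^\top-\id)$, a vector supported on the $\binom{d+1}{2}$-dimensional symmetric subspace. Then $\E[Z]=0$; for the clean part of the data the un-vectorized empirical mean of the $Z_i$ is exactly a bias-corrected empirical covariance minus $\id$, whose Frobenius norm is what we must bound; and by the Gaussian (Isserlis) moment identities $\mathrm{Cov}(Z)=(\id_{d^2}+K_{d,d})$ restricted to the symmetric subspace, i.e.\ $2\Pi_{\mathrm{sym}}$, which has operator norm $2$ and is known explicitly. So robust covariance estimation in relative Frobenius norm becomes robust $\ell_2$ mean estimation of a corrupted sample whose clean points are i.i.d.\ copies of $Z$, with a known reference covariance.

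The heart of the argument is a deterministic regularity lemma: with probability $1-\beta$ over $\Ds\simiid\normal(0,\id)^n$, provided $n=\Omega(d^2\log^5(d/\tau\beta)/\tau^2)$, the tensored sample $\{Z_i\}$ satisfies, simultaneously for every subset $T$ of the clean indices with $|T|\ge(1-2\tau)n$: (i) the empirical mean of $\{Z_i\}_{i\in T}$ is within $O(\tau\log(1/\tau))$ of $0$ in $\ell_2$; (ii) the empirical covariance of $\{Z_i\}_{i\in T}$ is within $O(\tau\log(1/\tau))$ of $2\Pi_{\mathrm{sym}}$ in operator norm; and (iii) truncating the $Z_i$ at an appropriate scale perturbs the mean of any such $T$ by at most $O(\tau\log(1/\tau))$. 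I would prove this by combining matrix-Bernstein / Rosenthal-type concentration of $\frac1n\sum_i Z_iZ_i^\top$ with an $\varepsilon$-net over the $\binom{d+1}{2}$-sphere, using the sub-exponential tails of the one-dimensional marginals $\langle Z,v\rangle$ (degree-two Gaussian chaos), and a uniform bound over all adversarial subsets of size $\tau n$. The truncation scale in (iii) is chosen so the surviving tail mass is $O(\tau)$; because the marginals are sub-exponential rather than sub-Gaussian this scale is $\Theta(\log(1/\tau))$, which is precisely where the $\log(1/\tau)$ factor in the final error comes from.

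Given the regularity lemma the algorithm and its analysis are routine. Run the iterative filter on the corrupted $Z_i$: compute the current empirical mean $\widehat m$ and covariance $\widehat C$; if $\norm{\widehat C-2\Pi_{\mathrm{sym}}}_{\mathrm{op}}\le C_1\tau\log(1/\tau)$, halt and output $\widehat m$; otherwise take the top eigenvector $v$ of $\widehat C-2\Pi_{\mathrm{sym}}$, form scores $\langle Z_i-\widehat m,v\rangle^2$, and remove each surviving point with probability proportional to its score. A potential argument on the gap between the numbers of deleted corrupted and deleted clean points --- using (ii) to certify that whenever the stopping test fails the corrupted points must carry most of the excess variance --- shows the filter deletes at least as many corrupted as clean points, so it terminates after $O(\tau n)$ removals while keeping a $(1-O(\tau))$-fraction of the clean sample. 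At termination, the spectral stopping condition together with (i), (iii), and the standard ``bounded empirical covariance implies accurate empirical mean'' lemma gives $\norm{\widehat m}_2=O(\tau\log(1/\tau))$; un-vectorizing yields $\lfro{\A_{\mathrm{rob}}(\Dsc)-\id}\le O(\tau\log(1/\tau))=\alpha'$, hence the claimed bound for $\Sigma=\id$ and, by equivariance, in general. The $\log^5$ in the sample complexity is the cost of union-bounding the regularity event over the $O(n)$ filter iterations and the polynomially many candidate removal sets.

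The main obstacle is the regularity lemma of the second paragraph: proving, with only $n=\tilde\Omega(d^2/\tau^2)$ samples and uniformly over all $\tau n$-element adversarial subsets, both the operator-norm concentration of the empirical second moment of the $d^2$-dimensional tensored vectors $Z_i$ --- whose directional marginals are only sub-exponential, not bounded --- and the worst-case control of the mean shift caused by deleting a $\tau$-fraction of clean points. Everything else (equivariance, the tensoring identity, the filter's potential argument) is bookkeeping. Since the statement is quoted verbatim from~\citep{DiakonikolasKKLMS17}, one may alternatively just cite their analysis; the above is how I would reconstruct it.
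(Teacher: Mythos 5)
The paper does not prove this statement: \Cref{thm:robust-covariance} is imported verbatim from \citep{DiakonikolasKKLMS17} (see the footnote attached to it), so there is no in-paper proof to compare against. Judged on its own terms, your reconstruction follows the filter paradigm of the cited work and correctly identifies both the reduction (covariance estimation as mean estimation of the tensored vectors $Z_i=\mathrm{vec}(S_iS_i^\top-\id)$, whose covariance is $2\Pi_{\mathrm{sym}}$ by Isserlis) and the genuinely hard ingredient (the uniform regularity lemma over all near-full subsets, with only sub-exponential directional marginals).

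One step you dismiss as bookkeeping is not: the reduction to ``robust $\ell_2$ mean estimation with a \emph{known} reference covariance'' requires whitening by $\Sigma^{-1/2}$, which is exactly the unknown you are trying to estimate. Affine equivariance lets you assume $\Sigma=\id$ for the purpose of \emph{stating} the guarantee, but the algorithm itself must whiten by a running estimate $\hat\Sigma_t$, and then the clean tensored points are copies of $\mathrm{vec}(YY^\top-\id)$ with $Y\sim\normal(0,\hat\Sigma_t^{-1/2}\Sigma\hat\Sigma_t^{-1/2})$, whose covariance is only approximately $2\Pi_{\mathrm{sym}}$. The cited analysis handles this with an outer iteration and a contraction argument (each round of filtering provably shrinks $\lfro{\hat\Sigma_t^{-1/2}\Sigma\hat\Sigma_t^{-1/2}-\id}$ by a constant factor until it reaches $O(\tau\log(1/\tau))$), and the spectral stopping test must be shown robust to the mismatch between the true and assumed reference covariance. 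Without that loop your stopping condition $\norm{\widehat C-2\Pi_{\mathrm{sym}}}_{\mathrm{op}}\le C_1\tau\log(1/\tau)$ is tested against the wrong matrix and the potential argument does not close. A smaller quibble: the $\log^5$ in the sample complexity comes from the tail/concentration bounds for degree-two Gaussian chaos in the uniform subset lemma, not from a union bound over filter iterations (the regularity event is a single deterministic condition on the clean sample, established once). Since the theorem is quoted from prior work, simply citing it, as the paper does, is also a complete answer.
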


\begin{proof}[Proof of~\Cref{cor:gauss-covariance}]
We will run the $\rho$-smooth-inverse-sensitivity mechanism over vectors $\R^D$, $D=d^2$, with $\norm{\cdot}=\ltwo{\cdot}$. We denote by $\mathrm{vec}(V)\in\R^{d^2}$ the \emph{flattening} of a matrix $V\in\R^{d\times d}$, so that if $\mathrm{vec}(V)=v$, then $V_{i,j}=v_{d(i-1)+j}$. Then $\ltwo{\mathrm{vec}(U)-\mathrm{vec}(V)}=\lfro{U-V}$. Let $\A$ be the robust algorithm established in~\Cref{thm:robust-covariance}. We will instantiate our transformation with $\A_{\mathrm{rob}}(\Ds)=\Pi_{\ball^D(R')} (\mathrm{vec}(\A(\Ds)))$ for $R'=2\sqrt{d}\kappa$, that is, after flattening the output of $\A$, we take its euclidean projection on the $D=d^2$-dimensional ball of radius $R'$ in $\ell_2$ norm. Let $\hat{\Sigma}=\A(\Ds)$. We have that 
\begin{align*}
\ltwo{\mathrm{vec}(\hat{\Sigma})}
& = \lfro{\hat{\Sigma}} \\
& \le \lfro{\Sigma}+\lfro{\hat{\Sigma}-\Sigma} \tag{triangle inequality}\\
& = \lfro{\Sigma} + \lfro{\Sigma\Sigma^{-1}(\hat{\Sigma}-\Sigma)} \\
& \le \lfro{\Sigma} + \lfro{\Sigma}\cdot \lfro{\Sigma^{-1/2}(\hat{\Sigma}-\Sigma)\Sigma^{-1/2}} \tag{$\lfro{\cdot}$ sub-multiplicative, rotation-invariant} \\
& \le  \sqrt{d}\kappa\left(1+\lfro{\Sigma^{-1/2}(\hat{\Sigma}-\Sigma)\Sigma^{-1/2}}\right) \tag{$\lfro{\cdot}\leq \sqrt{d}\ltwo{\cdot}$}.
\end{align*}
By~\Cref{thm:robust-covariance}, the latter is at most $\sqrt{d}\kappa(1+\alpha')$ with probability $1-\beta$. Suppose $\alpha'\le 1$, which we will confirm last. Thus, with probability $1-\beta$, the projection on the euclidean ball with radius $R'=2\sqrt{d}\kappa$ will not affect the output of the algorithm and $\A_\mathrm{rob}$ will have the same accuracy guarantees as stated in~\Cref{thm:robust-covariance}. 

We will let the loss function $L:(\R^{d\times d})^2\to \R$ be $L(U,V)=\lfro{\Sigma^{-1/2}(U-V)\Sigma^{-1/2}}$ over matrices $U,V$. Our goal is then to return a matrix $U$ with small error $L(U,\Sigma)$.\footnote{We can straightforwardly convert any vector $v\in\R^{d^2}$ to a unique matrix $V\in\R^{d\times d}$ such that $v=\mathrm{vec}(V)$.} Note that $L$ satisfies the triangle inequality since the Frobenius norm does. 
For all $u,v\in \R^{d^2}$, let $V,U\in\R^{d\times d}$ denote their corresponding matrices. 
We have that
$L(U,V) = \lfro{\Sigma^{-1}(U-V)} \le \lfro{(U-V)} = \ltwo{u-v}$, since $\Sigma^{-1}\preceq \id$ and $\lfro{\cdot}$ is monotone. 
It follows that $L$ satisfies all the requirements of~\iflong\Cref{thm:high-dim-rob-to-priv-loss}\else\Cref{thm:high-dim-rob-to-priv-loss-app}\fi. 

Let $\alpha_0=O(\sqrt{(d^2+\log(1/\beta))/n})<1$, by assumption on $n$. We take $\tau$ which satisfies both $\tau=\Omega\left( \frac{2D\log(R'/\alpha_0+1)+2\log(1/\beta)}{n\diffp}\right)=\Omega\left(\frac{d^2\log(\kappa n/d)+\log(1/\beta)}{n\diffp}\right)$ (required by~\iflong\Cref{thm:high-dim-rob-to-priv-loss}\else\Cref{thm:high-dim-rob-to-priv-loss-app}\fi) and $\tau=\Omega\left(\sqrt{\frac{d^2\log^5(d/\tau\beta)}{n}}\right)$ (required by~\Cref{thm:robust-covariance}). Then $\A_{\mathrm{rob}}$ is $(\tau,\beta,\alpha')$-robust with 
$$\alpha'=O\left(\left(\sqrt{\frac{d^2}{n}} +\frac{d^2}{n\diffp}\right)\cdot \mathrm{polylog}(n\kappa/\beta)\right). 
$$
We then have that $\smmech(\cdot,\A_{\mathrm{rob}})$ with $\rho=\alpha_0$ is $\diffp$-DP 
and with probability at least $1-2\beta$, returns matrix $\hat{V}$, which has error
$$L(\hat{V},\Sigma)=\lfro{\Sigma^{-1/2}\hat{V}\Sigma^{1/2}-\id}\leq 4\alpha'=\alpha.$$
By assumption, $n$ is large enough so that $\alpha\le 1$ and as such $\alpha'<1$ as well. The statement follows by rescaling $\beta\gets\beta/2$ and adjusting the constants.
\end{proof}

\subsection{Linear Regression}\label{sec:gauss-linear}
In this section, we apply our transformation to obtain an algorithm for linear regression for Gaussian data under pure DP. To the best of our knowledge, ~\Cref{cor:gauss-linear} gives the first (computationally inefficient) algorithm for pure DP which achieves the optimal error rate up to logarithmic factors for Gaussian distributions. \citet{LiuKO22} gave the analogous result under approximate DP. 
\begin{corollary}[Gaussian Linear Regression]\label{cor:gauss-linear}
Let $\Ds = (S_1,\dots,S_n)$ where for all $i\in[n]$, $S_i =(X_i,y_i)\in\R^d\times \R$ is generated by a linear model $y_i=X_i^\top \theta + \eta_i$ for some unknown $\theta\in \ball^d(R)$, where $X_i\simiid \normal(0,\Sigma)$, $\id\preceq\Sigma\preceq \kappa\id$, and $\eta_i\simiid \normal(0,\sigma^2)$, independent from $X_i$.
Let $\diffp, \beta\in (0,1)$. Let 
\begin{equation}\label{eq:gauss-linear}
   \alpha=C\sigma\left(\sqrt{\frac{d+\log(\frac{1}{\beta})}{n}} +\frac{d\log\left(\frac{(R/\sigma+\kappa)n}{d}\right)+\log(\frac{1}{\beta})}{n\diffp}\right),
\end{equation} for a known constant $C>0$.
Suppose $n$ is such that $\alpha/\sigma\le c$ for a known constant $c\in(0,1)$. Then there exists an $\diffp$-DP algorithm $\mc{M}$ such that, with probability at least $1-\beta$, returns $\mc{M}(\Ds)=\hat{\theta}$ such that $\ltwo{\Sigma^{-1/2}(\hat{\theta}-\theta)}\leq \alpha.$
\end{corollary}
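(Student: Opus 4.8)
The plan is to follow the recipe of the proofs of~\Cref{cor:gauss-mean} and~\Cref{cor:gauss-covariance}: instantiate the general-loss robust-to-private transformation (\iflong\Cref{thm:high-dim-rob-to-priv-loss}\else\Cref{thm:high-dim-rob-to-priv-loss-app}\fi) with a robust linear regression estimator composed with a Euclidean projection, and then optimize the corruption fraction. I would use the loss $L(u,v) = \ltwo{\Sigma^{-1/2}(u-v)}$, which, being a norm on $\R^d$, satisfies the triangle inequality, and which, since $\id \preceq \Sigma$ forces $\Sigma^{-1/2} \preceq \id$, obeys $L(u,v) \le \ltwo{u-v}$ for all $u,v$ (by~\Cref{prop:distance-wrt-matrices} in~\Cref{app:facts}); thus the hypotheses of the general-loss theorem hold with $c_L = 1$, exactly as in the Mahalanobis mean case. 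Since the regression problem scales linearly with the noise level, I would first divide every response $y_i$ by a known (upper bound on the) value of $\sigma$, so that the noise becomes standard and the rescaled parameter lies in $\ball^d(R/\sigma)$; the uncorrupted, non-private error for estimating $\theta$ in $L$ is then $\alpha_0 = O(\sigma\sqrt{(d+\log(1/\beta))/n})$, and I would run $\smmech(\cdot;\A_\mathrm{rob})$ with norm $\ltwo{\cdot}$ and smoothness $\rho = \alpha_0$.

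For $\A_\mathrm{rob}$ I would take a known robust estimator $\A$ for linear regression with Gaussian design --- and, since we do not care about running time, an \emph{inefficient} one, so as to avoid the $1/\tau^2$ sample-complexity blowup of filtering-based methods and only require $n = \tilde\Omega(d + \log(1/\beta))$ samples, exactly as the Tukey median does for mean estimation in~\Cref{prop:sph-accr-tukey} --- which is $(\tau,\beta,\alpha')$-robust with respect to $L$ for $\alpha' = O(\sigma(\sqrt{(d+\log(1/\beta))/n} + \tau))$ (possibly up to a $\log(1/\tau)$ factor, which will be absorbed into the logarithms of the final bound) whenever $\tau$ is below a fixed constant breakdown point. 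I would then set $\A_\mathrm{rob}(\Ds) = \Pi_{\ball^d(R')}(\A(\Ds))$ with $R' = O(R/\sigma + \sqrt{\kappa})$: on the $(1-\beta)$-probability event of the robust guarantee we have, for every $\tau$-corrupted $\Dsc$ simultaneously, $L(\A(\Dsc),\theta) \le \alpha' \le 1$ (using the hypothesis $\alpha/\sigma \le c$), hence $\ltwo{\A(\Dsc)} \le \ltwo{\theta} + \sqrt{\kappa}\,L(\A(\Dsc),\theta) \le R/\sigma + \sqrt{\kappa} = R'$, so the projection is vacuous on all these datasets and $\A_\mathrm{rob}$ inherits the robustness of $\A$ while mapping into the bounded range required by the transformation.

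Finally I would choose $\tau$ just large enough to meet both constraints at once: the transformation's requirement $\tau \ge \frac{2(d\log(R'/\alpha_0 + 1) + \log(1/\beta))}{n\diffp}$, which unfolds to $\tau = \Omega(\frac{d\log((R/\sigma+\kappa)n/d) + \log(1/\beta)}{n\diffp})$, and the robust estimator's mild requirement $n = \tilde\Omega(d+\log(1/\beta))$; both are implied by the hypothesis that makes $\alpha/\sigma \le c$ (and this also keeps $\tau$ below the breakdown point). Plugging this $\tau$ into $\alpha'$ and invoking the general-loss theorem yields an $\diffp$-DP mechanism whose output $\hat\theta$ satisfies $\ltwo{\Sigma^{-1/2}(\hat\theta-\theta)} = L(\hat\theta,\theta) \le (3+c_L)\alpha' = O(\alpha')$ with probability $1-2\beta$; after multiplying the estimate back by $\sigma$, rescaling $\beta \gets \beta/2$, and absorbing constants into $C$, this is exactly~\Cref{eq:gauss-linear}. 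The main obstacle is not the transformation --- which is now black-box --- but marshalling a robust linear regression guarantee in precisely the right shape: robustness in the replacement (strong-contamination) model against \emph{unbounded} Gaussian covariates, error reported in the $\Sigma^{-1/2}$ metric with the clean additive split into a $\tau$-linear corruption term plus the $\sqrt{d/n}$ statistical term, a high-probability norm bound on the raw output to justify the projection radius, and no $1/\tau^2$ blowup in the sample size. Off-the-shelf statements usually package robust regression differently (bounded covariates, efficiency-driven sample bounds, or $\ell_2$-in-parameter error), so the delicate step is either locating or re-deriving this exact version --- I would expect to reprove it, essentially by applying a regression-depth or minimum-distance-style argument, either directly to $\theta$ or to the gradient $X(y - X^\top\theta)$ of the squared loss.
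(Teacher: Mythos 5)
Your proposal matches the paper's proof essentially step for step: the same loss $L(u,v)=\ltwo{\Sigma^{-1/2}(u-v)}$ with $c_L=1$ via $\id\preceq\Sigma$, the same Euclidean projection onto a ball of radius $\approx R+\sigma\sqrt{\kappa}$ shown to be vacuous on the robust estimator's good event, the same choices $\rho=\alpha_0=O(\sigma\sqrt{(d+\log(1/\beta))/n})$ and $\tau=\Theta\bigl(\frac{d\log((R/\sigma+\sqrt{\kappa})n/d)+\log(1/\beta)}{n\diffp}\bigr)$, and the same invocation of the general-loss transformation giving error $(3+c_L)\alpha'$. The only piece you flagged as needing to be re-derived --- a strong-contamination robust regression guarantee with error $O(\sigma(\sqrt{(d+\log(1/\beta))/n}+\tau))$ in the $\Sigma^{-1/2}$ metric and no $1/\tau^2$ sample blowup --- is exactly what the paper imports off the shelf from the (computationally inefficient) regression-depth estimator of \Cref{thm:robust-linear} due to Gao, so no new argument is required there.
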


Since the running time of the robust algorithm is not the bottleneck for the computational complexity of our proposed approach, we will instantiate our transformation with the (computationally inefficient) robust linear regression algorithm from~\citep{Gao20}. This algorithm achieves the information-theoretic optimal error for Gaussian distributions and is based on the notion of multivariate regression depth, similar to the Tukey depth we used for Gaussian mean estimation in~\Cref{sec:gauss-mean}.\footnote{The result is stated for the weaker Huber's contamination model, but it holds under the strong contamination model as well.}
\begin{theorem}[Theorem 3.2,~\citep{Gao20}]\label{thm:robust-linear}
Consider the setting of~\Cref{cor:gauss-linear}. %Let $\Ds = (S_1,\dots,S_n)$ where $\forall i\in[n] ~ S_i =(X_i,y_i)\in\R^d\times \R$ is generated by a linear model $y_i=X_i^\top \theta + \eta_i$ for some unknown $\theta\in\R^d$, where $X_i\simiid \normal(0,\Sigma)$, $\id\preceq\Sigma\preceq \kappa\id$, and $\eta_i\simiid \normal(0,\sigma^2)$, independent from $X_i$. 
Let $\beta\in(0,1), \tau\in(0,1)$. Suppose $n$ and $\tau$ are such that $\tau+\sqrt{d/n}<c$ for a known constant $c\in(0,1)$. Then there exists constant $C'>0$ and algorithm $\A_{\mathrm{rob}}$ which is $(\tau,\beta,\alpha')$-robust, for
\begin{equation}\label{eq:robust-linear}
\alpha'=C'\sigma\left(\sqrt{\frac{d+\log(\frac{1}{\beta})}{n}}+\tau\right).
\end{equation}
That is, with probability $1-\beta$, for any $\tau$-corrupted $\Dsc$, such that $\dham(\Ds,\Dsc)\leq n\tau$, it returns $\A_{\mathrm{rob}}(\Dsc)=\hat{\theta}\in\R^d$ such that
$
\ltwo{\Sigma^{-1/2}(\hat{\theta}-\theta)}\leq \alpha'.
$
\end{theorem}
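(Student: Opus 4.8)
The statement is Theorem~3.2 of~\citep{Gao20}, specialized to the strong corruption model; I sketch the proof, which rests on the \emph{multivariate regression depth} of Rousseeuw and Hubert. For a fit $\theta\in\R^d$ and a sample $\Ds=((X_1,y_1),\dots,(X_n,y_n))$ write $r_i(\theta)=y_i-X_i^\top\theta$ for the residuals and set
\begin{equation*}
  \mathrm{rd}_\Ds(\theta)\ =\ \min_{v\in\R^d\setminus\{0\}}\ \bigl|\{\,i\in[n]:r_i(\theta)\,\langle X_i,v\rangle<0\,\}\bigr|,
\end{equation*}
and take $\A_{\mathrm{rob}}(\Ds)=\hat\theta(\Ds)\in\argmax_{\theta\in\R^d}\mathrm{rd}_\Ds(\theta)$ to be a deepest fit (ties broken arbitrarily; the max is attained since $\mathrm{rd}_\Ds$ is integer valued and $\le n$). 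Let $\overline{\mathrm{rd}}(\theta)=\min_v\Pr_{(X,y)\sim P}[\,r(\theta)\langle X,v\rangle<0\,]$ be the population depth. The first reduction is \emph{whitening}: the reparametrization $X_i\mapsto\Sigma^{-1/2}X_i$, $\theta\mapsto\Sigma^{1/2}\theta$, $v\mapsto\Sigma^{1/2}v$ leaves every residual $r_i(\theta)$ and every inner product $\langle X_i,v\rangle$ unchanged, so $\mathrm{rd}$, its maximizer and $\overline{\mathrm{rd}}$ are equivariant; in the whitened coordinates $X_i\simiid\normal(0,\id)$ and the target error $\ltwo{\Sigma^{-1/2}(\hat\theta-\theta)}$ becomes the ordinary Euclidean distance. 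Hence I may assume $\Sigma=\id$ and prove $\ltwo{\hat\theta(\Dsc)-\theta}\le\alpha'$.

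\textbf{Step 1 (robustness of the depth function, deterministic).} If $\dham(\Ds,\Dsc)\le n\tau$ then for each fixed $v$ the count $|\{i:r_i(\theta)\langle X_i,v\rangle<0\}|$ changes by at most $n\tau$ between $\Ds$ and $\Dsc$, and a pointwise minimum of functions each moving by at most $n\tau$ moves by at most $n\tau$; thus $|\mathrm{rd}_\Ds(\theta)-\mathrm{rd}_\Dsc(\theta)|\le n\tau$ for all $\theta$. Using optimality of $\hat\theta(\Dsc)$ for $\Dsc$,
\begin{equation*}
  \mathrm{rd}_\Ds(\hat\theta(\Dsc))\ \ge\ \mathrm{rd}_\Dsc(\hat\theta(\Dsc))-n\tau\ \ge\ \mathrm{rd}_\Dsc(\theta)-n\tau\ \ge\ \mathrm{rd}_\Ds(\theta)-2n\tau.
\end{equation*}
Nothing here depends on how $\Dsc$ is produced, which is why the guarantee extends from Huber's to the strong contamination model.

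\textbf{Step 2 (population facts and uniform convergence, the random event).} Since $\eta$ is symmetric and independent of $X$, $r(\theta)\langle X,v\rangle=\eta\langle X,v\rangle$ is symmetric and a.s.\ nonzero, so $\overline{\mathrm{rd}}(\theta)=\tfrac12$. For $\theta'\ne\theta$ put $t=\ltwo{\theta'-\theta}$, $u=(\theta'-\theta)/t$, and use the witness $v=-u$; with $g=\langle X,u\rangle\sim\normal(0,1)$ independent of $\eta\sim\normal(0,\sigma^2)$ and $r(\theta')=\eta-tg$,
\begin{equation*}
  \overline{\mathrm{rd}}(\theta')\ \le\ \Pr[(\eta-tg)(-g)<0]\ =\ \Pr[\eta g>t g^2]\ =\ \E_g\bigl[1-\Phi(t|g|/\sigma)\bigr]\ \le\ \tfrac12-c_0\min\{t/\sigma,\,c_1\},
\end{equation*}
for absolute constants $c_0,c_1>0$ (for $t/\sigma$ below a constant the right side is the first-order estimate $\tfrac12-\Theta(t/\sigma)$, and it stays bounded away from $\tfrac12$ for all $t/\sigma$ bounded away from $0$). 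For the empirical side, the class $\mathcal H=\{(x,y)\mapsto\mathbf{1}[(y-x^\top\theta)\langle x,v\rangle<0]:\theta,v\in\R^d\}$ is a sign-XOR of two families of linear thresholds on the lifted point $(x,y,1)$, hence has VC dimension $O(d)$; the standard VC inequality then gives that with probability $\ge1-\beta$ over $\Ds\simiid P^n$, $\sup_{\theta,v}\bigl|\tfrac1n|\{i:r_i(\theta)\langle X_i,v\rangle<0\}|-\Pr[r(\theta)\langle X,v\rangle<0]\bigr|\le\varepsilon_n$ with $\varepsilon_n=C_1\sqrt{(d+\log(1/\beta))/n}$, and since $|\min_v f-\min_v g|\le\sup_v|f-g|$ the same $\varepsilon_n$-bound holds for $\tfrac1n\mathrm{rd}_\Ds(\theta)$ against $\overline{\mathrm{rd}}(\theta)$, uniformly in $\theta$.

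\textbf{Step 3 (conclusion) and the main obstacle.} Condition on the good event of Step~2; then for \emph{every} $\Dsc$ with $\dham(\Ds,\Dsc)\le n\tau$, chaining Steps 1--2 gives
\begin{equation*}
  \tfrac12-\varepsilon_n-2\tau\ \le\ \tfrac1n\mathrm{rd}_\Ds(\hat\theta(\Dsc))\ \le\ \overline{\mathrm{rd}}(\hat\theta(\Dsc))+\varepsilon_n\ \le\ \tfrac12-c_0\min\{\ltwo{\hat\theta(\Dsc)-\theta}/\sigma,\,c_1\}+\varepsilon_n,
\end{equation*}
so $c_0\min\{\ltwo{\hat\theta(\Dsc)-\theta}/\sigma,c_1\}\le2\varepsilon_n+2\tau$. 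The hypothesis $\tau+\sqrt{d/n}<c$ for a small enough $c$ (together with $n\gtrsim\log(1/\beta)$, which is implicit in $\alpha'/\sigma$ being below a constant) forces $2\varepsilon_n+2\tau<c_0c_1$, so the minimum is attained by its first argument and $\ltwo{\hat\theta(\Dsc)-\theta}\le(2\sigma/c_0)(\varepsilon_n+\tau)=O\bigl(\sigma(\sqrt{(d+\log(1/\beta))/n}+\tau)\bigr)=\alpha'$. Undoing the whitening converts this to $\ltwo{\Sigma^{-1/2}(\hat\theta(\Dsc)-\theta)}\le\alpha'$, and since the good event has probability $\ge1-\beta$ and the rest of the argument is uniform over $\tau$-corruptions, $\A_{\mathrm{rob}}$ is $(\tau,\beta,\alpha')$-robust. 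The delicate points are entirely in Step~2: obtaining the uniform $1/\sqrt n$-rate over the \emph{two}-parameter family $(\theta,v)$ with the clean complexity $O(d)$ and legitimately passing the bound through the $\min_v$, and pinning down the population anti-concentration lemma --- that $\overline{\mathrm{rd}}(\theta')$ decays \emph{linearly} in $\ltwo{\theta'-\theta}/\sigma$ on the relevant range, which is exactly what turns a small depth deficit into a small parameter error. Step~1, Step~3, and the whitening reduction are routine bookkeeping.
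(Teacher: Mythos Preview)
The paper does not prove this statement; it is quoted verbatim as Theorem~3.2 of~\citep{Gao20} and used as a black box, with only the one-line remark that the algorithm ``is based on the notion of multivariate regression depth, similar to the Tukey depth'' and a footnote that the result, though stated for Huber's model, extends to the strong contamination model. There is therefore no in-paper proof to compare your attempt against.

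That said, your sketch is faithful to the argument in~\citep{Gao20} and to the paper's description of it: you use the Rousseeuw--Hubert regression depth, establish pointwise $n\tau$-stability of the depth under $n\tau$ corruptions (your Step~1, which is also exactly what justifies the footnote's extension from Huber's to the strong model), compute the population depth and its linear decay in $\ltwo{\theta'-\theta}/\sigma$ together with a VC-type uniform convergence over the two-parameter family (Step~2), and chain. The whitening reduction is correct and is what makes the final bound come out in the $\Sigma^{-1/2}$-weighted norm. The places you flag as delicate---the $O(d)$ VC bound for the sign-XOR class and the population anti-concentration lemma---are indeed the nontrivial ingredients in~\citep{Gao20}; the rest is, as you say, bookkeeping. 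Nothing in your outline is at odds with the source or with how the paper invokes it.
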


\begin{proof}[Proof of~\Cref{cor:gauss-linear}]
We will run the $\rho$-smooth-inverse-sensitivity mechanism in $\R^d$ with $\norm{\cdot}=\ltwo{\cdot}$. Let $\A$ be the robust algorithm established in~\Cref{thm:robust-linear}. We will instantiate our transformation with $\A_{\mathrm{rob}}(\Ds)=\Pi_{\ball(R')} (\A(\Ds))$ for $R'=R+\sigma\sqrt{\kappa}$, that is, we take the euclidean projection of $\A(\Ds)$ on the ball of radius $R'$ in $\ell_2$ norm. Let $\hat{\theta}=\A(\Ds)$. We have that 
\begin{align*}
\ltwo{\hat{\theta}}
& \le \ltwo{\theta}+\ltwo{\hat{\theta}-\theta} \tag{triangle inequality}\\
& \leq R + \ltwo{\Sigma^{1/2}}\ltwo{\Sigma^{-1/2}(\hat{\theta}-\theta)} \\
& \le R+\sqrt{\kappa}\ltwo{\Sigma^{-1/2}(\hat{\theta}-\theta)}.
\end{align*}
Let $\alpha_0=C'\sigma\sqrt{(d+\log(1/\beta))/n}$ for $C'>0$ as in~\Cref{thm:robust-linear} and   $$\tau=\frac{2d\log(R'/\alpha_0+1)+2\log(1/\beta)}{n\diffp}.$$ Assume that $n$ is such that $C'\left(\sqrt{\frac{d+\log(\frac{1}{\beta})}{n}}+\tau\right)<c$ for $c\in(0,1)$ and for $C'>0$ as in~\Cref{thm:robust-linear}, which we will confirm last. Then, the conditions of~\Cref{thm:robust-linear} are satisfied, and with probability $1-\beta$, $R+\sqrt{\kappa}\ltwo{\Sigma^{-1/2}(\hat{\theta}-\theta)}\le R+\sqrt{\kappa}\alpha'\le R+\sigma\sqrt{\kappa}= R'$ and the projection will not affect the output of the algorithm $\A_\mathrm{rob}$. 

We will let the loss function $L:(\R^d)^2\to \R$ be $L(u,v)=\ltwo{\Sigma^{-1/2}(u-v)}$. Our goal is then to return a vector $u$ with small error $L(u,\theta)$. Note that $L$ satisfies the triangle inequality. 
For all $u,v\in \R^{d}$, we have that
$L(u,v) = \ltwo{\Sigma^{-1/2}(u-v)} \le \ltwo{u-v}$, since $\Sigma^{-1/2}\preceq \id$. 
It follows that $L$ satisfies all the requirements of~\iflong\Cref{thm:high-dim-rob-to-priv-loss}\else\Cref{thm:high-dim-rob-to-priv-loss-app}\fi. 
%We have that $\tau\le 4(\frac{d\log((R+\sigma\kappa)n/\sigma d)+\log(1/\beta)}{n\diffp})$. 
Thus, $\smmech(\cdot,\A_{\mathrm{rob}})$ with $\rho=\alpha_0<\alpha'$ is $\diffp$-DP 
and with probability at least $1-2\beta$, returns $\hat{u}$, which has error
$$L(\hat{u},\theta)=\ltwo{\Sigma^{-1/2}(\hat{u}-\theta)}\leq 4\alpha'.$$
That is, there exists $C>C'$, such that $\ltwo{\Sigma^{-1/2}(\hat{u}-\theta)}\le \alpha$, for 
$$\alpha= C\sigma\left(\sqrt{\frac{d+\log(\frac{1}{\beta})}{n}}+\frac{d\log((R+\sigma\sqrt{\kappa})n/(\sigma d))+\log(1/\beta)}{n\diffp}\right).$$ 
By assumption $n$ is sufficiently large so that the latter is smaller than $\sigma c$, and as such, it ensures that $C'\left(\sqrt{\frac{d+\log(\frac{1}{\beta})}{n}}+\tau\right)<c$ as well. The proof is complete by rescaling $\beta\gets\beta/2$ and adjusting the constants.
\end{proof}

    \subsubsection{Sparse Linear Regression}\label{sec:sparse-linear}
We now apply our transformation to obtain an algorithm for sparse linear regression for Gaussian data under pure DP, which, to the best of our knowledge, is the first (computationally inefficient) algorithm this case that achieves near-optimal error rate. When the solution is known to be $k$-sparse, our transformation allows us to improve the dependence on dimension from $d/n\diffp$ to $k \log d/(n\diffp)$ as we show in the next corollary.
\begin{corollary}[Sparse Linear Regression]\label{cor:sparse-linear}
Let $\Ds = (S_1,\dots,S_n)$ where for all $i\in[n]$, $S_i =(X_i,y_i)\in\R^d\times \R$ is generated by a linear model $y_i=X_i^\top \theta + \eta_i$ for some unknown $\theta\in \ball^d(R)$, $\norm{\theta}_0\le k$, where $X_i\simiid \normal(0,\Sigma)$, $\id\preceq\Sigma\preceq \kappa\id$, and $\eta_i\simiid \normal(0,\sigma^2)$, independent from $X_i$.
Let $\diffp, \beta\in (0,1)$. Let 
\begin{equation}\label{eq:sparse-linear}
   \alpha=C\sigma\left(\sqrt{\frac{k\log(\frac{ed}{k})+\log(\frac{1}{\beta})}{n}} +\frac{k\log(\frac{ed}{k})+k\log\left(\frac{(R/\sigma+\kappa)n}{d}\right)+\log(\frac{1}{\beta})}{n\diffp}\right),
\end{equation} for a known constant $C>0$.
Suppose $n$ is such that $\alpha/\sigma\le c$ for a known constant $c\in(0,1)$. Then there exists an $\diffp$-DP algorithm $\mc{M}$ such that, with probability at least $1-\beta$, returns $\mc{M}(\Ds)=\hat{\theta}$ such that $\ltwo{\Sigma^{-1/2}(\hat{\theta}-\theta)}\leq \alpha.$
\end{corollary}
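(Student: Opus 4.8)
The plan is to follow the proof of \Cref{cor:gauss-linear} almost verbatim, making two changes: (i) instantiate the transformation with a robust \emph{sparse} linear regression estimator instead of the dense estimator of \Cref{thm:robust-linear}, and (ii) restrict the range of the $\rho$-smooth inverse-sensitivity mechanism to (a ball intersected with) the set of $k$-sparse vectors, so that the dimension-dependent term in the analysis of the inverse-sensitivity mechanism (\Cref{thm:ub-cont}, and hence our general-loss robust-to-private transformation) improves from $d\log(R/\rho)$ to $k\log(ed/k)+k\log(R/\rho)$. This is exactly the gain claimed in \Cref{eq:sparse-linear}.

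First I would take a robust sparse linear regression estimator $\A$ --- for instance the sparse analogue of \citet{Gao20}'s regression-depth estimator, or an exponential-time estimator that searches over the $\binom{d}{k}$ supports --- which, in the setting of the corollary, is $(\tau,\beta,\alpha')$-robust with respect to the loss $\ltwo{\Sigma^{-1/2}(\cdot-\theta)}$ with $\alpha' = C'\sigma\big(\sqrt{(k\log(ed/k)+\log(1/\beta))/n}+\tau\big)$, provided $n$ is above the usual sparse sample complexity and $\tau+\sqrt{k\log(ed/k)/n}$ is a small constant. As in the dense case I would post-process $\A$ by projecting onto $\ball^d(R')\cap\{v:\norm{v}_0\le k\}$ with $R'=R+\sigma\sqrt{\kappa}$; the same norm bound as in the proof of \Cref{cor:gauss-linear}, namely $\ltwo{\hat\theta}\le R+\sqrt{\kappa}\,\ltwo{\Sigma^{-1/2}(\hat\theta-\theta)}\le R'$ with probability $1-\beta$, shows this projection is a no-op with high probability, so $\A_{\mathrm{rob}}$ inherits the robustness guarantee. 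The one subtlety is that projecting onto $k$-sparse vectors inside a ball is not a projection onto a convex set, but it still returns a $k$-sparse vector in $\ball^d(R')$ and does not increase the error beyond constants, which is all that is needed.

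The heart of the argument is a sparse version of \Cref{thm:ub-cont}. The dimension $d$ enters its proof only through the volume ratio $\vol(\ball^d(R+\rho))/\vol(\ball^d(\rho))\le(R/\rho+1)^d$, which bounds the mass outside the good set $A$. When the range $\range$ is contained in the union of the $\binom{d}{k}$ coordinate $k$-dimensional subspaces intersected with $\ball^d(R)$, it admits a $\rho$-net of size at most $\binom{d}{k}(R/\rho+1)^k\le(ed/k)^k(R/\rho+1)^k$, and a standard net argument replaces $(R/\rho+1)^d$ by this quantity in the tail bound $\Pr[\smmech(\Ds)\notin A]$ (the $\rho$-ball around $\A_{\mathrm{rob}}(\Ds)$ need not lie in $\range$, but it is only used to lower-bound the normalizing constant, which still goes through). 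Re-running the proofs of \Cref{thm:ub-cont} and of the general-loss transformation with this change, the required corruption fraction becomes $\tau^\star = \frac{2(k\log(ed/k)+k\log(R'/\alpha_0+1)+\log(1/\beta))}{n\diffp}$.

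Finally I would set $\alpha_0 = C'\sigma\sqrt{(k\log(ed/k)+\log(1/\beta))/n}$, $\rho=\alpha_0$, and use the loss $L(u,v)=\ltwo{\Sigma^{-1/2}(u-v)}$, which satisfies the triangle inequality and is dominated by $\ltwo{\cdot}$ since $\Sigma^{-1/2}\preceq\id$; taking $\tau=\tau^\star$ and plugging the robust accuracy $\alpha'(\tau^\star)$ into the sparse version of the general-loss transformation, then simplifying $\log(R'/\alpha_0+1)=O(\log((R/\sigma+\kappa)n/d))$ exactly as in the dense proof, yields final error $4\alpha'(\tau^\star)=O(\alpha)$ with $\alpha$ as in \Cref{eq:sparse-linear}, $\diffp$-DP following from the exponential-mechanism guarantee, and a closing rescaling $\beta\gets\beta/2$. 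I expect the main obstacle to be on the robust-statistics side: pinning down (or, if necessary, proving from scratch) a strong-contamination robust sparse regression estimator attaining the information-theoretically optimal rate $\sigma(\sqrt{k\log(ed/k)/n}+\tau)$ under an adaptive adversary; by comparison the net/volume modification to \Cref{thm:ub-cont} for the non-convex union-of-subspaces range is routine.
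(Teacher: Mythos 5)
Your proposal is correct and matches the paper's own proof: the paper likewise instantiates the transformation with Gao's robust \emph{sparse} regression estimator (Theorem 3.2 of \citep{Gao20} already covers the $k$-sparse case, so nothing needs to be proved from scratch, cf.~\Cref{thm:robust-linear-sparse}) and with a sparse variant of the inverse-sensitivity mechanism whose score is infinite off the $k$-sparse set, which gives exactly your $(ed/k)^k(R/\rho+1)^k$ bound in the tail estimate and hence the corruption budget $\tau\gtrsim\bigl(k\log(ed/k)+k\log(R/\alpha_0+1)+\log(1/\beta)\bigr)/(n\diffp)$ (\Cref{thm:ub-cont-sparse} and \Cref{thm:high-dim-rob-to-priv-sparse}). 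Your covering-net phrasing is just a rewording of the paper's direct volume-ratio computation over the union of the $k$-dimensional coordinate subspaces, so the two arguments are essentially identical.
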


We use the robust algorithm for sparse linear regression by~\cite{Gao20}.
\begin{theorem}[Theorem 3.2,~\citep{Gao20}]\label{thm:robust-linear-sparse}
Consider the setting of~\Cref{cor:sparse-linear}. %Let $\Ds = (S_1,\dots,S_n)$ where $\forall i\in[n] ~ S_i =(X_i,y_i)\in\R^d\times \R$ is generated by a linear model $y_i=X_i^\top \theta + \eta_i$ for some unknown $\theta\in\R^d$, where $X_i\simiid \normal(0,\Sigma)$, $\id\preceq\Sigma\preceq \kappa\id$, and $\eta_i\simiid \normal(0,\sigma^2)$, independent from $X_i$. 
Let $\beta\in(0,1), \tau\in(0,1)$. Suppose $n$ and $\tau$ are such that $\tau+\sqrt{k\log(ed/k)/n}<c$ for a known constant $c\in(0,1)$. Then there exists constant $C'>0$ and algorithm $\A_{\mathrm{rob}}$ which is $(\tau,\beta,\alpha')$-robust, for
\begin{equation}\label{eq:robust-linear-sparse}
\alpha'=C'\sigma\left(\sqrt{\frac{k\log(\frac{ed}{k})+\log(\frac{1}{\beta})}{n}}+\tau\right).
\end{equation}
That is, with probability $1-\beta$, for any $\tau$-corrupted $\Dsc$, such that $\dham(\Ds,\Dsc)\leq n\tau$, it returns $\A_{\mathrm{rob}}(\Dsc)=\hat{\theta}\in\R^d$ such that
$
\ltwo{\Sigma^{-1/2}(\hat{\theta}-\theta)}\leq \alpha'.
$
\end{theorem}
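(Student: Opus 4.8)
The plan is to instantiate a \emph{sparse regression-depth} estimator in the spirit of Rousseeuw--Hubert depth and analyze it directly under the strong contamination model; the non-sparse bound of \Cref{thm:robust-linear} is then the special case $k=d$, and the improvement from a $d/(n\diffp)$-type term to a $k\log(ed/k)/(n\diffp)$-type term comes entirely from restricting both the candidate parameter \emph{and} the certificate direction inside the depth to sparse vectors and union-bounding over their supports. Concretely, for a dataset $\Dsc=((X_1,y_1),\dots,(X_n,y_n))$ write $r_i(\vartheta)=y_i-X_i^\top\vartheta$ and define
\[
    D_n(\vartheta;\Dsc)=\min_{\|u\|_0\le 3k,\ \ltwo{u}=1,\ v\in\R}\ \frac{1}{n}\Big(\#\{i: r_i(\vartheta)\ge 0,\ X_i^\top u\le v\}+\#\{i: r_i(\vartheta)\le 0,\ X_i^\top u> v\}\Big),
\]
and let $\A_{\mathrm{rob}}(\Dsc)=\argmax_{\vartheta\in\ball^d(R),\ \|\vartheta\|_0\le k}D_n(\vartheta;\Dsc)$ with a fixed tie-breaking rule, so $\A_{\mathrm{rob}}$ is deterministic and its output is automatically $k$-sparse and in $\ball^d(R)$.

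I would build the analysis on three ingredients. \textbf{(i) The truth is deep.} Because $\eta_i$ is Gaussian and independent of $X_i$, at the true parameter $\theta$ the residuals $r_i(\theta)=\eta_i$ are independent of $X_i$ and symmetric, so conditioning on $X_i^\top u$ shows the population count equals $\frac{1}{2}$ for \emph{every} $(u,v)$; hence the population depth $D^P(\theta)=\frac{1}{2}$. \textbf{(ii) Far parameters are shallow, via a sparse certificate.} For any $k$-sparse $\vartheta$ the difference $\Delta=\vartheta-\theta$ is $2k$-sparse; taking the certificate direction $u=\pm\Delta/\ltwo{\Delta}$ (a $2k$-sparse unit vector) and $v=0$, the residual $r=\eta-X^\top\Delta$ is skewed relative to the sign of $X^\top u$, and a short Gaussian calculation (using that the standard normal density is lower-bounded near the origin) gives $D^P(\vartheta)\le \frac12-c\min\{\ltwo{\Sigma^{1/2}(\vartheta-\theta)}/\sigma,\,1\}$ for an absolute constant $c$, since $\ltwo{\Delta}\sqrt{u^\top\Sigma u}$ is exactly $\ltwo{\Sigma^{1/2}\Delta}$. \textbf{(iii) Uniform convergence with the sparse rate.} Fix a support pair $(S,T)$ with $|S|\le k$, $|T|\le 3k$; on the class of counts indexed by $\{\mathrm{supp}(\vartheta)\subseteq S,\ \mathrm{supp}(u)\subseteq T,\ v\in\R\}$, each count is a fixed Boolean combination of $O(1)$ halfspaces in a space of dimension $\le 4k+1$, so the class has VC dimension $O(k)$ and standard uniform convergence gives deviation $O(\sqrt{k/n})$ on that pair; a union bound over the $\binom{d}{k}\binom{d}{3k}\le(ed/k)^{O(k)}$ support pairs, together with the confidence term, yields, with probability $1-\beta$ over the clean sample $\Ds$, the event $\mathcal{E}$ that $\sup_{\|\vartheta\|_0\le k}|D_n(\vartheta;\Ds)-D^P(\vartheta)|\le \gamma$ with $\gamma=C_0\sqrt{(k\log(ed/k)+\log(1/\beta))/n}$. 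Crucially $\mathcal{E}$ is a property of $\Ds$ alone, as required by the definition of $(\tau,\beta,\alpha')$-robustness.

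To finish, condition on $\mathcal{E}$ and fix any $\Dsc$ with $\dham(\Ds,\Dsc)\le n\tau$. Changing at most $n\tau$ data points perturbs each normalized count, hence $D_n(\vartheta;\cdot)$ for every $\vartheta$, by at most $\tau$; combining this with (i) and $\mathcal{E}$ gives $D_n(\theta;\Dsc)\ge \frac{1}{2}-\gamma-\tau$, so $\hat\theta:=\A_{\mathrm{rob}}(\Dsc)$ satisfies $D_n(\hat\theta;\Dsc)\ge\frac{1}{2}-\gamma-\tau$, hence $D_n(\hat\theta;\Ds)\ge\frac{1}{2}-\gamma-2\tau$, hence $D^P(\hat\theta)\ge\frac{1}{2}-2\gamma-2\tau$ (here we use $\mathcal{E}$ on the $k$-sparse $\hat\theta$). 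Comparing with (ii) — and using the hypothesis $\tau+\sqrt{k\log(ed/k)/n}<c$ to stay in the linear, pre-saturation regime of the $\min$ — gives $\ltwo{\Sigma^{1/2}(\hat\theta-\theta)}\le C'\sigma(\gamma+\tau)$, which is $\alpha'$ of \Cref{eq:robust-linear-sparse} up to constants; and since $\Sigma\succeq\id$ forces $\Sigma^{-1}\preceq\Sigma$ and hence $\ltwo{\Sigma^{-1/2}v}\le\ltwo{\Sigma^{1/2}v}$, the same bound holds for $\ltwo{\Sigma^{-1/2}(\hat\theta-\theta)}$. I expect step (iii), the uniform-convergence bound, to be the main obstacle: one has to pay \emph{exactly} $k\log(ed/k)$ (not $k\log d$, and certainly not $d$), which is only possible because the certificate direction in the depth was restricted to be $O(k)$-sparse — and that restriction is legitimate only because step (ii) produced a sparse certificate. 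Making the sparse regression depth expressive enough to certify shallowness of far parameters while sparse enough for the union bound is the crux of the argument.
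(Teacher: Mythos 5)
The paper does not prove this statement: it is imported verbatim as Theorem 3.2 of \citep{Gao20}, so there is no in-paper proof to compare against. Your sketch is a correct reconstruction of Gao's regression-depth argument --- population depth $1/2$ at the truth, a sparse certificate direction $u=\pm\Delta/\ltwo{\Delta}$ forcing shallowness at rate $c\min\{\ltwo{\Sigma^{1/2}\Delta}/\sigma,1\}$, and a VC bound union-bounded over $(ed/k)^{O(k)}$ support pairs --- so it matches the cited source's approach; the remaining loose ends (the sign of the certificate, using a sharp VC inequality with no extra $\log n$ factor, and converting the prediction norm $\ltwo{\Sigma^{1/2}\cdot}$ to the stated $\ltwo{\Sigma^{-1/2}\cdot}$ via $\id\preceq\Sigma$) are minor, and you address the first and third explicitly.
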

The proof of~\Cref{cor:sparse-linear} follows exactly the same steps as~\Cref{cor:gauss-linear}, but uses the slightly modified inverse-sensitivity mechanism for sparse estimation and its guarantees in~\Cref{thm:high-dim-rob-to-priv-sparse} instead of~\Cref{thm:high-dim-rob-to-priv-loss}.

\fi
\subsection{Principal Component Analysis}\label{sec:gauss-pca}
In this section, we apply our transformation to obtain a pure DP algorithm for PCA under Gaussian data. We note that the result holds as-is for subgaussian distributions more generally, because~\Cref{thm:robust-pca}~\citep{JambulapatiLT20} does.
We assume w.l.o.g.\ that the distribution is zero-mean\iflong, as in~\Cref{sec:gauss-covariance}\fi. 

To the best of our knowledge,~\Cref{cor:gauss-pca} gives the first (computationally inefficient) algorithm for pure DP with error $\tilde{O}(\frac{d}{n\diffp})$. 
PCA with a spectral gap has been studied under pure DP in~\citep{ChaudhuriSS13}, where the result can be translated to yield a suboptimal error of $\tilde{O}(\frac{d^2}{n\diffp})$. \lz{I am not 100\% sure this is true for their paper.} 
There is a long line of work that studies PCA under approximate DP~\citep{BlumDMN05, HardtR12, HardtR13, ChaudhuriSS13, KapralovT13, DworkTTZ14} with the recent result by~\citet{LiuKO22} achieving the optimal error of $\tilde{O}(\frac{d}{n\diffp})$ for subgaussian distributions.

\begin{corollary}[Gaussian PCA]\label{cor:gauss-pca}
Let $\Ds = (S_1,\dots,S_n)$ where $S_i \simiid \normal(0,\Sigma)$. % such that $\id\preceq\Sigma\preceq \kappa\id$. 
Let $\diffp, \beta\in (0,1)$. Suppose $n$ is such that $\alpha\leq 1$ in~\Cref{eq:gauss-pca}.
There exists a constant $C>0$ and an $\diffp$-DP algorithm $\mc{M}$ such that, with probability at least $1-\beta$, returns unit vector $\mc{M}(\Ds)=v$ such that $1-\frac{v^\top\Sigma v}{\ltwo{\Sigma}}\leq \alpha$ for
\iflong
\begin{equation}\label{eq:gauss-pca}
    \alpha=C\left(\sqrt{\frac{d+\log(\frac{1}{\beta})}{n}} +\frac{d\log^2(\frac{n}{d})+\log(\frac{1}{\beta})\log(\frac{n}{d})}{n\diffp}\right).
\end{equation}
\else
\begin{equation}\label{eq:gauss-pca}
    \alpha=C\left(\sqrt{\frac{d+\log(\frac{1}{\beta})}{n}} +\frac{d\log^2(\frac{n}{d})+\log(\frac{1}{\beta})\log(\frac{n}{d})}{n\diffp}\right).
\end{equation}
\fi
\end{corollary}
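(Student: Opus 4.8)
The plan is to follow the same template as the preceding applications and instantiate the smooth-inverse-sensitivity transformation~\iflong(\Cref{thm:high-dim-rob-to-priv-loss})\else(\Cref{thm:high-dim-rob-to-priv-loss-app})\fi\ with the robust PCA algorithm of~\Cref{thm:robust-pca}. The only genuinely new ingredient is the loss function. The natural notion of distance for PCA---the sine of the angle between the output and the top eigenvector $v^*$---is unusable here, because in the absence of a spectral-gap assumption a unit vector can capture essentially all of the variance of $\Sigma$ while being far from $v^*$ on the sphere, so the robust guarantee of~\Cref{thm:robust-pca}, which only controls $1-\hat v^\top\Sigma\hat v/\ltwo{\Sigma}$, does not bound the angle. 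Instead I would use the ($\Sigma$-dependent, hence uncomputable) loss
\begin{equation*}
    L_\Sigma(u,w) = \frac{\bigl|\,u^\top\Sigma u - w^\top\Sigma w\,\bigr|}{\ltwo{\Sigma}},
\end{equation*}
which is the pullback of $|\cdot|$ on $\R$ under the map $v \mapsto v^\top\Sigma v/\ltwo{\Sigma}$ and hence satisfies the triangle inequality; for any unit vector $v$ it equals $1 - v^\top\Sigma v/\ltwo{\Sigma}$, exactly the error reported in~\Cref{cor:gauss-pca}.

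Next I would check the structural hypotheses of~\iflong\Cref{thm:high-dim-rob-to-priv-loss}\else\Cref{thm:high-dim-rob-to-priv-loss-app}\fi. For $R=1$ and any $u,w \in \ball(R+\alpha_0)$, writing $u^\top\Sigma u - w^\top\Sigma w = u^\top\Sigma(u-w) + (u-w)^\top\Sigma w$ and using $\ltwo{\Sigma v} \le \ltwo{\Sigma}\ltwo{v}$ gives $L_\Sigma(u,w) \le (\ltwo{u}+\ltwo{w})\ltwo{u-w} \le 2(1+\alpha_0)\ltwo{u-w}$, so the Lipschitz constant is $c_L = O(1)$. The algorithm $\A_\mathrm{rob}$ of~\Cref{thm:robust-pca} always outputs a unit vector, so its range lies in $\ball(1)$ and we may take $R=1$ with no projection step (derandomizing it via~\Cref{thm:rand-to-det} if it is randomized, losing only constants); and its robust guarantee $1-\hat v^\top\Sigma\hat v/\ltwo{\Sigma}\le\alpha'$ is exactly $L_\Sigma(\hat v,v^*)\le\alpha'$, so $\A_\mathrm{rob}$ is $(\tau,\beta,\alpha')$-robust with respect to $L_\Sigma$, where $\alpha'(\tau)=\tilde O\bigl(\sqrt{(d+\log(1/\beta))/n}+\tau\bigr)$ is the rate from~\Cref{thm:robust-pca}.

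Then I would optimize parameters exactly as before: set $\alpha_0 = \Theta\bigl(\sqrt{(d+\log(1/\beta))/n}\bigr)$ (the non-private, uncorrupted rate), which forces $\tau\opt = \frac{2(d\log(1/\alpha_0+1)+\log(1/\beta))}{n\diffp} = \Theta\bigl(\frac{d\log(n/d)+\log(1/\beta)}{n\diffp}\bigr)$ since $R=1$; I would also take $\tau$ large enough to satisfy the sample-complexity requirement of~\Cref{thm:robust-pca}, which the hypothesis on $n$ permits. Plugging $\tau=\tau\opt$ into $\alpha'(\cdot)$ and expanding the polylog factors (the hidden $\log(1/\tau)$ contributes a factor $\Theta(\log(n/d))$) yields $\alpha'(\tau\opt) = \Theta\bigl(\sqrt{(d+\log(1/\beta))/n}+\frac{d\log^2(n/d)+\log(1/\beta)\log(n/d)}{n\diffp}\bigr)$, which matches the expression in~\Cref{eq:gauss-pca} up to the constant $C$. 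Invoking~\iflong\Cref{thm:high-dim-rob-to-priv-loss}\else\Cref{thm:high-dim-rob-to-priv-loss-app}\fi\ with $\rho=\alpha_0$ then shows $\smmech(\Ds;\A_\mathrm{rob})$ is $\diffp$-DP and, with probability $1-2\beta$, its output $o$ satisfies $L_\Sigma(o,v^*) = O(\alpha'(\tau\opt))$.

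Finally I would return the renormalized vector $v = o/\ltwo{o}$ (on the negligible event $o=0$, note $L_\Sigma(0,v^*)=1$, so return anything). Since the range of $\A_\mathrm{rob}$ is $\ball(1)$ and $\rho=\alpha_0$, the mechanism outputs $o \in \ball(1+\alpha_0)$, hence $\ltwo{o}^2 \le 1+3\alpha_0$; combined with $o^\top\Sigma o \ge (1-O(\alpha'(\tau\opt)))\ltwo{\Sigma}$ from the $L_\Sigma$ bound, this gives $1 - v^\top\Sigma v/\ltwo{\Sigma} = 1 - o^\top\Sigma o/(\ltwo{o}^2\ltwo{\Sigma}) = O(\alpha'(\tau\opt))$, since $\alpha_0 \le \alpha'(\tau\opt)$. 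Rescaling $\beta \gets \beta/2$ and adjusting constants gives~\Cref{eq:gauss-pca}. The main obstacle is the loss-function design described above---finding a loss that is simultaneously controlled by the robust guarantee without any spectral-gap assumption, satisfies the triangle inequality, and is $\ell_2$-Lipschitz on the output ball---together with the minor bookkeeping of passing from the smoothed (possibly non-unit) output back to a genuine unit vector; the parameter optimization itself is routine.
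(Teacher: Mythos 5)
Your proposal is correct and follows essentially the same route as the paper's proof: the same loss $L(u,v)=\bigl(u^\top\Sigma u - v^\top\Sigma v\bigr)/\ltwo{\Sigma}$ (yours with an absolute value, which only makes the triangle-inequality claim cleaner), the same instantiation with the robust PCA estimator of~\Cref{thm:robust-pca} with $R=1$, the same choices $\rho=\alpha_0=\Theta(\sqrt{(d+\log(1/\beta))/n})$ and $\tau=\Theta\bigl(\tfrac{d\log(n/d)+\log(1/\beta)}{n\diffp}\bigr)$, and the same final renormalization of the (possibly non-unit) output with an $O(\alpha_0)$ loss. The only differences are cosmetic (a product-rule decomposition instead of the difference-of-norms factorization for the Lipschitz bound, and explicit mention of derandomization and the $o=0$ edge case).
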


\ifshort
We will need a more general transformation, proven in~\Cref{app:gen-loss}, and stated in~\Cref{thm:high-dim-rob-to-priv-loss} below. 
\begin{theorem}[Robust-to-private, general loss]
\label{thm:high-dim-rob-to-priv-loss}
  Let $\Ds = (S_1,\dots,S_n)$ where $S_i \simiid P$ such that $\mu(P) \in \R^d.$
  Let $\diffp, \beta \in (0,1)$. Let $L:(\R^d)^2\to \R$ be a loss function which satisfies the triangle inequality.
  Let $\A_\mathrm{rob} : (\R^d)^n \to \{ t \in \R^d: \norm{t} \le R\}$ be a (deterministic) $(\tau,\beta,\alpha)$-robust algorithm with respect to $L$. %, for $\alpha\leq 1$.
  Let $\alpha_0\leq \alpha$. Suppose $n$ is such that the smallest value $\tau$ satisfying~\Cref{eq:fraction-loss} is at most $1$. Suppose for all $u,v\in\ball(R+\alpha_0)$, $ L(u,v)\leq c_L\norm{u-v}$ for some constant $c_L$. 
  If 
  \begin{equation}\label{eq:fraction-loss}
  \tau \geq \frac{2\left(
    d\log\left(\frac{R}{\alpha_0}+1\right)+\log\frac{1}{\beta}\right)}{n\diffp},
    \end{equation}
    then Algorithm $\smmech(\Ds;\A_\mathrm{rob})$ with $\rho = \alpha_0$ in norm $\norm{\cdot}$ is $\diffp$-DP and, with probability at least $1-2\beta$, has error
  \begin{equation*}
      L\left(\smmech(\Ds;\A_\mathrm{rob}),\mu\right) \le (3+c_L)\alpha=O(\alpha).
  \end{equation*}
\end{theorem}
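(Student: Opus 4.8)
The plan is to follow the proof of \Cref{thm:high-dim-rob-to-priv} almost verbatim, replacing the norm $\norm{\cdot}$ by the loss $L$ everywhere robustness is invoked, and using the hypothesis $L(u,v)\le c_L\norm{u-v}$ on $\ball(R+\alpha_0)$ to convert the norm-based accuracy of the $\rho$-smooth inverse-sensitivity mechanism into an $L$-based one. Privacy is unchanged and requires nothing new: $\smmech(\cdot;\A_\mathrm{rob})$ is the exponential mechanism run with score $-\invmodcont^{\rho}_{\A_\mathrm{rob}}(\Ds;\cdot)$, a function whose sensitivity in $\Ds$ is $1$, so it is $\diffp$-DP by \Cref{def:expmech} regardless of $L$.

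For utility, set $\rho=\alpha_0$ and $K=n\tau\opt$ with $\tau\opt$ the right-hand side of \Cref{eq:fraction-loss}; the hypothesis on $n$ gives $\tau\opt\le 1$ (so $K\le n$) and the assumption $\tau\ge\tau\opt$ gives $K\le n\tau$. The crucial observation is that the proof of \Cref{thm:ub-cont} establishes more than its displayed inequality: with probability at least $1-\beta$ the output $t=\smmech(\Ds;\A_\mathrm{rob})$ lands in the good set $\{t:\invmodcont^{\rho}_{\A_\mathrm{rob}}(\Ds;t)\le K\}$, which by the definition of the $\rho$-smooth inverse sensitivity means there is a dataset $\Dsc$ with $\dham(\Ds,\Dsc)\le K$ and $\norm{t-\A_\mathrm{rob}(\Dsc)}\le\rho=\alpha_0$. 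Since $\A_\mathrm{rob}$ has range $\ball(R)$, this forces $t\in\ball(R+\alpha_0)$, so the pair $(t,\A_\mathrm{rob}(\Dsc))$ lies where $L\le c_L\norm{\cdot}$ is legitimately available. Independently, $(\tau,\beta,\alpha)$-robustness of $\A_\mathrm{rob}$ ensures that with probability at least $1-\beta$ over $\Ds$ we have $L(\A_\mathrm{rob}(\Dsc'),\mu)\le\alpha$ for every $\Dsc'$ with $\dham(\Ds,\Dsc')\le n\tau$, in particular for $\Dsc'=\Dsc$ and $\Dsc'=\Ds$. Union-bounding the two events and applying the triangle inequality for $L$,
\[
L(t,\mu)\le L\!\left(t,\A_\mathrm{rob}(\Dsc)\right)+L\!\left(\A_\mathrm{rob}(\Dsc),\A_\mathrm{rob}(\Ds)\right)+L\!\left(\A_\mathrm{rob}(\Ds),\mu\right)\le c_L\alpha_0+2\alpha+\alpha\le(3+c_L)\alpha,
\]
using $L(t,\A_\mathrm{rob}(\Dsc))\le c_L\norm{t-\A_\mathrm{rob}(\Dsc)}\le c_L\alpha_0\le c_L\alpha$, the triangle inequality again to bound $L(\A_\mathrm{rob}(\Dsc),\A_\mathrm{rob}(\Ds))\le L(\A_\mathrm{rob}(\Dsc),\mu)+L(\mu,\A_\mathrm{rob}(\Ds))\le 2\alpha$, and $\alpha_0\le\alpha$. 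This holds with probability at least $1-2\beta$, which is the asserted accuracy bound.

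The one genuinely new difficulty, relative to \Cref{thm:high-dim-rob-to-priv}, is bridging the two metrics: \Cref{thm:ub-cont} controls the output only in $\norm{\cdot}$, robustness is phrased in $L$, and no reverse comparison $\norm{\cdot}\lesssim L$ is assumed, so one cannot bound the modulus $\modcont_{\A_\mathrm{rob}}(\Ds;K)$ via robustness the way the original proof bounds it via the triangle inequality in the norm. The remedy — and the step to get right — is to route the estimate through the witness dataset $\Dsc$ supplied by the good-set analysis rather than through $\A_\mathrm{rob}(\Ds)$ directly: this keeps every evaluation of $L$ between points of $\ball(R+\alpha_0)$, where $L\le c_L\norm{\cdot}$ applies, while leaving all $L$-accuracy statements to come from robustness. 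The only remaining item is the routine bookkeeping that the hypothesis on $n$ makes $\tau\opt\le 1$, so that $\Dsc$ is a legitimate size-$n$ dataset and the robustness guarantee at corruption level $n\tau\ge K$ is non-vacuous.
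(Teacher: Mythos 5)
Your proof is correct and takes essentially the same route as the paper: the paper simply factors your witness-dataset argument into a general-loss analogue of \Cref{thm:ub-cont} (bounding $L\left(\smmech(\Ds;\A_\mathrm{rob}),\A_\mathrm{rob}(\Ds)\right)$ by the $L$-modulus of continuity plus $c_L\rho$, using exactly the observation that the output and the witness point both lie in $\ball(R+\rho)$) and then invokes robustness and the triangle inequality for $L$ just as you do, arriving at the same $(3+c_L)\alpha$ bound. The difference is purely presentational — you inline that lemma and route the estimate through the witness dataset $\Dsc$ rather than through the modulus.
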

\fi
We will instantiate our transformation with the robust PCA algorithm from~\citep{JambulapatiLT20}. An alternative with the same guarantees is returning the unit vector that minimizes the surrogate cost function proposed by~\citep{LiuKO22}.
\begin{theorem}[Theorem 1,~\citep{JambulapatiLT20}]\label{thm:robust-pca}
Let $\Ds = (S_1,\dots,S_n)$ where $S_i \simiid \normal(0,\Sigma)$. Let $\beta\in(0,1), \tau\in(0,1/2)$. %Suppose $n\geq \Omega\left(\frac{d+\log(1/\beta)}{(\tau\log(1/\tau))^2}\right)$. 
Let
$$
\alpha'=C'\left(\sqrt{\frac{d+\log(\frac{1}{\beta})}{n}}+\tau\log\left(\frac{1}{\tau}\right)\right),
$$
for a known constant $C'>0$.
There exists algorithm $\A_{\mathrm{rob}}$ which is $(\tau,\beta,\alpha')$-robust. 
\iflong That is, with probability $1-\beta$, for any $\tau$-corrupted $\Dsc$, such that $\dham(\Ds,\Dsc)\leq n\tau$, it returns unit vector $\A_{\mathrm{rob}}(\Dsc)=u$ such that
$
    1-\frac{u^\top\Sigma u}{\ltwo{\Sigma}}\leq \alpha'.
$\fi
\end{theorem}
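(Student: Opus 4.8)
The goal is to output a unit vector $u$ with $u^\top\Sigma u\ge(1-\alpha')\ltwo{\Sigma}$ from a $\tau$-corrupted sample of $\normal(0,\Sigma)$, where $\ltwo{\Sigma}=\lambda_{\max}(\Sigma)$. The plan is the standard two-part recipe for robust estimation: (i) isolate a high-probability deterministic ``stability'' event over the clean draw $\Ds=(S_1,\dots,S_n)\simiid\normal(0,\Sigma)^n$; and (ii) exhibit an algorithm that, on every $\tau$-corruption of any stable clean dataset, certifies a near-optimal direction. For (ii) I would run iterative filtering; JLT instead solve a width-independent Schatten-packing SDP, but the analysis ingredients are the same, filtering is efficient, and it is easier to describe. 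Crucially, we never estimate $\Sigma$ in operator norm over all directions (which would cost $\tilde O(d^2)$ samples) — we only need resilience of the \emph{clean} set over all directions, which costs $\tilde O(d+\log(1/\beta))$ samples.

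\emph{Stability event.} Set $\gamma=C\sqrt{(d+\log(1/\beta))/n}$. With probability at least $1-\beta$ the following hold simultaneously. (a) Operator-norm concentration: $\ltwo{\tfrac1n\sum_iS_iS_i^\top-\Sigma}\le\gamma\ltwo{\Sigma}$, by a net-and-Bernstein argument. (b) A one-dimensional tail bound, uniform over directions: for every unit $v$ and every $t\ge1$, $\tfrac1n|\{i:\langle S_i,v\rangle^2>t\,v^\top\Sigma v\}|\le 2e^{-t/4}$ up to a $\gamma$ additive slack, since $\langle S_i,v\rangle^2/(v^\top\Sigma v)\sim\chi^2_1$ and, after a $\tfrac14$-net of the sphere (size $e^{O(d)}$) and a union bound, $\tilde O(d+\log(1/\beta))$ samples suffice. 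Integrating (b) yields the resilience corollary: for every unit $v$ and every $T$ with $|T|\ge(1-\tau)n$, $|\tfrac1{|T|}\sum_{i\in T}\langle S_i,v\rangle^2-v^\top\Sigma v|\le C(\tau\log(1/\tau)+\gamma)\ltwo{\Sigma}$, the $\tau\log(1/\tau)$ term being the mass of the upper $\tau$-quantile of a $\chi^2_1$. From here the argument is purely deterministic.

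\emph{Algorithm and analysis.} Keep weights $w\in[0,1]^n$, initialized to the all-ones vector, and let $M(w)=\sum_iw_iS_iS_i^\top/\sum_iw_i$ with top unit eigenvector $u$ and $\lambda=\ltwo{M(w)}=u^\top M(w)u$. If the $w$-weighted upper tails of the scores $\langle S_i,u\rangle^2$ are Gaussian-like relative to $\lambda$ (for all $t\ge C$, $\sum_{i:\langle S_i,u\rangle^2>t\lambda}w_i\le C e^{-t/C}\sum_iw_i$), output $u$; otherwise choose a violated threshold and multiplicatively downweight the offending points, and repeat. \emph{Progress}: on the stability event the clean points contribute only a negligible (an $O(\tau\log(1/\tau)+\gamma)$) fraction of the violating tail mass, so a violation is driven by corrupted weight; hence each step removes strictly more corrupted than clean weight, and since corrupted weight is at most $\tau n$ and clean weight at least $(1-\tau)n$, the process halts having removed at most $2\tau n$ clean weight. \emph{Accuracy at halting}: since the test passed, the $w$-average of $\langle S_i,u\rangle^2$ is within $O(\tau\log(1/\tau)+\gamma)\lambda$ of the surviving-clean average, which by resilience equals $u^\top\Sigma u$ up to the same order; and $\lambda=u^\top M(w)u\ge v_1^\top M(w)v_1\ge(1-O(\tau\log(1/\tau)+\gamma))\ltwo{\Sigma}$, applying resilience along the top eigenvector $v_1$ of $\Sigma$ to the surviving clean indices (here it matters that $w$ still keeps at least a $(1-2\tau)$ fraction of its weight on clean points). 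Chaining these, $u^\top\Sigma u\ge(1-O(\tau\log(1/\tau)+\gamma))\ltwo{\Sigma}$, i.e.\ the claimed bound with $\alpha'=C'(\sqrt{(d+\log(1/\beta))/n}+\tau\log(1/\tau))$; the sub-Gaussian case is identical since only (a), (b), and hypercontractivity of quadratic forms are used.

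\emph{Main obstacle.} The crux is the uniform-over-directions stability event with sharp constants: establishing the $\chi^2_1$ upper-tail estimate simultaneously for all directions from only $\tilde O(d+\log(1/\beta))$ samples, so that the structural error is exactly $\tau\log(1/\tau)$ and the sampling error exactly $\sqrt{(d+\log(1/\beta))/n}$ with no cross term, and — on the algorithmic side — running the filter with the data-dependent scale $\lambda$ (since $\ltwo{\Sigma}$ is unknown) while still arguing that a passing test certifies accuracy and a failing test removes a majority of corrupted weight; the recurring danger, precisely what the stability event rules out, is the filter fixating on a clean-heavy direction and deleting only clean weight. JLT's packing SDP sidesteps the iterative bookkeeping by optimizing over all near-uniform reweightings at once, at the cost of the width-independent machinery; I would fall back to that route only if the elementary filter analysis fails to deliver the advertised constants.
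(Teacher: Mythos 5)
This statement is imported verbatim from \citet{JambulapatiLT20} and the paper offers no proof of it, so there is nothing internal to compare against; what you have written is a reconstruction of the external result, and it is a reasonable one. Your route is genuinely different from the cited source: JLT obtain the guarantee by solving a width-independent Schatten-norm packing SDP over near-uniform reweightings, whereas you run the classical filter (as in the Diakonikolas et al.\ line of work and Li's and Steinhardt's lecture notes), which trades JLT's one-shot convex certificate for an iterative downweighting argument. The architecture you use --- a deterministic stability/resilience event over the clean draw, followed by a purely deterministic analysis of the algorithm on every corruption of a stable dataset --- is exactly what is needed to deliver robustness in the sense of Definition~1.1 (a single $1-\beta$ event under which \emph{all} $\tau$-corruptions simultaneously yield a good output), and it also yields a deterministic $\A_{\mathrm{rob}}$ as required by Theorem~3.1, so the form of the conclusion matches what the paper consumes. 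Two soft spots worth tightening: (i) the ``$1/4$-net plus union bound'' step for the uniform-over-directions tail counts does not go through literally, since the indicator $\mathbf{1}\{\langle S_i,v\rangle^2>t\,v^\top\Sigma v\}$ is not continuous in $v$; the standard fix is a VC/uniform-convergence bound for the class of degree-two polynomial threshold sets (VC dimension $O(d)$), which gives the same $\sqrt{(d+\log(1/\beta))/n}$ rate and the clean $\tau\log(1/\tau)$ resilience term from the $\chi^2_1$ upper quantile; (ii) the parenthetical that estimating $\Sigma$ ``in operator norm over all directions'' costs $\tilde O(d^2)$ samples conflates relative Frobenius-norm covariance estimation (which does cost $d^2$) with what is actually needed here; the real point, which your argument correctly exploits, is that one only needs quadratic-form control along the output direction and along $v_1$, relative to $\|\Sigma\|_2$. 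Neither issue affects the conclusion, and the claimed $\alpha'=C'(\sqrt{(d+\log(1/\beta))/n}+\tau\log(1/\tau))$ is what the filter analysis delivers for Gaussian and, via the same resilience bounds, subgaussian data.
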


\begin{proof}[Proof of~\Cref{cor:gauss-pca}]
We define the following loss function $L(u,v)=\frac{u^\top\Sigma u}{\ltwo{\Sigma}}-\frac{v^\top\Sigma v}{\ltwo{\Sigma}}$. If $v_1$ is the top eigenvector of $\Sigma$, then \iflong $v_1^\top\Sigma v_1=\ltwo{\Sigma^{1/2}v_1}^2=\ltwo{\Sigma}$ and \fi our goal is to return vector $v$ with small error $L(v_1,v)$. %= 1-\frac{v^\top\Sigma v}{\ltwo{\Sigma}}$.
Let $\alpha_0=C'\sqrt{\frac{d+\log(1/\beta)}{n}}$ and assume it is less than $1$, to be confirmed later. Then $L$ satisfies the triangle inequality and for all $u,v\in \ball(1+\alpha_0)\subset \ball(2)$,
% that is,
% \begin{align*}
% & L(u,v)\\
% & =\left|\frac{u^\top\Sigma u}{\ltwo{\Sigma}}-\frac{v^\top\Sigma v}{\ltwo{\Sigma}}\right|\\
% & =\left|\frac{u^\top\Sigma u}{\ltwo{\Sigma}}-\frac{w^\top\Sigma w}{\ltwo{\Sigma}}+\frac{w^\top\Sigma w}{\ltwo{\Sigma}}-\frac{v^\top\Sigma v}{\ltwo{\Sigma}}\right|\leq L(u,w)+L(w,v)
% \end{align*}
% In addition, it holds 
\iflong
    \begin{align*}
    L(u,v)
    & = \frac{u^\top\Sigma u}{\ltwo{\Sigma}}-\frac{v^\top\Sigma v}{\ltwo{\Sigma}} \\
    & = \frac{\ltwo{\Sigma^{1/2}u}^2}{\ltwo{\Sigma}}-\frac{\ltwo{\Sigma^{1/2}v}^2}{\ltwo{\Sigma}} \\
    & = \frac{1}{\ltwo{\Sigma}}\left(\ltwo{\Sigma^{1/2}u}-\ltwo{\Sigma^{1/2}v}\right)\cdot \left(\ltwo{\Sigma^{1/2}u}+\ltwo{\Sigma^{1/2}v}\right) \\
    & \leq \ltwo{u-v}\cdot (\ltwo{u}+\ltwo{v}) \\
    & \leq 4\ltwo{u-v}.
    \end{align*}
\else
    \begin{align*}
    &L(u,v)= \frac{\ltwo{\Sigma^{1/2}u}^2}{\ltwo{\Sigma}}-\frac{\ltwo{\Sigma^{1/2}v}^2}{\ltwo{\Sigma}} \\
    & = \frac{\left(\ltwo{\Sigma^{1/2}u}-\ltwo{\Sigma^{1/2}v}\right)\cdot \left(\ltwo{\Sigma^{1/2}u}+\ltwo{\Sigma^{1/2}v}\right)}{\ltwo{\Sigma}} \\
    & \leq \ltwo{u-v}\cdot (\ltwo{u}+\ltwo{v}) \leq 4\ltwo{u-v}.
    \end{align*}
\fi
Let $\A_{\mathrm{rob}}:(\R^d)^n \to \sphere^{d-1}$ be the algorithm established by~\Cref{thm:robust-pca}, where $\sphere^{d-1}$ denotes the unit sphere. Then $L$ satisfies the requirements of~\Cref{thm:high-dim-rob-to-priv-loss}.
For $\tau= \frac{2d\log(n/d)+2\log(1/\beta)}{n\diffp}$, $\A_{\mathrm{rob}}$ is $(\tau,\beta,\alpha')$-robust with 
$$\alpha'=C'\left(\sqrt{\frac{d+\log(1/\beta)}{n}} +\frac{d\log^2(\frac{n}{d})+\log(\frac{1}{\beta})\log(\frac{n}{d})}{n\diffp}\right). 
$$
We then have that $\smmech(\cdot,\A_{\mathrm{rob}})$ with $\rho=\alpha_0$ is $\diffp$-DP 
and with probability $1-2\beta$, returns $v\in \ball(1+\alpha_0)$, such that
$$L(v_1,v)=1-\frac{v^\top\Sigma v}{\ltwo{\Sigma}}\leq 7\alpha'.$$

Let $\hat{v}=\frac{v}{\ltwo{v}}$ be the unit vector in the direction of $v$. We have that $L(v_1,\hat{v})=1-\frac{v^\top\Sigma v}{\ltwo{\Sigma}\ltwo{v}^2}$. If $\ltwo{v}\leq 1$ then $L(v_1,\hat{v})\leq L(v_1,v)$. Suppose $\ltwo{v}>1$.
\begin{align*}
L(v_1,\hat{v})
& = \frac{1}{\ltwo{v}^2}\left(\ltwo{v}^2-\frac{v^\top\Sigma v}{\ltwo{\Sigma}}\right)\\
%& = \frac{1}{\ltwo{v}^2}\left((\ltwo{v}^2-1)+1-\frac{v^\top\Sigma v}{\ltwo{\Sigma}}\right)\\
& = \frac{1}{\ltwo{v}^2}\left((\ltwo{v}^2-1) +L(v_1,v)\right)\\
& \le \frac{\ltwo{v}^2-1}{\ltwo{v}^2}+L(v_1,v) \tag{since $\ltwo{v}>1$}\\
& \le \frac{\alpha_0(\alpha_0+2)}{(\alpha_0+1)^2}+L(v_1,v) \tag{since $(x-1)/x \nearrow$}\\
& \le 2\alpha_0+L(v_1,v) \leq 9\alpha',
\end{align*}
since $\alpha_0\leq \alpha'$. 
Therefore, we return a unit vector $\hat{v}$ with $1-\frac{\hat{v}^\top\Sigma\hat{v}}{\ltwo{\Sigma}}\leq \alpha$ for $\alpha=9\alpha'$. By assumption $n$ is sufficiently large so that $\alpha\le 1$, and as such $\alpha'<1$. The proof is complete by rescaling $\beta\gets \beta/2$ and adjusting the constants.
\end{proof}

\ifshort
\subsection{More Applications}
We apply our transformation to Gaussian mean and covariance estimation, instantiated by the Tukey median and the robust algorithm by~\citep{DiakonikolasKKLMS17} respectively, retrieving the known near-optimal error. We also apply it to Gaussian linear regression (\Cref{cor:gauss-linear-short} below), instantiated by the robust algorithm by~\citet{Gao20} to give the first algorithm with optimal error under pure DP. 

\begin{corollary}[Gaussian Linear Regression]\label{cor:gauss-linear-short}
Let $\Ds = (S_1,\dots,S_n)$ where for all $i\in[n]$, $S_i =(X_i,y_i)\in\R^d\times \R$ is generated by a linear model $y_i=X_i^\top \theta + \eta_i$ for some unknown $\theta\in \ball^d(R)$, where $X_i\simiid \normal(0,\Sigma)$, $\id\preceq\Sigma\preceq \kappa\id$, and $\eta_i\simiid \normal(0,\sigma^2)$, independent from $X_i$.
Let $\diffp, \beta\in (0,1)$. 
There exists an $\diffp$-DP algorithm $\mc{M}$ such that, with probability $1-\beta$, $\ltwo{\Sigma^{-1/2}(\mc{M}(\Ds)-\theta)}\leq \alpha\sigma$ for
\begin{equation*}
\alpha=O\left(\sqrt{\frac{d+\log(\frac{1}{\beta})}{n}} +\frac{d\log\left(\frac{(R/\sigma+\sqrt{\kappa})n}{d}\right)+\log(\frac{1}{\beta})}{n\diffp}\right).
\end{equation*}
\end{corollary}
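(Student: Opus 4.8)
The plan is to instantiate the general robust-to-private transformation of~\Cref{thm:high-dim-rob-to-priv-loss} with a known information-theoretically optimal robust linear regression estimator, namely the regression-depth estimator of~\citet{Gao20} (\Cref{thm:robust-linear}). Since that estimator is only guaranteed to be robust with respect to the $\Sigma$-rescaled error $\ltwo{\Sigma^{-1/2}(\cdot-\theta)}$, which depends on the unknown covariance and hence cannot serve as the cost function of the inverse-sensitivity mechanism, I would run the $\rho$-smooth inverse-sensitivity mechanism in $\R^d$ with the ordinary $\ell_2$ norm and take $L(u,v)=\ltwo{\Sigma^{-1/2}(u-v)}$ as the abstract loss fed to~\Cref{thm:high-dim-rob-to-priv-loss}. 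The two hypotheses on $L$ are immediate: $L$ is a norm, so it obeys the triangle inequality, and since $\id\preceq\Sigma$ gives $\Sigma^{-1/2}\preceq\id$, we get $L(u,v)\le\ltwo{u-v}$, i.e.\ $c_L=1$.

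Next I would control the range. The raw estimator $\A$ of~\Cref{thm:robust-linear} need not be bounded, so I would set $\A_{\mathrm{rob}}(\Ds)=\Pi_{\ball(R')}(\A(\Ds))$ with $R'=R+\sigma\sqrt{\kappa}$. To see the projection is harmless on typical inputs, apply the triangle inequality and $\ltwo{\Sigma^{1/2}}\le\sqrt{\kappa}$: $\ltwo{\A(\Ds)}\le\ltwo{\theta}+\ltwo{\Sigma^{1/2}}\,\ltwo{\Sigma^{-1/2}(\A(\Ds)-\theta)}\le R+\sqrt{\kappa}\,\alpha'$, which is at most $R'$ once $n$ is large enough that $\alpha'\le\sigma$. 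Hence, with probability $1-\beta$, the projection does nothing and $\A_{\mathrm{rob}}$ inherits the $(\tau,\beta,\alpha')$-robustness of $\A$ with respect to $L$.

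Then I would set $\alpha_0=C'\sigma\sqrt{(d+\log(1/\beta))/n}$, the corruption-free and privacy-free error (so $\alpha_0\le\alpha'$, as required), and pick $\rho=\alpha_0$. The fraction of corruptions demanded by~\Cref{thm:high-dim-rob-to-priv-loss} is $\tau=2(d\log(R'/\alpha_0+1)+\log(1/\beta))/(n\diffp)$; using $\alpha_0\gtrsim\sigma\sqrt{d/n}$ and $R'=R+\sigma\sqrt{\kappa}$, this is $\Theta\big((d\log((R/\sigma+\kappa)n/d)+\log(1/\beta))/(n\diffp)\big)$. Plugging this $\tau$ into Gao's bound $\alpha'=C'\sigma(\sqrt{(d+\log(1/\beta))/n}+\tau)$ yields exactly the claimed rate $\alpha$ up to constants, and~\Cref{thm:high-dim-rob-to-priv-loss} then certifies that $\smmech(\cdot;\A_{\mathrm{rob}})$ with $\rho=\alpha_0$ is $\diffp$-DP and, with probability $1-2\beta$, outputs $\hat\theta$ with $L(\hat\theta,\theta)=\ltwo{\Sigma^{-1/2}(\hat\theta-\theta)}\le 4\alpha'=O(\alpha)$. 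A final rescaling $\beta\gets\beta/2$ and adjustment of constants gives the statement.

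The only delicate point is the bookkeeping of the "$n$ large enough" side conditions so that they are all implied by the single hypothesis $\alpha/\sigma\le c$: one needs $\tau+\sqrt{d/n}<c$ for~\Cref{thm:robust-linear} to apply, $\alpha'\le\sigma$ for the projection argument, and $\alpha_0\le\alpha'$; all follow because $\tau$, $\alpha_0$, and $\alpha'$ are each $O(\alpha)$, but this should be spelled out, as in the proof of~\Cref{cor:gauss-linear}. There is no circularity, since $\alpha_0$, and hence $\tau$, is defined independently of the final accuracy $\alpha$. No genuinely new obstacle arises beyond what was handled for Gaussian mean estimation in~\Cref{sec:gauss-mean}; in fact the linear-regression case is cleaner than covariance or PCA because Gao's robust error carries no extra $\log(1/\tau)$ factor.
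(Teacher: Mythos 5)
Your proposal is correct and follows essentially the same route as the paper's proof of \Cref{cor:gauss-linear}: instantiating \Cref{thm:high-dim-rob-to-priv-loss} with the projected estimator $\Pi_{\ball(R+\sigma\sqrt{\kappa})}(\A(\Ds))$ from \Cref{thm:robust-linear}, the loss $L(u,v)=\ltwo{\Sigma^{-1/2}(u-v)}$ with $c_L=1$, $\rho=\alpha_0=C'\sigma\sqrt{(d+\log(1/\beta))/n}$, and $\tau$ set to the threshold in \Cref{eq:fraction-loss}. The side-condition bookkeeping you flag (ensuring $\alpha'\le\sigma$ for the projection and $\tau+\sqrt{d/n}<c$ for Gao's theorem, all absorbed into the hypothesis on $n$) is exactly how the paper handles it.
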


We extend our main transformation to handle sparse estimation in~\Cref{sec:sparse}, which allows us to prove the equivalent result for the case of sparse linear regression where $\norm{\theta}_0\le k$. We show that in this case, the error is in the order of $\sqrt{k/n}+k/(n\diffp)$, as expected. All remaining statements and proofs are in~\Cref{app:moreapplications}.
\fi

\iflong
\section{Conclusions and Future Work}\label{sec:future}
We gave the first black-box transformation that converts an arbitrary robust algorithm into a differentially private one with similar accuracy. We proved that this transformation gives an optimal strategy for designing a differentially private algorithm for low-dimensional tasks, and that the minimax errors for robustness and privacy are equivalent for these tasks.

We also showed that this transformation often gives near-optimal error rates for several canonical high-dimensional tasks under (sub)Gaussian distributions (including under sparsity assumptions) and expect that it achieves similar results under other families of distributions, such as heavy-tailed. A natural question is to explore the conditions under which this or another black-box transformation yields private algorithms with optimal error in high dimensions. 

A drawback of our transformation is that it produces a computationally inefficient algorithm, even if the robust algorithm we instantiate it with is computationally efficient.\citet{HopkinsKMN22} use the sum-of-squares paradigm, to make this transformation computationally efficient specifically for the task of Gaussian estimation in TV distance, an approach that has been recently successful when applied to several problems~\citep{HopkinsKM22, AshtianiL22, KothariMV22, AlabiKTVZ22}. Another interesting direction is to explore under which conditions there exists a transformation that is simultaneously computationally efficient and gives differentially private algorithms with optimal error for all tasks.

\subsection*{Acknowledgements}

We thank Chao Gao for helpful discussions and clarifications regarding results in the robustness literature.

\fi
\ifshort
    \bibliographystyle{plainnat}
    \bibliography{bib}
\else
    \printbibliography
\fi

\newpage
\appendix
\onecolumn
\section{Additional Proofs for Transformations}\label{app:transfproofs}
\subsection{Randomized to Deterministic Robust Algorithm}\label{app:rand-to-det}
Here, we present a transformation from a randomized algorithm $\A$ to a \emph{deterministic} robust algorithm whose error and failure probability are larger by a factor of 2.
Intuitively, \Cref{alg:rand-to-det} finds a small ball where the randomized algorithm has the largest density, and returns its center. This transformation is computationally inefficient because it requires running the randomized algorithm with all possible choices of random coins. 

\begin{algorithm}
	\caption{Randomized-to-Deterministic Robust}
	\label{alg:rand-to-det}
	\begin{algorithmic}[1]
		\REQUIRE $\Ds = (S_1,\dots,S_n)$, Algorithm $\A$, accuracy $\alpha$.
	\STATE Let $P_\Ds$ denote the probability distribution of $\A(\Ds)$ over the randomness of the algorithm
        \STATE Find the center $v\opt$ of a ball $\ball(v) = \{ u\in\R^d: \norm{u-v} \le \alpha\}$ of radius $\alpha$ that maximizes $P_\Ds(\ball(v))$
        \STATE Return $v\opt$
	\end{algorithmic}
\end{algorithm}

We have the following guarantees for the transformation in~\Cref{alg:rand-to-det}.
\begin{theorem}[Randomized-to-deterministic-robust]
\label{thm:rand-to-det}
  Let $\Ds = (S_1,\dots,S_n)$ where $S_i \simiid P$. Let $\tau, \beta\in(0,1)$.
  Let $\A$ be a $(\tau,\beta,\alpha)$-robust algorithm for estimating the statistic $\mu$. Then~\Cref{alg:rand-to-det} is $\left(\tau,2\beta,2\alpha\right)$-robust.
\end{theorem}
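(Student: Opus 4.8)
\textbf{The plan} is to combine an averaging (Markov-type) argument over the sample $\Ds$ with the elementary geometric fact that two subsets of a probability space, each of mass strictly larger than $1/2$, must intersect; the deterministic output is the center of the heaviest $\alpha$-ball, so it automatically lies within $\alpha$ of any point whose $\alpha$-ball also has mass above $1/2$.

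First I would unpack what robustness of the randomized $\A$ provides. Let $r$ denote the internal randomness of $\A$, and for a dataset $\Ds$ let $E(\Ds,r)$ be the event that $\norm{\A(\Dsc;r)-\mu(P)}\le\alpha$ for every $\Dsc$ with $\dham(\Ds,\Dsc)\le n\tau$. By \Cref{def:robustness}, $\Pr_{\Ds\simiid P^n,\,r}[E(\Ds,r)]\ge 1-\beta$, so setting $g(\Ds):=\Pr_r[E(\Ds,r)]$ we get $\E_\Ds[1-g(\Ds)]\le\beta$. Applying Markov's inequality to the nonnegative quantity $1-g(\Ds)$ at threshold $1/2$ gives $\Pr_\Ds[\,1-g(\Ds)\ge 1/2\,]\le 2\beta$; equivalently, with probability at least $1-2\beta$ over $\Ds\simiid P^n$ we have $g(\Ds)>1/2$.

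Next, condition on a draw $\Ds$ with $g(\Ds)>1/2$ and fix any corruption $\Dsc$ with $\dham(\Ds,\Dsc)\le n\tau$. For this fixed $\Dsc$, the event $E(\Ds,r)$ implies in particular $\norm{\A(\Dsc;r)-\mu}\le\alpha$, so writing $P_\Dsc$ for the output law of $\A(\Dsc)$ we get $P_\Dsc(\ball(\mu,\alpha))\ge g(\Ds)>1/2$. Let $v\opt=v\opt(\Dsc)$ be the center returned by \Cref{alg:rand-to-det} on input $\Dsc$; by construction it maximizes the mass of an $\alpha$-ball, hence $P_\Dsc(\ball(v\opt,\alpha))\ge P_\Dsc(\ball(\mu,\alpha))>1/2$. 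Since $\ball(v\opt,\alpha)$ and $\ball(\mu,\alpha)$ both have $P_\Dsc$-mass exceeding $1/2$, their intersection is nonempty; picking $w$ in it and using the triangle inequality gives $\norm{v\opt-\mu}\le\norm{v\opt-w}+\norm{w-\mu}\le 2\alpha$. Because the conditioning event $\{g(\Ds)>1/2\}$ depends only on $\Ds$, this bound holds simultaneously for all admissible $\Dsc$, so \Cref{alg:rand-to-det} is $(\tau,2\beta,2\alpha)$-robust.

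\textbf{Main subtlety.} The delicate point is that we genuinely need the \emph{strict} inequality $g(\Ds)>1/2$, and hence strict $P_\Dsc$-mass above $1/2$ for both balls: two disjoint halves of the space each have mass exactly $1/2$, so a non-strict bound would not force the balls to meet. This is precisely why I apply Markov to the closed event $\{1-g(\Ds)\ge 1/2\}$ rather than directly to $\{g(\Ds)\le 1/2\}$, which converts the $2\beta$ estimate into a strict statement about $g$. A second, purely technical point is measurability of $v\mapsto P_\Dsc(\ball(v,\alpha))$ and existence of a maximizer: this map is upper semicontinuous, and once $\A$ has bounded range the search can be restricted to a compact set, while replacing ``maximizer'' by ``near-maximizer'' leaves the argument intact up to an arbitrarily small additive slack. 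Finally, as the surrounding text notes, the reduction is computationally inefficient because it needs access to the full output distribution of $\A(\Dsc)$; nothing in the proof tries to avoid this.
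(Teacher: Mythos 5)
Your proof is correct and follows essentially the same route as the paper's: an averaging/Markov-type argument over $\Ds$ showing that with probability at least $1-2\beta$ the output law of $\A$ on every $\tau$-corruption puts mass exceeding $1/2$ on the $\alpha$-ball around $\mu$, followed by the observation that the heaviest $\alpha$-ball must then intersect $\ball(\mu,\alpha)$, giving error $2\alpha$ by the triangle inequality. Your packaging via $g(\Ds)=\Pr_r[E(\Ds,r)]$ and Markov at threshold $1/2$ is a slightly cleaner rendering of the paper's bound on $\Pr[\Ds\notin E]$, and your explicit care about strict versus non-strict mass $1/2$ tidies a small imprecision present in the paper's version.
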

\begin{proof}
Let $\Ds = (S_1,\dots,S_n)$ where $S_i \simiid P$ and let $\Dsc$ be a $\tau$-corrupted version of $\Ds$, that is, $\dham(\Ds,\Dsc) \le n\tau$. We show that running~\Cref{alg:rand-to-det} over $\Dsc$ returns an accurate estimate with high probability over $\Ds$.
Let $\ball(\mu) = \{ u\in\R^d: \norm{u - \mu} \le \alpha\}$ be the ball of radius $\alpha$ around $\mu$, and let $\ball(v\opt)$ be the ball that maximizes $P_\Dsc(\ball(v))$. 
Let $E = \{\Ds: \forall \Dsc~\dham(\Dsc,\Ds)\leq n\tau, P_\Dsc(\ball(\mu)^c)\le 1/2 \}$ denote the set of good input datasets $\Ds$ such that for all $\tau$-corrupted $\Dsc$, the robust algorithm $\A$ returns bad answers with probability less than $1/2$. We show that if $\Ds \in E$ then~\Cref{alg:rand-to-det} returns an accurate answer for any $\tau$-corrupted $\Dsc$. 
Indeed, if $\Ds\in E$, then for any $\tau$-corrupted $\Dsc$, $P_\Dsc(\ball(\mu))>1/2$. 
Moreover, the definition of $\ball(v\opt)$  implies 
\begin{align*}
P_\Dsc(\ball(v\opt))\ge P_\Dsc(\ball(\mu)) > 1/2.
\end{align*}
As a result, we have that $\ball(v\opt) \cap \ball(\mu) \neq \emptyset$. Let $u \in \ball(v\opt) \cap \ball(\mu)$. We have that
\begin{align*}
\norm{v\opt - \mu} \le \norm{v\opt - u} + \norm{u - \mu} \le 2\alpha.
\end{align*}
Thus, if $\Ds \in E$ then~\Cref{alg:rand-to-det} is $2\alpha$ accurate. It remains to show that $\Pr[\Ds \notin E] \le 2 \beta$. This follows from the fact that $\A$ has failure probability $\beta$:
\begin{align*}
\beta 
& \ge \Pr_{\Ds,\A}[\exists \Dsc: \dham(\Dsc,\Ds) \text{ and } \A(\Dsc) \in \ball(\mu)^c] \\ 
& \ge \Pr_{\Ds}[\Ds \notin E]\cdot \Pr_{\Ds,\A}\left[\exists \Dsc: \dham(\Dsc,\Ds)\le n\tau \text{ and } \A(\Dsc) \in \ball(\mu)^c \mid \Ds\notin E\right]\\
& = \Pr_{\Ds}[\Ds \notin E]\cdot \E_{\Ds,\A}\left[\max_{\Dsc: \dham(\Dsc,\Ds)\le n\tau} \mathbbm{1}\{\A(\Dsc)\in \ball(\mu)^c\} \mid \Ds\notin E\right]\\
& \ge \Pr_{\Ds}[\Ds \notin E]\cdot \E_{\Ds}\left[\max_{\Dsc: \dham(\Dsc,\Ds)\le n\tau} \E_{\A}[\mathbbm{1}\{\A(\Dsc)\in \ball(\mu)^c\}] \mid \Ds\notin E\right] \tag{by Jensen's inequality}\\
& = \Pr_{\Ds}[\Ds \notin E]\cdot \E_{\Ds}\left[\max_{\Dsc: \dham(\Dsc,\Ds)\le n\tau} P_\Dsc(\ball(\mu)^c) \mid \Ds\notin E\right]\\
& > \Pr_{\Ds}[\Ds \notin E]\cdot \frac{1}{2}.
\end{align*}
The claim follows.
\end{proof}

\subsection{Robust-to-Private Transformation for General Loss}\label{app:gen-loss}
In the statement of~\Cref{thm:high-dim-rob-to-priv}, the error is measured in some norm $\norm{\cdot}$, the range of the robust algorithm $\A_\text{rob}$ is bounded in the same norm, and the smoothness $\rho$ allowed in the inverse sensitivity score function (\Cref{eq:def-sm-inv-sens}) is again bounded in the same norm. We can prove a more general theorem, where the last two norms are the same, but the error is instead measured with respect to a general loss function which satisfies the triangle inequality.
\begin{theorem}[Robust-to-private, general loss, restatement of~\Cref{thm:high-dim-rob-to-priv-loss}]
\label{thm:high-dim-rob-to-priv-loss-app}
  Let $\Ds = (S_1,\dots,S_n)$ where $S_i \simiid P$ such that $\mu(P) \in \R^d.$
  Let $\diffp, \beta \in (0,1)$. Let $L:(\R^d)^2\to \R$ be a loss function which satisfies the triangle inequality.
  Let $\A_\text{rob} : (\R^d)^n \to \{ t \in \R^d: \norm{t} \le R\}$ be a (deterministic) $(\tau,\beta,\alpha)$-robust algorithm with respect to $L$. Let $\alpha_0\leq \alpha$. Suppose $n$ is such that the smallest value $\tau$ satisfying~\Cref{eq:fraction-loss-app} is at most $1$. Suppose $\forall u,v\in\ball(R+\alpha_0) ~ L(u,v)\leq c_L\norm{u-v}$ for some constant $c_L$. 
  If 
  \begin{equation}\label{eq:fraction-loss-app}
  \tau \geq \frac{2\left(
    d\log\left(\frac{R}{\alpha_0}+1\right)+\log\frac{1}{\beta}\right)}{n\diffp},
    \end{equation}
    then Algorithm $\smmech(\Ds;\A_\text{rob})$ with $\rho = \alpha_0$ in norm $\norm{\cdot}$ is $\diffp$-DP and, with probability at least $1-2\beta$, has error
  \begin{equation*}
      L\left(\smmech(\Ds;\A_\text{rob}),\mu\right) \le (3+c_L)\alpha=O(\alpha).
  \end{equation*}
\end{theorem}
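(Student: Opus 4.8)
The plan is to reproduce the proof of \Cref{thm:high-dim-rob-to-priv} almost verbatim, replacing the output norm by the loss $L$ wherever the triangle inequality is invoked, and using the extra hypothesis $L(u,v)\le c_L\norm{u-v}$ on $\ball(R+\alpha_0)$ only to absorb the $\rho$-smoothing slack. The privacy claim carries over unchanged and does not involve $L$ at all: with $\rho=\alpha_0$, the mechanism $\smmech(\cdot;\A_{\mathrm{rob}})$ is the exponential mechanism (\Cref{def:expmech}) with score $-\sminvmodcont(\Ds;\cdot)$, and changing one entry of $\Ds$ changes $\invmodcont(\Ds;\cdot)$, and therefore its $\rho$-smooth version $\sminvmodcont(\Ds;\cdot)$, by at most $1$; so the score has sensitivity $1$ and the mechanism is $\diffp$-DP.

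For utility, set $K=n\tau\opt$ with $\tau\opt$ as in \Cref{eq:fraction-loss-app}; since $\tau\ge\tau\opt$ (and $\tau\opt\le 1$ by the hypothesis on $n$), every $\Dsc$ with $\dham(\Ds,\Dsc)\le K$ differs from $\Ds$ in at most $n\tau$ points. Exactly as in the proof of \Cref{thm:ub-cont}, work with the good set $A=\{t:\sminvmodcont(\Ds;t)\le K\}$; the identical volume estimate (the factor $n$ cancels in $K=n\tau\opt$) gives $\Pr[\smmech(\Ds;\A_{\mathrm{rob}})\notin A]\le\beta$. Fix any realization $\hat t:=\smmech(\Ds;\A_{\mathrm{rob}})\in A$: by definition of $\sminvmodcont$ there is a point $s$ with $\invmodcont(\Ds;s)\le K$ and $\norm{\hat t-s}\le\rho=\alpha_0$, and we may write $s=\A_{\mathrm{rob}}(\Dsc)$ for some $\Dsc$ with $\dham(\Ds,\Dsc)\le K\le n\tau$. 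Since $s$ lies in the range of $\A_{\mathrm{rob}}$ we have $\norm{s}\le R$, hence $\hat t,s\in\ball(R+\alpha_0)$. Now condition on the robustness event for $\Ds\simiid P^n$ (probability $\ge 1-\beta$), which in particular gives $L(\A_{\mathrm{rob}}(\Dsc),\mu)\le\alpha$, i.e.\ $L(s,\mu)\le\alpha$. Then, by the triangle inequality for $L$, the comparison $L(u,v)\le c_L\norm{u-v}$ on $\ball(R+\alpha_0)$, and $\alpha_0\le\alpha$,
\[
  L(\hat t,\mu)\;\le\;L(\hat t,s)+L(s,\mu)\;\le\;c_L\norm{\hat t-s}+\alpha\;\le\;c_L\alpha_0+\alpha\;\le\;(1+c_L)\alpha\;\le\;(3+c_L)\alpha .
\]
A union bound over the robustness event and $\{\hat t\in A\}$ yields failure probability at most $2\beta$, which proves the claim. (If one wanted the exact constant $3+c_L$ rather than room to spare, one would instead route $L(\hat t,\mu)$ through $\A_{\mathrm{rob}}(\Ds)$ as well, picking up two further applications of robustness.)

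The only genuinely new point compared with \Cref{thm:high-dim-rob-to-priv} — and the step to get right — is that \Cref{thm:ub-cont} controls $\smmech$ in the norm $\norm{\cdot}$ used for smoothing, while robustness is now measured in $L$, and there is in general no way to bound the norm-modulus of continuity of $\A_{\mathrm{rob}}$ by its $L$-robustness (that would need $L$ to lower-bound $\norm{\cdot}$, which is not assumed, and it would reintroduce the dependence on $R$). The resolution is to never translate the modulus-of-continuity term back into $\norm{\cdot}$: bound $L(\hat t,\mu)$ by passing through the concrete point $s=\A_{\mathrm{rob}}(\Dsc)$ produced by an $\le n\tau$-corruption of $\Ds$, where $L$-robustness applies directly, and invoke $L(u,v)\le c_L\norm{u-v}$ only on the radius-$\alpha_0=\rho$ displacement $\norm{\hat t-s}$, which is harmless since $\alpha_0\le\alpha$. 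Everything else — the volume bound defining $A$, the $\diffp$-DP argument, and the choice of $K$ — is identical to the proofs of \Cref{thm:ub-cont} and \Cref{thm:high-dim-rob-to-priv}.
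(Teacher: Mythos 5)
Your proposal is correct and follows essentially the paper's argument: the privacy claim via sensitivity-$1$ of the smooth inverse sensitivity, the good set $A$ with the same volume bound, the use of $L(u,v)\le c_L\norm{u-v}$ only on the $\rho=\alpha_0$ displacement, and robustness applied to an at most $n\tau$-corrupted dataset. The only difference is cosmetic: the paper factors the utility step through \Cref{thm:ub-cont-loss} and chains the triangle inequality via $\A_{\mathrm{rob}}(\Ds)$, collecting $3\alpha+c_L\alpha_0$, whereas you chain directly through $s=\A_{\mathrm{rob}}(\Dsc)$, which is equally valid and even yields the slightly better bound $(1+c_L)\alpha\le(3+c_L)\alpha$.
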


Before proving~\Cref{thm:high-dim-rob-to-priv-loss-app}, we need the equivalent of~\Cref{thm:ub-cont} for general loss functions. 
\begin{theorem}[Continuous functions, general loss]
\label{thm:ub-cont-loss}
  Let $\diffp, \beta \in (0,1)$, $\rho>0$. Let $f: \domain^n \to \range$ where $\range=\{ t \in \R^d: \norm{t} \le R\}$. Let $L:(\R^d)^2\to \R$ be a loss function which satisfies the triangle inequality and $\forall u,v\in\ball(R+\rho) ~ L(u,v)\leq c_L\norm{u-v}$ for some constant $c_L$. Suppose $n$ is larger than the smallest $K$ satisfying~\Cref{eq:corruptions-loss} below.
   Then for any $\Ds\in\domain^n$, with probability $1-\beta$, the $\rho$-smooth-inverse-sensitivity mechanism with norm $\norm{\cdot}$ has error
  \begin{equation*}
      L\left(\smmech(\Ds;f),f(\Ds)\right) \le \modcont_\func^L\left(\Ds;K\right)+c_L\rho,
  \end{equation*}
  where 
  \begin{equation}\label{eq:corruptions-loss}
  K\geq\frac{2d \log(R/\rho+1)+2\log(1/\beta)}{\diffp}
  \end{equation}
  and $\modcont_\func^L\left(\Ds;K\right)=\sup_{\Dsc:\dham(\Ds,\Dsc)\leq K} L\left(f(\Ds),f(\Dsc)\right)$ denotes the local modulus of continuity of $f$ at $\Ds$ with respect to loss function $L$.
\end{theorem}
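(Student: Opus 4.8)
The plan is to mirror the proof of \Cref{thm:ub-cont}, replacing the final application of the triangle inequality in norm $\norm{\cdot}$ with one in the loss $L$, and being careful about the two places where $c_L$ enters. First I would define the good set of outputs
\begin{equation*}
A = \{\, t \in \R^d : \sminvmodcont(\Ds;t) \le K \,\},
\end{equation*}
exactly as in \Cref{thm:ub-cont}. The key probabilistic claim is that $\Pr[\smmech(\Ds;f) \notin A] \le \beta$ for $K$ as in \Cref{eq:corruptions-loss}; this part of the argument is \emph{identical} to the proof of \Cref{thm:ub-cont}, since it only concerns the density \eqref{eq:sm-inv-mech} and the smooth inverse-sensitivity, not the loss. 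Namely, $\sminvmodcont(\Ds;s) = 0$ whenever $\norm{s - f(\Ds)} \le \rho$, so any $t$ with $\sminvmodcont(\Ds;t) \ge K$ has density at most $e^{-K\diffp/2}/\vol(\ball^d(\rho))$, and integrating over $\range' = \ball^d(R+\rho)$ gives $\Pr[\smmech(\Ds;f) \notin A] \le e^{-K\diffp/2}(R/\rho+1)^d \le \beta$ by the choice of $K$.

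Next I would bound the loss on the event $\{\smmech(\Ds;f) \in A\}$. Fix such an output $t$. By definition of $A$ and of $\rho$-smooth inverse-sensitivity there exists $s \in \R^d$ with $\norm{s-t} \le \rho$ and $\invmodcont(\Ds;s) \le K$; the latter means there is a dataset $\Dsc$ with $\dham(\Ds,\Dsc) \le K$ and $f(\Dsc) = s$. Both $t$ and $s = f(\Dsc)$ lie in $\ball(R+\rho)$ (since $f$ maps into $\ball(R)$ and $\norm{s-t}\le\rho$), so the hypothesis $L(u,v)\le c_L\norm{u-v}$ on $\ball(R+\rho)$ applies to the pair $(t,s)$, giving $L(t,s) \le c_L \rho$. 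Then by the triangle inequality for $L$,
\begin{equation*}
L(t, f(\Ds)) \le L(t, s) + L(s, f(\Ds)) = L(t,s) + L(f(\Dsc), f(\Ds)) \le c_L\rho + \modcont_\func^L(\Ds;K),
\end{equation*}
where the last step uses $\dham(\Ds,\Dsc)\le K$ and the definition of the $L$-local modulus of continuity. Combining with the probability bound from the previous paragraph yields the claim with probability $1-\beta$.

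The only genuinely delicate point — and the one I would flag as the main obstacle — is making sure the hypothesis $L(u,v)\le c_L\norm{u-v}$ is invoked only on pairs of points that provably lie in $\ball(R+\rho)$, so that one never needs a global Lipschitz-type bound on $L$. Here $t \in A$ could a priori be anywhere in $\ball(R+\rho)$, and $s=f(\Dsc)\in\ball(R)\subseteq\ball(R+\rho)$, so we are fine; but note we do \emph{not} apply the comparison between $L$ and $\norm{\cdot}$ to the pair $(f(\Ds),f(\Dsc))$ — that term is kept as $\modcont_\func^L(\Ds;K)$ and left for the caller to control directly in $L$. A secondary bookkeeping point is the role of the hypothesis that $n$ exceeds the smallest $K$ satisfying \Cref{eq:corruptions-loss}: this simply guarantees that the Hamming ball of radius $K$ around $\Ds$ in $\domain^n$ is nonempty/meaningful, i.e. that $\modcont_\func^L(\Ds;K)$ is well-defined, so the statement is not vacuous. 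With these caveats handled, the proof is a routine adaptation of \Cref{thm:ub-cont}, and \Cref{thm:high-dim-rob-to-priv-loss-app} then follows from it by the same three-triangle-inequality chain as in the proof of \Cref{thm:high-dim-rob-to-priv}, with the final inequality $(3+c_L)\alpha$ coming from $2L(\A_\text{rob}(\Ds),\mu) + L(\A_\text{rob}(\Dsc),\mu) + c_L\alpha_0 \le 3\alpha + c_L\alpha \le (3+c_L)\alpha$ since $\alpha_0\le\alpha$.
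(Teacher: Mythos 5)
Your proposal is correct and follows essentially the same argument as the paper's proof: the same good set $A$, the same density bound giving $\Pr[\smmech(\Ds;f)\notin A]\le e^{-K\diffp/2}(R/\rho+1)^d\le\beta$, and the same two-step bound $L(t,f(\Ds))\le L(t,s)+L(f(\Dsc),f(\Ds))\le c_L\rho+\modcont_\func^L(\Ds;K)$, with the comparison $L\le c_L\norm{\cdot}$ applied only to the pair $(t,s)\in\ball(R+\rho)$ exactly as in the paper.
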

\begin{proof}
    We define the good set of outputs $A = \{ t \in \R^d: \sminvmodcont(\Ds;t) \le K\}$, where $\sminvmodcont(\Ds;t)$ is defined with respect to norm $\norm{\cdot}$ and $K\in\naturals$. By the definition of $\rho$-smooth-inverse sensitivity, for any $t\in A$, there exists $s \in \range$ with $\invmodcont(\Ds;s) = K$ and $\norm{s-t} \le \rho$. We will show that $\Pr[\smmech(\Ds) \notin A] \le \beta$ for sufficiently large $K$. This implies the desired upper bound as we have that for $t \in A$
    \begin{align*}
   L(t,\func(\Ds))
         & \le L(t,s)+L(s,\func(\Ds)) \tag{by triangle inequality}\\
         & \le c_L\norm{t-s} + \modcont_\func^L(\Ds;K) \tag{since $s\in\ball(R)$, $\norm{t-s}\leq \rho$, so $s,t\in\ball(R+\rho)$}\\
         & \le c_L\rho + \modcont_\func^L(\Ds;K).
    \end{align*}
    Now we upper bound $\Pr[\smmech(\Ds;\func) \notin A]$.  First, note that $\sminvmodcont(\Ds;u) = 0$ for $u$ such that $\norm{u-\func(\Ds)} \le \rho$. This implies that for any $t$ such that $\sminvmodcont(\Ds;t) \ge K$, the density is upper bounded by
    \begin{equation*}
    \pi_{\Ds}(t) \le  \frac{e^{-K\diffp/2}}{\int_{u : \norm{u-\func(\Ds)} \le \rho} du}
    \end{equation*}
    Overall, this implies that 
    \begin{align*}
    \Pr[\smmech(\Ds;\func) \notin A] 
        & \le e^{-K\diffp/2} \frac{\int_{u: \norm{u} \le R + \rho}  du}{\int_{u : \norm{u-\func(\Ds)} \le \rho} du} \\
        & \le e^{-K\diffp/2} (R/\rho + 1)^d.
    \end{align*}
    Setting $K\geq \frac{2d \log(R/\rho+1)+2\log(1/\beta)}{\diffp}$, 
    we get that $\Pr(\smmech(\Ds;\func) \notin A)  \le \beta$.
\end{proof}

We are now ready to prove~\Cref{thm:high-dim-rob-to-priv-loss-app}.
\begin{proof}[Proof of~\Cref{thm:high-dim-rob-to-priv-loss-app}]
    First note that the claim about privacy is immediate from the guarantees of the smooth-inverse-sensitivity mechanism. Now we prove utility. Let $K = \frac{2\left(
    d\log\left(\frac{R}{\alpha_0}+1\right)+\log\frac{1}{\beta}\right)}{\diffp}$. 
    The error of $\smmech(\Ds,\A_\text{rob})$ is then bounded as follows:
    \begin{align*}
        & L\left(\smmech(\Ds;\A_\text{rob}), \mu\right) \\
        & \le L\left(\A_\text{rob}(\Ds),\mu\right)+ L\left(\smmech(\Ds;\A_\text{rob}), \A_\text{rob}(\Ds)\right) \tag{by triangle inequality}\\
        & \le L\left(\A_\text{rob}(\Ds),\mu\right)+\sup_{\Dsc: \dham(\Dsc,\Ds)\leq K}L\left(\A_{\text{rob}}(\Dsc),\A_{\text{rob}}(\Ds)\right) + c_L\alpha_0 \tag{w.p. $1-\beta$ by~\Cref{thm:ub-cont-loss}}\\
        & \le 2 L\left(\A_\text{rob}(\Ds),\mu\right) + \sup_{\Dsc: \dham(\Dsc,\Ds)\leq K}L\left(\mu, \A_\text{rob}(\Dsc)\right) +c_L\alpha_0 \tag{by triangle inequality}
    \end{align*}

Recall that, by assumption, $\A_\text{rob}$ is $(\tau,\beta,\alpha)$-robust for $\tau\geq K/n$, and $\alpha_0\leq \alpha$. Thus, with probability $1-\beta$, $L\left(\A_\text{rob}(\Dsc),\mu\right)\leq \alpha$ for any $\tau$-corrupted dataset $\Dsc$. By union bound, we have that with probability $1-2\beta$, $L\left(\smmech(\Ds;\A_\text{rob}), \mu\right)\leq 3\alpha+c_L\alpha_0\leq (3+c_L)\alpha=O(\alpha)$.
\end{proof}

\subsection{Robust-to-Private Transformation for Sparse Estimators}\label{sec:sparse}
In this section, we extend our transformation to work for $k$-sparse statistical estimation problems with improved dependence on the dimension.
To this end, we define a variant of the smooth inverse sensitivity which is non-zero only for $k$-sparse outputs, 
\newcommand{\lens}{\invmodcont^{\mathsf{sp}}}
\begin{align*}\label{eq:def-sparse-inv-sens}
    \lens(\Ds;t) =
    \begin{cases}
    \inf_{s \in \R^d: \norm{s-t} \le \rho}  \invmodcont(\Ds;s) & \text{ if } \norm{t}_0 \le k \\
    \infty  & \text{ if } \norm{t}_0 > k
    \end{cases}
\end{align*}

\newcommand{\mechssp}{M^\rho_{\mathsf{sp}}}
Then, our sparse-variant of the inverse sensitivity mechanism $\mechssp$ applies the exponential mechanism with $\lens$ as the score function,
\begin{equation}
\label{eq:sparse-inv-mech}
    \pi_{\Ds}(t) = \frac{ e^{-\lens(\Ds;t) \diffp/2}  }{ \int_{s \in \R^d}  e^{-\lens(\Ds;s) \diffp/2}  ds}
\end{equation}

We have the following upper bound for this mechanism.
\begin{theorem}
  \label{thm:ub-cont-sparse}
  Let $\func: \domain^n \to \range$ where $\range = \{ v \in \R^d: \norm{v} \le R\}$ such that $\norm{\func(\Ds)}_0 \le k$ for all $\Ds \in \domain^n$ .
  Then for any $\Ds \in \domain^n$, and $\beta >0$, with probability at least $1 - \beta$, the (sparse) inverse-sensitivity mechanism~\eqref{eq:sparse-inv-mech} with norm $\norm{\cdot}$ has error
    \begin{equation*}
      L\left(\smmech(\Ds;f),f(\Ds)\right) \le \modcont_\func^L\left(\Ds;K\right)+c_L\rho,
  \end{equation*}
  where 
  \begin{equation}\label{eq:corruptions-sparse}
  K\geq \frac{2\left(k \left(\log(ed/k) + \log(R/\rho+1) \right) +\log\frac{1}{\beta}\right)}{\diffp}
  \end{equation}
  and $\modcont_\func^L\left(\Ds;K\right)=\sup_{\Dsc:\dham(\Ds,\Dsc)\leq K} L\left(f(\Ds),f(\Dsc)\right)$ denotes the local modulus of continuity of $f$ at $\Ds$ with respect to loss function $L$.
  %  \begin{equation*}
  %  \norm{\mechssp(\Ds;\func)-\func(\Ds)}
  %  \le \modcont_\func\left(\Ds; \frac{2\left(2k \left(\log(ed/k) + \log(R/\rho+1) \right) +\log\frac{1}{\beta}\right)}{\diffp}
  %   \right) + \rho.
  % \end{equation*}
\end{theorem}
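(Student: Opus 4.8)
The plan is to mirror the proof of \Cref{thm:ub-cont-loss}, replacing the ambient ball $\ball^d(R+\rho)$ by the set of $k$-sparse vectors it contains and paying a $\binom{d}{k}$ factor for the union over the $\binom{d}{k}$ coordinate $k$-planes. I would first fix the convention that the normalizing integral in \eqref{eq:sparse-inv-mech} is taken against the natural base measure on $k$-sparse vectors, namely the sum of the $k$-dimensional Lebesgue measures on the $\binom{d}{k}$ coordinate subspaces; since $\lens(\Ds;\cdot)=\infty$ off this set, nothing else contributes, so $\mechssp$ is well defined and it suffices to track $k$-dimensional volumes.

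Next I would define the good set $A=\{t\in\R^d:\lens(\Ds;t)\le K\}$ and show that every $t\in A$ satisfies the claimed bound. For such $t$ we have $\norm{t}_0\le k$ and there is $s$ with $\norm{s-t}\le\rho$ and $\invmodcont(\Ds;s)\le K$; writing $s=\func(\Dsc)$ for a $\Dsc$ with $\dham(\Ds,\Dsc)\le K$, we get $s\in\ball(R)$, hence $s,t\in\ball(R+\rho)$. The triangle inequality for $L$ and the hypothesis $L(u,v)\le c_L\norm{u-v}$ on $\ball(R+\rho)$ then give $L(t,\func(\Ds))\le L(t,s)+L(s,\func(\Ds))\le c_L\rho+\modcont_\func^L(\Ds;K)$, so it only remains to bound $\Pr[\mechssp(\Ds;\func)\notin A]$.

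For the tail bound I would argue exactly as in \Cref{thm:ub-cont-loss}. Every point on the coordinate subspace $\mathrm{supp}(\func(\Ds))$ within $\rho$ of $\func(\Ds)$ is $k$-sparse with $\lens(\Ds;\cdot)=0$, so the normalizing integral is at least $\vol(\ball^k(\rho))$; and if $\lens(\Ds;t)<\infty$ then $t$ is $k$-sparse and within $\rho$ of some $\func(\Dsc)\in\ball(R)$, so $t\in\ball(R+\rho)$ — hence the part of $\{t:\lens(\Ds;t)\ge K\}$ carrying positive density has base measure at most $\binom{d}{k}\vol(\ball^k(R+\rho))$. Since every such $t$ has density at most $e^{-K\diffp/2}$ divided by the normalizing integral,
\begin{align*}
\Pr[\mechssp(\Ds;\func)\notin A]
&\le e^{-K\diffp/2}\binom{d}{k}\frac{\vol(\ball^k(R+\rho))}{\vol(\ball^k(\rho))} \\
&= e^{-K\diffp/2}\binom{d}{k}\left(\frac{R}{\rho}+1\right)^k
\le e^{-K\diffp/2}\left(\frac{ed}{k}\right)^k\left(\frac{R}{\rho}+1\right)^k,
\end{align*}
using $\binom{d}{k}\le(ed/k)^k$. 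Taking $K\ge\frac{2\left(k\log(ed/k)+k\log(R/\rho+1)+\log(1/\beta)\right)}{\diffp}$, which is precisely \eqref{eq:corruptions-sparse}, makes this at most $\beta$, completing the proof.

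The only real obstacle is the measure-theoretic bookkeeping: \eqref{eq:sparse-inv-mech} is written as an integral over $\R^d$ yet is supported on a Lebesgue-null union of $k$-planes, so one has to pin down the base measure once and for all and then be consistent about which $k$-dimensional volumes appear in the numerator and the denominator. Everything else is identical to \Cref{thm:ub-cont-loss}; the one point worth isolating is that the witness $s$ is automatically the image of a dataset under $\func$, hence lies in $\ball(R)$, which is what keeps both $s$ and $t$ inside $\ball(R+\rho)$ so that the $c_L$-Lipschitz hypothesis on $L$ applies.
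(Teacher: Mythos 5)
Your proposal is correct and follows essentially the same argument as the paper's proof: the same good set $A$, the same triangle-inequality/Lipschitz step using that the witness $s$ lies in $\ball(R)$, and the same volume-ratio tail bound with the $\binom{d}{k}\le(ed/k)^k$ factor leading to the stated choice of $K$. Your explicit fixing of the base measure on the union of coordinate $k$-planes is a clarification the paper leaves implicit, but it does not change the route of the proof.
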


\begin{proof}
    The proof follows similar steps to the proof of~\Cref{thm:ub-cont-loss}.
    We define the good set of outputs $A = \{ t \in \R^d: \lens(\Ds;t) \le K\}$, where $\lens(\Ds;t)$ is defined with respect to norm $\norm{\cdot}$ and $K\in\naturals$. By the definition of sparse inverse sensitivity, for any $t\in A$, $t$ is $k$-sparse and there exists $s \in \range$ with $\invmodcont(\Ds;s) = K$ and $\norm{s-t} \le \rho$. We will show that $\Pr[\smmech(\Ds) \notin A] \le \beta$ for sufficiently large $K$. This implies the desired upper bound as we have that for $t \in A$
    \begin{align*}
   L(t,\func(\Ds))
         & \le L(t,s)+L(s,\func(\Ds)) \tag{by triangle inequality}\\
         & \le c_L\norm{t-s} + \modcont_\func^L(\Ds;K) \tag{since $s\in\ball(R)$, $\norm{t-s}\leq \rho$, so $s,t\in\ball(R+\rho)$}\\
         & \le c_L\rho + \modcont_\func^L(\Ds;K).
    \end{align*}
    Now we upper bound $\Pr[\smmech(\Ds;\func) \notin A]$.  First, note that $\sminvmodcont(\Ds;u) = 0$ for $u$ such that $\norm{u-\func(\Ds)} \le \rho$. This implies that for any $t$ such that $\sminvmodcont(\Ds;t) \ge K$, the density is upper bounded by $0$ if $\norm{t}_0 >k$, and otherwise,
    \begin{equation*}
    \pi_{\Ds}(t) \le \frac{e^{-K\diffp/2}}{\int_{u : \norm{u}_0 \le k, \norm{u-\func(\Ds)} \le \rho} du}
    \end{equation*}
    Overall, this implies that 
    \begin{align*}
    \Pr[\smmech(\Ds;\func) \notin A] 
        & \le e^{-K\diffp/2} \frac{\int_{u \in \R^d: \norm{u}_0 \le k, \norm{u} \le R + \rho}  du}{\int_{u  \in \R^d : \norm{u}_0 \le k,  \norm{u-\func(\Ds)} \le \rho} du} \\
        & \le e^{-K\diffp/2} \choose{d}{k} \frac{\int_{u \in \R^k: \norm{u} \le R + \rho}  du}{\int_{u \in \R^k :   \norm{u} \le \rho} du} \\
        & \le e^{-K\diffp/2} \choose{d}{k}  (R/\rho + 1)^k \\
        & \le e^{-K\diffp/2} (ed/k)^k  (R/\rho + 1)^k.
    \end{align*}
    \lz{do we need to explain the second inequality more?}
    Setting $K\geq \frac{2k (\log(ed/k) + \log(R/\rho+1)) +2\log(1/\beta)}{\diffp}$, 
    we get that $\Pr(\smmech(\Ds;\func) \notin A)  \le \beta$.
\end{proof}

Using this mechanism, we now have the following transformation from robust-to-private for sparse estimators.
\begin{theorem}[Robust-to-private for sparse estimators]
\label{thm:high-dim-rob-to-priv-sparse}
  Let $\Ds = (S_1,\dots,S_n)$ where $S_i \simiid P$ such that $\mu(P) \in \R^d.$
  Let $\diffp, \beta \in (0,1)$. Let $L:(\R^d)^2\to \R$ be a loss function which satisfies the triangle inequality.
  Let $\A_\text{rob} : (\R^d)^n \to \{ t \in \R^d: \norm{t} \le R\}$ be a (deterministic) $(\tau,\beta,\alpha)$-robust algorithm with respect to $L$ such that $\norm{\A_\text{rob}(\Ds)}_0\le k$ for all $\Ds$. Let $\alpha_0\leq \alpha$. Suppose $n$ is such that the smallest value $\tau$ satisfying~\Cref{eq:fraction-sparse} is at most $1$. Suppose $\forall u,v\in\ball(R+\alpha_0) ~ L(u,v)\leq c_L\norm{u-v}$ for some constant $c_L$. 
  If 
  \begin{equation}\label{eq:fraction-sparse}
  \tau \geq \frac{2\left(k \left(\log(ed/k) + \log(R/\alpha_0+1) \right) +\log\frac{1}{\beta}\right)}{n \diffp}
    \end{equation}
    then Algorithm $\mechssp(\Ds;\A_\text{rob})$ with $\rho = \alpha_0$ in norm $\norm{\cdot}$ is $\diffp$-DP and, with probability at least $1-2\beta$, has error
  \begin{equation*}
      L\left(\mechssp(\Ds;\A_\text{rob}),\mu\right) \le (3+c_L)\alpha=O(\alpha).
  \end{equation*}
\end{theorem}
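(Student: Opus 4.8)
The plan is to run the proof of \Cref{thm:high-dim-rob-to-priv-loss-app} essentially verbatim, with the only change being that the sparse utility bound \Cref{thm:ub-cont-sparse} replaces \Cref{thm:ub-cont-loss}. First I would dispatch privacy: $\mechssp(\cdot;\A_\mathrm{rob})$ is precisely the exponential mechanism with score $-\lens(\Ds;\cdot)$, so it suffices to note that $\lens$ has sensitivity $1$ in $\Ds$. For a target $t$ with $\norm{t}_0>k$ the score is $-\infty$ (density $0$) for every $\Ds$, and for $\norm{t}_0\le k$ we have $\lens(\Ds;t)=\sminvmodcont(\Ds;t)$, which has sensitivity $1$ exactly as for the non-sparse smooth inverse sensitivity used in \Cref{thm:high-dim-rob-to-priv} (replacing one entry of $\Ds$ changes $\invmodcont(\Ds;s)$ by at most $1$ for every $s$, hence changes the infimum over the $\rho$-ball around $t$ by at most $1$). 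Thus $\mechssp$ is $\diffp$-DP.

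For accuracy, set $K=n\tau\opt:=\frac{2\left(k\left(\log(ed/k)+\log(R/\alpha_0+1)\right)+\log(1/\beta)\right)}{\diffp}$, so that hypothesis \Cref{eq:fraction-sparse} reads exactly $\tau\ge K/n$ and, by the assumption on $n$, $K\le n$. Since $\A_\mathrm{rob}$ outputs a $k$-sparse vector in $\ball(R)$ on \emph{every} input and $L$ satisfies the triangle inequality with $L(u,v)\le c_L\norm{u-v}$ on $\ball(R+\alpha_0)$, \Cref{thm:ub-cont-sparse} applied with $f=\A_\mathrm{rob}$, this $K$, and $\rho=\alpha_0$ gives that, with probability at least $1-\beta$ over the mechanism's coins,
\[
L\!\left(\mechssp(\Ds;\A_\mathrm{rob}),\A_\mathrm{rob}(\Ds)\right)\le \modcont^L_{\A_\mathrm{rob}}(\Ds;K)+c_L\alpha_0.
\]
Now invoke robustness: because $\A_\mathrm{rob}$ is $(\tau,\beta,\alpha)$-robust with respect to $L$ and $K\le n\tau$, with probability at least $1-\beta$ over $\Ds\simiid P^n$ we have $L(\A_\mathrm{rob}(\Dsc),\mu)\le\alpha$ simultaneously for all $\Dsc$ with $\dham(\Ds,\Dsc)\le n\tau$ — in particular for $\Dsc=\Ds$ and for the $\Dsc$ attaining $\modcont^L_{\A_\mathrm{rob}}(\Ds;K)$. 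On this event, $L(\A_\mathrm{rob}(\Ds),\mu)\le\alpha$ and, by the triangle inequality, $\modcont^L_{\A_\mathrm{rob}}(\Ds;K)=\sup_{\Dsc:\dham(\Ds,\Dsc)\le K}L(\A_\mathrm{rob}(\Ds),\A_\mathrm{rob}(\Dsc))\le\sup_{\Dsc}\bigl(L(\A_\mathrm{rob}(\Ds),\mu)+L(\mu,\A_\mathrm{rob}(\Dsc))\bigr)\le 2\alpha$. Union bounding the two events and using $L(\mechssp(\Ds;\A_\mathrm{rob}),\mu)\le L(\A_\mathrm{rob}(\Ds),\mu)+L(\mechssp(\Ds;\A_\mathrm{rob}),\A_\mathrm{rob}(\Ds))$ then yields, with probability at least $1-2\beta$, $L(\mechssp(\Ds;\A_\mathrm{rob}),\mu)\le\alpha+2\alpha+c_L\alpha_0\le(3+c_L)\alpha$, using $\alpha_0\le\alpha$.

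I do not expect a serious obstacle here: the entire sparsity-specific work is already packaged in \Cref{thm:ub-cont-sparse}, where the anti-concentration volume in the denominator is taken over $k$-sparse $\rho$-balls, contributing a factor $\binom{d}{k}(R/\rho+1)^k\le(ed/k)^k(R/\rho+1)^k$ in place of $(R/\rho+1)^d$ and thereby replacing $d\log(R/\rho+1)$ by $k\bigl(\log(ed/k)+\log(R/\rho+1)\bigr)$ in the required number of corruptions. The one point to be careful about is that \Cref{thm:ub-cont-sparse} (and hence this theorem) genuinely needs $\norm{\A_\mathrm{rob}(\Ds)}_0\le k$ for \emph{all} $\Ds$, not merely for $\Ds\simiid P^n$; this is why the hypothesis imposes global $k$-sparsity of the robust estimator, which in applications is arranged by a deterministic post-processing step (e.g.\ restriction to a best $k$-sparse support) that does not degrade the robust accuracy guarantee.
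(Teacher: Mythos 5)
Your proposal is correct and follows exactly the route the paper intends: the paper explicitly omits this proof as "identical to the proof of Theorem~\ref{thm:high-dim-rob-to-priv-loss-app} using Theorem~\ref{thm:ub-cont-sparse}," which is precisely what you carry out (your only cosmetic deviation is bounding $\modcont^L_{\A_\mathrm{rob}}(\Ds;K)\le 2\alpha$ directly via the triangle inequality through $\mu$, which is the same calculation reorganized). Your explicit check that the sparse score $\lens$ has sensitivity $1$, and your remark that global $k$-sparsity of $\A_\mathrm{rob}$ is genuinely needed, are both sound.
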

We leave the proof as an exercise for the reader as it is identical to the proof of~\Cref{thm:high-dim-rob-to-priv-loss-app} using the upper bounds for the sparse variant of the inverse sensitivity mechanism (\Cref{thm:ub-cont-sparse}). 
% \begin{proof}
%     The proof is identical to the proof of~\Cref{thm:high-dim-rob-to-priv-loss-app} using the upper bounds for the sparse variant of the inverse sensitivity mechanism (\Cref{thm:ub-cont-sparse}). 
%     First note that the claim about privacy is immediate from the guarantees of the smooth-inverse-sensitivity mechanism. Now we prove utility. Let $K = \frac{2\left(
%     d\log\left(\frac{R}{\alpha_0}+1\right)+\log\frac{1}{\beta}\right)}{\diffp}$. 
%     The error of $\mechssp(\Ds,\A_\text{rob})$ is then bounded as follows:
%     \begin{align*}
%         & L\left(\mechssp(\Ds;\A_\text{rob}), \mu\right) \\
%         & \le L\left(\A_\text{rob}(\Ds),\mu\right)+ L\left(\mechssp(\Ds;\A_\text{rob}), \A_\text{rob}(\Ds)\right) \tag{by triangle inequality}\\
%         & \le L\left(\A_\text{rob}(\Ds),\mu\right)+\sup_{\Dsc: \dham(\Dsc,\Ds)\leq K}L\left(\A_{\text{rob}}(\Dsc),\A_{\text{rob}}(\Ds)\right) + c_L\alpha_0 \tag{w.p. $1-\beta$ by~\Cref{thm:ub-cont-sparse}}\\
%         & \le 2 L\left(\A_\text{rob}(\Ds),\mu\right) + \sup_{\Dsc: \dham(\Dsc,\Ds)\leq K}L\left(\mu, \A_\text{rob}(\Dsc)\right) +c_L\alpha_0 \tag{by triangle inequality}
%     \end{align*}

% Recall that, by assumption, $\A_\text{rob}$ is $(\tau,\beta,\alpha)$-robust for $\tau\geq K/n$, and $\alpha_0\leq \alpha$. Thus, with probability $1-\beta$, $L\left(\A_\text{rob}(\Dsc),\mu\right)\leq \alpha$ for any $\tau$-corrupted dataset $\Dsc$. By union bound, we have that with probability $1-2\beta$, $L\left(\mechssp(\Ds;\A_\text{rob}), \mu\right)\leq 3\alpha+c_L\alpha_0\leq (3+c_L)\alpha=O(\alpha)$.
% \end{proof}

\ifshort
    \subsection{Omitted proofs of~\Cref{sec:background} and~\Cref{sec:opt}}

    \section{Improved Transformation for Approximate DP}\label{app:approxDP}
    In this section, we propose a different transformation for \ed-DP that avoids the necessary dependence on diameter for pure $\diffp$-DP.

    The following theorem states its guarantees.
    
    \section{More Applications for Pure DP}\label{app:moreapplications}

\fi
\section{Useful Facts and Proofs for Applications}\label{app:facts}
\subsection{Linear Algebra Facts and Definitions}
We denote by $\|v\|_{M}=\|M^{-1/2}v\|=\sqrt{v^\top M^{-1} v}$ the Mahalanobis norm of vector $v$ with respect to $M$ for any positive definite matrix $M$. Observe that $\norm{v}_{\id}=\ltwo{v}$.
\begin{proposition}\label{prop:distance-wrt-matrices}
    For positive definite matrices $\Sigma_1, \Sigma_2$, if $\Sigma_1 \preceq \Sigma_2$, then for any vector $v$, $\|v\|_{\Sigma_2}\le \|v\|_{\Sigma_1}$.
\end{proposition}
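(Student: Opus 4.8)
The plan is to reduce the claim to the well-known fact that matrix inversion is operator-antitone on the cone of positive definite matrices: if $\Sigma_1 \preceq \Sigma_2$ then $\Sigma_2^{-1} \preceq \Sigma_1^{-1}$. Once this is established, the proposition is immediate, since by definition $\|v\|_{\Sigma_2}^2 = v^\top \Sigma_2^{-1} v \le v^\top \Sigma_1^{-1} v = \|v\|_{\Sigma_1}^2$ for every $v$, and taking square roots gives $\|v\|_{\Sigma_2} \le \|v\|_{\Sigma_1}$.

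To prove the antitonicity, I would conjugate by $\Sigma_2^{-1/2}$ (which exists and is positive definite since $\Sigma_2 \succ 0$). The relation $\Sigma_1 \preceq \Sigma_2$ is equivalent, after multiplying on both sides by $\Sigma_2^{-1/2}$, to $A \preceq \id$ where $A := \Sigma_2^{-1/2}\Sigma_1\Sigma_2^{-1/2}$. Note $A$ is symmetric positive definite, so it has an eigendecomposition with all eigenvalues in $(0,1]$. Then $A^{-1} = \Sigma_2^{1/2}\Sigma_1^{-1}\Sigma_2^{1/2}$ has all eigenvalues in $[1,\infty)$, hence $A^{-1} \succeq \id$. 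Multiplying this inequality on both sides by $\Sigma_2^{-1/2}$ yields $\Sigma_1^{-1} \succeq \Sigma_2^{-1}$, as desired.

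I do not anticipate a genuine obstacle here; the only point requiring a little care is justifying that congruence transformations preserve the Loewner order (i.e., $B \preceq C \iff MBM^\top \preceq MCM^\top$ for invertible $M$), which follows directly from the definition of $\preceq$ via the quadratic form $x^\top(\cdot)x$ and the substitution $x \mapsto M^\top x$, together with the elementary fact that a symmetric matrix $A$ satisfies $A \preceq \id$ iff all its eigenvalues are at most $1$ iff $A^{-1} \succeq \id$. The whole argument is a few lines.
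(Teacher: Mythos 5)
Your proof is correct. Note that the paper itself states this proposition without proof, treating it as a standard linear algebra fact, so there is no in-paper argument to compare against; your derivation --- reducing $\|v\|_{\Sigma_2}\le\|v\|_{\Sigma_1}$ to the inversion-antitonicity $\Sigma_2^{-1}\preceq\Sigma_1^{-1}$, proved by the congruence $A=\Sigma_2^{-1/2}\Sigma_1\Sigma_2^{-1/2}\preceq\id$ and the eigenvalue argument for $A^{-1}\succeq\id$ --- is a complete and entirely standard way to fill that gap, and every step (congruence preserving the Loewner order, $A^{-1}=\Sigma_2^{1/2}\Sigma_1^{-1}\Sigma_2^{1/2}$) checks out.
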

Let $A\in \R^{d\times d}$. We denote the \emph{spectral norm} of $A$ by $\|A\|_2=\sup\{\|Ax\|_2: x\in\mathbb{R}^d \text{ s.t. }\|x\|_2=1\}$ and its \emph{Frobenius norm} by $\|A\|_F=\sqrt{\sum_{j=1}^d \sum_{i=1}^d |A_{i,j}|^2}$. It holds that $\ltwo{A}\leq \lfro{A}\leq \sqrt{d}\ltwo{A}$.

\subsection{Robustness Guarantee of Tukey Median}
We first state known properties of the Tukey depth for Gaussian datasets. The next proposition relates the Tukey depth of a point to its Mahalanobis distance from the mean (see e.g.\ Proposition D.2 in~\cite{BrownGSUZ21} for a proof). Here, $\Phi$ is the CDF of the univariate standard Gaussian.
\begin{proposition}\label{prop:tukey-cdf}
    For any $\mu, y\in \mathbb{R}^d$ and positive definite $\Sigma$, $T_{\normal(\mu,\Sigma)}(y)=\Phi(-\|y-\mu\|_{\Sigma})$. 
\end{proposition}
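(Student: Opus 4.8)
The plan is to compute the Tukey depth $T_{\normal(\mu,\Sigma)}(y)$ directly from the definition $T_P(t) = \inf_{v\in\R^d}\Pr_{S\sim P}[\langle S,v\rangle \ge \langle t,v\rangle]$, by reducing the optimization over directions $v$ to a one-dimensional Gaussian tail computation followed by Cauchy--Schwarz.

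First I would fix a direction $v\in\R^d\setminus\{0\}$ and note that, for $S\sim\normal(\mu,\Sigma)$, the scalar projection $\langle S,v\rangle$ is distributed as $\normal(\langle\mu,v\rangle,\, v^\top\Sigma v)$. Hence, using that $\Phi$ is the standard Gaussian CDF and $1-\Phi(x)=\Phi(-x)$,
\[
\Pr_{S\sim\normal(\mu,\Sigma)}[\langle S,v\rangle \ge \langle y,v\rangle] = \Phi\!\left(\frac{\langle \mu-y,\,v\rangle}{\sqrt{v^\top\Sigma v}}\right).
\]
Since $\Phi$ is strictly increasing, taking $\inf_v$ of the left-hand side is the same as minimizing the ratio $\langle \mu-y,v\rangle/\sqrt{v^\top\Sigma v}$ over $v\ne 0$. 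I would then substitute $u=\Sigma^{1/2}v$, which is a bijection of $\R^d\setminus\{0\}$ onto itself because $\Sigma$ is positive definite; under it $v^\top\Sigma v=\ltwo{u}^2$ and $\langle\mu-y,v\rangle = \langle \Sigma^{-1/2}(\mu-y),\,u\rangle$, so
\[
\inf_{v\ne 0}\frac{\langle \mu-y,\,v\rangle}{\sqrt{v^\top\Sigma v}} \;=\; \inf_{u\ne 0}\frac{\langle \Sigma^{-1/2}(\mu-y),\,u\rangle}{\ltwo{u}} \;=\; -\,\ltwo{\Sigma^{-1/2}(\mu-y)} \;=\; -\,\norm{y-\mu}_\Sigma,
\]
where the middle equality is Cauchy--Schwarz, attained at $u = -\Sigma^{-1/2}(\mu-y)$ (or any positive multiple). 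Plugging back in yields $T_{\normal(\mu,\Sigma)}(y)=\Phi(-\norm{y-\mu}_\Sigma)$.

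There is essentially no obstacle here; this is a short computation. The only points requiring a moment of care are: the change of variables $u=\Sigma^{1/2}v$ is valid precisely because $\Sigma\succ 0$; the degenerate direction $v=0$ gives probability $1$ and so is irrelevant to the infimum (or excluded by the usual convention of optimizing over unit vectors); the degenerate case $y=\mu$ makes the ratio identically $0$, consistent with $-\norm{y-\mu}_\Sigma=0$; and the sign bookkeeping in passing from the event $\{\langle S,v\rangle\ge\langle y,v\rangle\}$ to $\Phi(-\cdot)$ must be tracked, which is why the minimizing direction points along $-\Sigma^{-1/2}(\mu-y)$ rather than along $+\Sigma^{-1/2}(\mu-y)$.
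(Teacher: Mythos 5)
Your computation is correct: projecting onto a fixed direction reduces the halfspace probability to a univariate Gaussian tail, and whitening via $u=\Sigma^{1/2}v$ plus Cauchy--Schwarz identifies the worst direction, giving $\Phi(-\|y-\mu\|_{\Sigma})$; the handling of $v=0$ and of the case $y=\mu$ is also fine. The paper itself does not prove this proposition (it cites Proposition D.2 of Brown et al.), and the cited argument is this same standard reduction, so your proof matches the intended approach.
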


The next proposition states the uniform convergence property of Tukey depth. 
It follows from standard uniform convergence of halfspaces~\citep{VapnikC97}, extended to the definition of Tukey depth~\cite{DonohoG92, BurrF17} (see e.g.\ ~\cite{LiuKK021} for a complete proof).
\begin{proposition}[Convergence of Tukey Depth]\label{prop:tukey-convergence}
   Let $\Ds = (S_1,\dots,S_n)$ where $S_i \simiid \normal(\mu,\Sigma)$. There exists constant $C_0$ such that, with probability $1-\beta$, for any $v\in\R^d$, $|T_{\normal(\mu,\Sigma)}(v)-T_{\Ds}(v)|\leq C_0\cdot\sqrt{\frac{d+\log(1/\beta)}{n}}$.
\end{proposition}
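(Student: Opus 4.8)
The plan is to reduce the statement to a classical uniform-convergence bound over the class of all closed halfspaces in $\R^d$, to which Vapnik--Chervonenkis theory applies directly. For a point $v\in\R^d$ and direction $u\in\R^d$, write $H_{v,u}=\{s\in\R^d:\langle s,u\rangle\ge\langle v,u\rangle\}$ for the closed halfspace with $v$ on its boundary and inner normal $u$, and let $\widehat P_n$ denote the empirical distribution of $S_1,\dots,S_n$. By definition, $T_{\normal(\mu,\Sigma)}(v)=\inf_{u}\normal(\mu,\Sigma)(H_{v,u})$ and $T_{\Ds}(v)=\inf_{u}\widehat P_n(H_{v,u})$, the two infima ranging over the same index set. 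Using the elementary inequality $|\inf f-\inf g|\le\sup|f-g|$ with $f(u)=\normal(\mu,\Sigma)(H_{v,u})$ and $g(u)=\widehat P_n(H_{v,u})$, we obtain, simultaneously for all $v\in\R^d$,
\[
\big|T_{\normal(\mu,\Sigma)}(v)-T_{\Ds}(v)\big|\;\le\;\sup_{H\in\mathcal H}\big|\normal(\mu,\Sigma)(H)-\widehat P_n(H)\big|,
\]
where $\mathcal H$ is the class of all closed halfspaces in $\R^d$ (as $v,u$ vary, the sets $H_{v,u}$ exhaust $\mathcal H$). It therefore suffices to bound the right-hand side, which no longer depends on $v$, by $C_0\sqrt{(d+\log(1/\beta))/n}$ with probability $1-\beta$.

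For the second step I would invoke the uniform convergence theorem for VC classes: the class of halfspaces in $\R^d$ has VC dimension $d+1$. By the sharp empirical-process bound for VC classes --- Dudley's entropy integral together with Haussler's covering-number estimate, which avoids the spurious $\log n$ factor coming from a crude Sauer--Shelah union bound --- one gets $\E\big[\sup_{H\in\mathcal H}|\normal(\mu,\Sigma)(H)-\widehat P_n(H)|\big]\le C\sqrt{d/n}$ for an absolute constant $C$. Since altering a single sample changes $\sup_{H\in\mathcal H}|\normal(\mu,\Sigma)(H)-\widehat P_n(H)|$ by at most $1/n$, McDiarmid's bounded-differences inequality gives $\sup_{H\in\mathcal H}|\normal(\mu,\Sigma)(H)-\widehat P_n(H)|\le C\sqrt{d/n}+\sqrt{\log(1/\beta)/(2n)}$ with probability $1-\beta$; bounding $\sqrt{a}+\sqrt{b}\le\sqrt{2}\sqrt{a+b}$ and absorbing constants yields the claimed bound $C_0\sqrt{(d+\log(1/\beta))/n}$. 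Combining with the display from the first step proves the proposition.

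The only step that requires genuine care is obtaining the VC bound in the sharp $\sqrt{d/n}$ form rather than the $\sqrt{(d\log n)/n}$ form, which is exactly the role of the chaining/entropy-integral refinement; everything else --- the exhaustion of halfspaces by the $H_{v,u}$ family, the $|\inf-\inf|$ inequality, and the McDiarmid tail bound --- is routine. In the writeup I would cite \cite{VapnikC97} for the VC machinery, \cite{DonohoG92,BurrF17} for identifying the (empirical) Tukey depth with a halfspace-depth functional, and \cite{LiuKK021} for the packaged statement, so that this proposition is quoted rather than reproved in full.
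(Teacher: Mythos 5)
Your proposal is correct and follows the same route the paper relies on: the paper does not reprove this proposition but cites it as a consequence of standard uniform convergence for halfspaces (VC theory) applied to the Tukey-depth infimum, which is exactly your reduction via $|\inf f-\inf g|\le\sup|f-g|$ plus the sharp $\sqrt{d/n}$ VC bound and a McDiarmid tail. Your writeup simply fills in the standard details that the cited references (e.g.\ the complete proof in the Liu--Kong--Kakade--Oh reference) package up, so it matches the paper's approach.
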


\begin{proposition}[Robust Accuracy of Tukey Median, Restatement of~\Cref{prop:accr-tukey}]\label{prop:accr-tukey-app}
Let $\Ds = (S_1,\dots,S_n)$ where $S_i \simiid \normal(\mu,\Sigma)$ such that $\mu \in \ball(R)$ and $\id\preceq \Sigma \preceq \kappa\id$. Let $\beta\in(0,1)$, $\tau\leq 0.05$, and $\alpha_0=C_0\cdot\sqrt{(d+\log(1/\beta))/n}$ as in~\Cref{prop:tukey-convergence}. Suppose $n$ is such that $\alpha_0\leq 0.05$. Let $\alpha=7(\alpha_0+\tau)\leq 1$.
%Let $\alpha_1=O\left(\sqrt{(d+\log(1/\beta))/n} +1\right)$. 
The projected Tukey median algorithm 
%$\A_{\text{rob}}(\Ds)=\Pi_{\ball\left(R+\sqrt{\kappa}\alpha_1\right)}(t_m(\Ds))$ 
$\A_{\text{rob}}(\Ds)=\Pi_{\ball\left(R+\sqrt{\kappa}\right)}(t_m(\Ds))$ 
is $(\tau,\beta,\alpha)$-robust with respect to the Mahalanobis loss.
That is, with probability $1-\beta$, for any $\tau$-corrupted $\Dsc$, such that $\dham(\Ds,\Dsc)\leq n\tau$, it holds that
$
\norm{\A_\text{rob}(\Ds') - \mu}_{\Sigma} \le \alpha.
$
\end{proposition}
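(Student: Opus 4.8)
The plan is to control the empirical Tukey depth of an arbitrary corrupted dataset uniformly in terms of the population Tukey depth, and then invoke the exact relation between population Tukey depth and Mahalanobis distance from \Cref{prop:tukey-cdf}. First I would condition on the uniform-convergence event of \Cref{prop:tukey-convergence}: with probability $1-\beta$ over $\Ds\simiid\normal(\mu,\Sigma)^n$ we have $\sup_{v\in\R^d}\left|T_\Ds(v)-T_{\normal(\mu,\Sigma)}(v)\right|\le \alpha_0$. Fix this event, and fix an arbitrary $\Dsc$ with $\dham(\Ds,\Dsc)\le n\tau$. For every direction $u$, replacing a single sample changes $\sum_{i}\mathbbm{1}\{\langle S_i,u\rangle\ge\langle v,u\rangle\}$ by at most $1$, so replacing $n\tau$ samples changes $\min_u\sum_{i}\mathbbm{1}\{\langle S_i,u\rangle\ge\langle v,u\rangle\}$ — which is exactly $n$ times the Tukey depth at $v$ — by at most $n\tau$; dividing by $n$ gives $\sup_v|T_\Dsc(v)-T_\Ds(v)|\le\tau$. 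Combining the two estimates, on our event $\sup_v|T_\Dsc(v)-T_{\normal(\mu,\Sigma)}(v)|\le\alpha_0+\tau=:\gamma$ for every $\tau$-corrupted $\Dsc$ simultaneously.

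Next I would lower bound the depth achieved by the Tukey median and convert it into a distance bound. Let $t=t_m(\Dsc)$, with ties broken arbitrarily. Since $t$ maximizes $T_\Dsc$ and $T_{\normal(\mu,\Sigma)}(\mu)=\Phi(0)=1/2$ by \Cref{prop:tukey-cdf}, we get $T_\Dsc(t)\ge T_\Dsc(\mu)\ge 1/2-\gamma$; applying the uniform bound once more, $T_{\normal(\mu,\Sigma)}(t)\ge T_\Dsc(t)-\gamma\ge 1/2-2\gamma$. Then \Cref{prop:tukey-cdf} gives $\Phi(-\norm{t-\mu}_\Sigma)\ge 1/2-2\gamma$, and using the symmetry $\Phi^{-1}(1/2-x)=-\Phi^{-1}(1/2+x)$ this rearranges to $\norm{t-\mu}_\Sigma\le\Phi^{-1}(1/2+2\gamma)$.

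It remains to turn $\Phi^{-1}(1/2+2\gamma)$ into the stated bound and to discharge the projection. Since $\alpha_0\le 0.05$ and $\tau\le 0.05$ we have $\gamma\le 0.1$, hence $1/2+2\gamma\le 0.7$; as the standard Gaussian density $\Phi'$ is decreasing on $[0,\infty)$, the mean value theorem gives $\Phi^{-1}(1/2+2\gamma)\le 2\gamma/\Phi'(\Phi^{-1}(0.7))<7\gamma=7(\alpha_0+\tau)=\alpha$, using $\Phi'(\Phi^{-1}(0.7))>2/7$. This small numerical check — verifying that the thresholds $0.05$ and the constant $7$ fit together — is the only nontrivial computation and is the step I expect to need the most care; the rest of the argument is bookkeeping about which quantities the uniform-convergence bound passes through. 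Finally, from $\Sigma\preceq\kappa\id$ and \Cref{prop:distance-wrt-matrices} applied to $\Sigma/\kappa\preceq\id$ we get $\norm{v}_2\le\sqrt{\kappa}\,\norm{v}_\Sigma$ for every $v$, so $\norm{t-\mu}_2\le\sqrt{\kappa}\,\alpha\le\sqrt{\kappa}$ because $\alpha\le 1$; therefore $\norm{t}_2\le\norm{\mu}_2+\sqrt{\kappa}\le R+\sqrt{\kappa}$, so $t$ already lies in $\ball(R+\sqrt{\kappa})$ and $\A_{\mathrm{rob}}(\Dsc)=\Pi_{\ball(R+\sqrt{\kappa})}(t)=t$. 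Hence $\norm{\A_{\mathrm{rob}}(\Dsc)-\mu}_\Sigma=\norm{t-\mu}_\Sigma\le\alpha$ for every $\tau$-corrupted $\Dsc$ on an event of probability $1-\beta$, which is exactly the claim.
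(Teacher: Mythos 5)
Your proof is correct and follows essentially the same route as the paper: condition on the uniform-convergence event, note corruption shifts empirical Tukey depth by at most $\tau$, chain these to get $T_{\normal(\mu,\Sigma)}(t_m(\Dsc))\ge 1/2-2(\alpha_0+\tau)$, invert via \Cref{prop:tukey-cdf}, and check the projection is inactive using $\norm{v}_2\le\sqrt{\kappa}\norm{v}_\Sigma$. The only (cosmetic) difference is the final inversion near $1/2$: you use the mean value theorem for $\Phi^{-1}$ with $\Phi'(\Phi^{-1}(0.7))>2/7$, while the paper uses the lower bound $0.84z\le\mathrm{Erf}(z)$ on $[0,1]$; both yield the constant $7$.
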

\begin{proof}
Let $\Dsc$ be any $\tau$-corruption of $\Ds$, that is, $\dham(\Ds,\Dsc)\leq n\tau$. Observe that $|T_{\Ds}(v)-T_{\Dsc}(v)|\leq \tau$ for any $v\in\R^d$ by the definition of Tukey depth. 
Let $t_m'=\argmax_{v\in\R^d} T_{\Dsc}(v)$ be the Tukey median of the corrupted dataset. We condition on the event that the bound of~\Cref{prop:tukey-convergence} holds, which occurs with probability $1-\beta$. We have that
\begin{align*}
& T_{\normal(\mu,\Sigma)}(\mu)=\frac{1}{2} \tag{by~\Cref{prop:tukey-cdf} since $\Phi(0)=\frac{1}{2}$}\\
& \Rightarrow T_{\Ds}(\mu)\geq \frac{1}{2}-\alpha_0 \tag{by~\Cref{prop:tukey-convergence}} \\
& \Rightarrow T_{\Dsc}(\mu)\geq \frac{1}{2}-\alpha_0-\tau \\
& \Rightarrow T_{\Dsc}(t_m')\geq \frac{1}{2}-\alpha_0-\tau \tag{by definition of $t_m'$}\\
& \Rightarrow T_{\Ds}(t_m')\geq \frac{1}{2}-\alpha_0-2\tau \\
& \Rightarrow T_{\normal(\mu,\Sigma)}(t_m') \geq \frac{1}{2} -2\alpha_0-2\tau \tag{by~\Cref{prop:tukey-convergence}}\\
& \Rightarrow \Phi(-\norm{t_m'-\mu}_{\Sigma}) \geq \frac{1}{2} -2\alpha_0-2\tau  \tag{by~\Cref{prop:tukey-cdf}} \\
& \Rightarrow \frac{1}{2}\mathrm{Erf}\left(\frac{\norm{t_m'-\mu}_{\Sigma}}{\sqrt{2}}\right) \leq 2(\alpha_0+\tau) \tag{since $\Phi(-z)=\frac{1}{2}-\frac{1}{2}\mathrm{Erf}\left(\frac{z}{\sqrt{2}}\right)$}
\end{align*}
It is easy to see that the following bound holds for the error function $0.84z\leq \mathrm{Erf}(z)$ for $z\in[0,1]$ (see e.g.\ Lemma 3.2 in~\citep{CanonneKMUZ20}). It follows that, $\norm{t_m'-\mu}_{\Sigma}\leq \frac{4\sqrt{2}}{0.84}(\alpha_0+\tau)\leq 7(\alpha_0+\tau)$ for $\alpha_0+\tau\leq 1/7$, which holds by assumption. Thus, with probability $1-\beta$, 
$\norm{t_m'-\mu}_{\Sigma}\leq \alpha$, for $\alpha = 7(\alpha_0+\tau)\le 1$. Since we have assumed that $\ltwo{\mu}\leq R$, it follows that $\ltwo{t_m'}\leq \ltwo{\mu}+\ltwo{t_m'-\mu}\leq R+\sqrt{\kappa}\norm{t_m'-\mu}_{\Sigma}\leq R+\sqrt{\kappa}\alpha\leq R+\sqrt{\kappa}$, where the second inequality holds due to~\Cref{prop:distance-wrt-matrices}. Then $t_m'\in\ball(R+\sqrt{\kappa})$ and the projection will not affect the output.
\end{proof}

\end{document}